\documentclass[a4paper, twoside]{article}

\usepackage{amssymb}
\usepackage{amsmath}
\usepackage{array}
\usepackage{bm}
\usepackage{algorithm,algpseudocode}
\usepackage{xspace}
\usepackage{xparse}   
\usepackage{xstring}
\usepackage{xifthen} 
\usepackage{boxedminipage}
\usepackage{graphicx}
\usepackage{adjustbox} 
\usepackage{float}
\usepackage{rotating}
\usepackage[colorlinks=true, linkcolor=red]{hyperref} 
\usepackage{url}
\usepackage{nicefrac}
\usepackage{multirow}
\usepackage{comment}
\usepackage{soul}
\usepackage{verbatim}
\usepackage{csquotes} 
\usepackage[T1]{fontenc}    
\usepackage[utf8]{inputenc} 
\usepackage{pifont}
\usepackage{marvosym}
\usepackage[usenames,dvipsnames]{xcolor}
\usepackage{pgfplots}\pgfplotsset{compat=1.18}

\usepackage{fullpage}
\usepackage{listings}
\usepackage[most]{tcolorbox}

\usepackage[top=5mm,includehead,headheight=45pt,left=1.5cm,right=1.5cm,headsep=0.3cm]{geometry} 
\usepackage{fancyhdr}

\usepackage{amsthm}
\usepackage{placeins}
\usepackage{booktabs}

\newif\ifSLIDES
\SLIDESfalse

\newif\ifIEEE
\IEEEfalse

\newif\ifTWOCOLUMNS
\TWOCOLUMNSfalse

\newif\ifANONYMOUS
\ANONYMOUSfalse

\ifTWOCOLUMNS
\newcommand{\coeffresizeboxCOLW}{2.0}  
\newcommand{\coeffonefigCOLW}{1}        
\else
\newcommand{\coeffresizeboxCOLW}{1}     
\newcommand{\coeffonefigCOLW}{0.7}      
\fi

\newcommand{\figBegins}{
\ifTWOCOLUMNS
\begin{figure*}[t]
\else
\begin{figure}[htb]
\fi
}

\newcommand{\figEnds}{
\ifTWOCOLUMNS
\end{figure*} 
\else
\end{figure}
\fi
}

\newcommand{\stmath}[1]{\ifmmode\text{\sout{\ensuremath{#1}}}\else\sout{#1}\fi} 

\definecolor{darkgreen}{rgb}{0,0.7,0}
\definecolor{blue-violet}{rgb}{0.54, 0.17, 0.89}

\newcommand{\tored}{\color{red}\xspace}

\newcommand{\toblue}{\color{blue}\xspace}

\newcommand{\toblack}{\color{black}\xspace}

\newcommand{\rred}[1]{{\textcolor{red}{#1}}}         
\newcommand{\bblue}[1]{\textcolor{blue}{#1}}

\newcommand{\ppurple}[1]{\textcolor{purple}{#1}}
\newcommand{\ggreen}[1]{\textcolor{darkgreen}{#1}}

\definecolor{revisionred}{RGB}{180,0,0}

\NewDocumentCommand\sbold{g}{%
\IfNoValueTF{#1}
{$\bblue{\triangleright}$}
{$\bblue{\triangleright}$ {\bf #1}}
}

\NewDocumentCommand\sred{g}{%
\IfNoValueTF{#1}
{$\bblue{\triangleright}$}
{$\bblue{\triangleright}$ \textcolor{red}{#1}}
}
\NewDocumentCommand\sblue{g}{%
\IfNoValueTF{#1}
{$\bblue{\triangleright}$}
{$\bblue{\triangleright}$ \textcolor{blue}{#1}}
}
\NewDocumentCommand\sdgreen{g}{%
\IfNoValueTF{#1}
{$\ggreen{\triangleright}$}
{$\ggreen{\triangleright}$ \textcolor{darkgreen}{#1}}
}
\NewDocumentCommand\sgreen{g}{%
\IfNoValueTF{#1}
{$\ggreen{\triangleright}$}
{$\ggreen{\triangleright}$ \textcolor{darkgreen}{#1}}
}
\NewDocumentCommand\spurple{g}{%
\IfNoValueTF{#1}
{$\ppurple{\triangleright}$}
{$\ppurple{\triangleright}$ \textcolor{purple}{#1}}
}

\NewDocumentCommand\sbulem {g}{\noindent $\bullet$\IfNoValueTF{#1}{}{{\em #1}}}
\NewDocumentCommand\sbullet{g}{\noindent $\bullet$\IfNoValueTF{#1}{}{{#1}}}

\NewDocumentCommand\quoteen{o m}{\IfNoValueTF{#1}{}{(#1)}``{\em #2}''}
\NewDocumentCommand\citeen{o m}{\IfNoValueTF{#1}{}{(#1)}``{\em #2}''}

\newif\ifDRAFT
\DRAFTtrue     

\NewDocumentCommand\bienfr{G{0} m m}{
\ifthenelse{\equal{#1}{0}}{\noindent{\bf En.} #2\newline}{}
\ifthenelse{\equal{#1}{2}}{\noindent{\bf En.} #2\newline}{}

\ifthenelse{\equal{#1}{1}}{\noindent{\bf Fr.} #3\newline}{}
\ifthenelse{\equal{#1}{2}}{\noindent{\bf Fr.} #3\newline}{}
}

\ifDRAFT 

\newcommand{\makeremark}[2]{
  \newcommand{#1}[1]
    {$\longrightarrow$\textcolor{red}{\sc #2: ##1}$\leftarrow$ \medskip}}

\newcommand{\makeremarkcolor}[3]{
  \newcommand{#1}[1]
    {$\longrightarrow$\textcolor{#3}{\sc #2: ##1}$\leftarrow$ \medskip}}

\newcommand{\added}[1]{{\textcolor{red}{#1}}}         
\newcommand{\changed}[1]{{\textcolor{red}{\sc #1}}}   
\newcommand{\tcomment}[1]{{\textcolor{blue}{\sc #1}}} 

\newcommand{\makeremarkMargin}[2]{\marginpar{\tiny{\color{red}#1}:\color{blue}#2}}

\else 

\newcommand{\makeremark}[2]{\newcommand{#1}[1]{}}

\newcommand{\makeremarkMargin}[2]{}

\newcommand{\added}[1]{{}}
\newcommand{\changed}[1]{{}}
\newcommand{\tcomment}[1]{{}}
\newcommand{\rem}[1]{{}}

\fi

\newcommand*\mnote[3][0pt]{%
  \if l#2\reversemarginpar\def\pointer{\triangleright}%
    \def\stackalignment{r}\fi%
  \if r#2\normalmarginpar\def\pointer{\triangleleft}%
    \def\stackalignment{l}\fi%
  \marginpar{%
    \topinset{%
      \scalebox{1.5}{\textcolor{blue}{$\pointer$}}}{%
      \belowbaseline[-1.5\baselineskip-#1]{%
        \stackengine%
          {-5pt}%
          {\fcolorbox{blue}{white}{\parbox{1.8cm}%
            {\vspace{3pt}\raggedright#3}}}%
          {~\colorbox{white}{\sffamily Note}}%
          {O}%
          {l}%
          {F}%
          {F}%
          {S}%
        }%
      }{%
      3ex+#1}{%
      -2ex}%
}
}

\makeremark{\vgsays}{Victor says}
\makeremark{\vmsays}{Vincent says}

\makeremark{\anoops}{Anoop says}
\makeremark{\arinjays}{Arinjay says}
\makeremark{\sikaos}{Sikao says}
\makeremark{\michals}{Michal says}
\makeremark{\nelsons}{Nelson says}

\makeremark{\fc}{Frederic says}
\makeremark{\FC}{Frederic says}
\makeremarkcolor{\fcb}{Frederic says}{blue}

\makeremark{\ac}{Antoine says}
\makeremark{\ah}{Antoine says}
\makeremark{\sq}{Simon says}
\makeremark{\gc}{Guillaume says}
\makeremark{\as}{Augusto says}
\makeremark{\gs}{Guilherme says}
\makeremark{\nmd}{Noel says}
\makeremark{\td}{Tom says}
\makeremark{\ar}{Andrea says}
\makeremark{\chr}{Charles says}
\makeremark{\da}{Deepesh says}
\makeremark{\al}{Alix says}
\makeremark{\smar}{Simon says}
\makeremark{\dc}{David says}
\makeremark{\db}{Denys says}
\makeremark{\ms}{Meline says}
\makeremark{\tod}{Timothee  says}
\makeremark{\lgold}{Louis says}
\makeremark{\ayos}{Ayoub says}

\makeremark{\dm}{Dorian says}
\makeremark{\pb}{PierreB says}

\makeremark{\ds}{Darsh says}
\makeremark{\rt}{Romain  says}

\makeremark{\edo}{Edoardo says}
\makeremark{\vla}{Vladimir says}
\makeremark{\mma}{Maximillien says}
\makeremark{\theos}{Theo says}

\ifx\question\undefined

\NewDocumentCommand\question{m+g}{%
\IfNoValueTF{#2}
{\bigskip\noindent \ding{43} \quoteen{\em #1}}
{\bigskip\noindent \ding{43} \quoteen{\textcolor{#1}{\em #2}}}
}

\definecolor{block-gray}{gray}{0.85}
\definecolor{amaranth}{rgb}{0.9, 0.17, 0.31}
\definecolor{candypink}{rgb}{0.89, 0.44, 0.48}

\newtcolorbox{qcolgray}{colback=block-gray,boxrule=0pt,boxsep=0pt,breakable}
\newtcolorbox{qcolcyan}{colback=cyan,boxrule=0pt,boxsep=0pt,breakable}
\newtcolorbox{qcolred}{colback=candypink,boxrule=0pt,boxsep=0pt,breakable}

\newcommand{\questionsymbol}{Q}
\newcounter{questiongcounter}
\fi

\newcommand{\beginsupplement}{
\setcounter{table}{0}
\renewcommand{\thetable}{S\arabic{table}}%
\setcounter{figure}{0}
\renewcommand{\thefigure}{S\arabic{figure}}%

\setcounter{section}{0}
\renewcommand{\thesection}{S\arabic{section}}               

}

\newcommand{\beginSI}{\beginsupplement}

\newcommand{\footurl}[1]{\footnote{\url{#1}}}

\ifSLIDES
\else

\newtheorem{lemma}{Lemma.}
\newtheorem{example}{\noindent Example}{}
{}
{}
\newtheorem{problem}{Problem.}

\newtheorem{remark}{Remark}

\fi
\newtheorem{proposition}{Proposition.}

\newcounter{explecounter}

\NewDocumentEnvironment{proof-not-ams}{G{gray}}{%
  \noindent \color{#1}\emph{Proof.}%
}{%
  \hfill$\Box$\par\medskip
}

\newenvironment{proof-app}[1]
 {\noindent \emph{Proof. [#1]}}{$\Box$\par\medskip}

{}
{}
{}

\newenvironment{enumeratep}
{ \begin{enumerate}
    \setlength{\itemsep}{0pt}
    \setlength{\parskip}{0pt}
    \setlength{\parsep}{0pt}     }
{ \end{enumerate}   }

\newcounter{pln}

\NewDocumentCommand\paragraphm{O{4pt} m}{%
{\vspace{#1}
\noindent{\bf #2}
}}
\NewDocumentCommand\paramini{O{4pt} m}{%
{\vspace{#1}
\noindent{\bf #2}
}}
\NewDocumentCommand\paragraphmini{O{4pt} m}{%
{\vspace{#1}
\noindent{\bf #2}
}}

\newcommand{\calA}{{\mathcal A}}

\newcommand{\calC}{{\mathcal C}}
\newcommand{\calD}{{\mathcal D}}
\newcommand{\calE}{{\mathcal E}}
\newcommand{\calF}{{\mathcal F}}

\newcommand{\calI}{{\mathcal I}}

\newcommand{\calK}{{\mathcal K}}
\newcommand{\calL}{{\mathcal L}}
\newcommand{\calM}{{\mathcal M}}
\newcommand{\calN}{{\mathcal N}}

\newcommand{\calR}{{\mathcal R}}
\newcommand{\calS}{{\mathcal S}}
\newcommand{\calT}{{\mathcal T}}
\newcommand{\calU}{{\mathcal U}}
\newcommand{\calV}{{\mathcal V}}
\newcommand{\calW}{{\mathcal W}}

\NewDocumentCommand\mmoA{g}{\IfNoValueTF{#1}{\text{\sffamily \bfseries A}}{\boldsymbol{a}_{#1}}}
\NewDocumentCommand\mmoB{g}{\IfNoValueTF{#1}{\text{\sffamily \bfseries B}}{\boldsymbol{b}_{#1}}}
\NewDocumentCommand\mmoC{g}{\IfNoValueTF{#1}{\text{\sffamily \bfseries C}}{\boldsymbol{c}_{#1}}}
\NewDocumentCommand\mmoD{g}{\IfNoValueTF{#1}{\text{\sffamily \bfseries D}}{\boldsymbol{d}_{#1}}}
\NewDocumentCommand\mmoE{g}{\IfNoValueTF{#1}{\text{\sffamily \bfseries E}}{\boldsymbol{e}_{#1}}}
\NewDocumentCommand\mmoF{g}{\IfNoValueTF{#1}{\text{\sffamily \bfseries F}}{\boldsymbol{f}_{#1}}}
\NewDocumentCommand\mmoG{g}{\IfNoValueTF{#1}{\text{\sffamily \bfseries G}}{\boldsymbol{g}_{#1}}}
\NewDocumentCommand\mmoH{g}{\IfNoValueTF{#1}{\text{\sffamily \bfseries H}}{\boldsymbol{h}_{#1}}}
\NewDocumentCommand\mmoI{g}{\IfNoValueTF{#1}{\text{\sffamily \bfseries I}}{\boldsymbol{i}_{#1}}}
\NewDocumentCommand\mmoJ{g}{\IfNoValueTF{#1}{\text{\sffamily \bfseries J}}{\boldsymbol{j}_{#1}}}
\NewDocumentCommand\mmoK{g}{\IfNoValueTF{#1}{\text{\sffamily \bfseries K}}{\boldsymbol{k}_{#1}}}
\NewDocumentCommand\mmoL{g}{\IfNoValueTF{#1}{\text{\sffamily \bfseries L}}{\boldsymbol{l}_{#1}}}
\NewDocumentCommand\mmoM{g}{\IfNoValueTF{#1}{\text{\sffamily \bfseries M}}{\boldsymbol{m}_{#1}}}
\NewDocumentCommand\mmoN{g}{\IfNoValueTF{#1}{\text{\sffamily \bfseries N}}{\boldsymbol{n}_{#1}}}
\NewDocumentCommand\mmoO{g}{\IfNoValueTF{#1}{\text{\sffamily \bfseries O}}{\boldsymbol{o}_{#1}}}
\NewDocumentCommand\mmoP{g}{\IfNoValueTF{#1}{\text{\sffamily \bfseries P}}{\boldsymbol{p}_{#1}}}
\NewDocumentCommand\mmoQ{g}{\IfNoValueTF{#1}{\text{\sffamily \bfseries Q}}{\boldsymbol{q}_{#1}}}
\NewDocumentCommand\mmoR{g}{\IfNoValueTF{#1}{\text{\sffamily \bfseries R}}{\boldsymbol{r}_{#1}}}
\NewDocumentCommand\mmoS{g}{\IfNoValueTF{#1}{\text{\sffamily \bfseries S}}{\boldsymbol{s}_{#1}}}
\NewDocumentCommand\mmoT{g}{\IfNoValueTF{#1}{\text{\sffamily \bfseries T}}{\boldsymbol{t}_{#1}}}
\NewDocumentCommand\mmoU{g}{\IfNoValueTF{#1}{\text{\sffamily \bfseries U}}{\boldsymbol{u}_{#1}}}
\NewDocumentCommand\mmoV{g}{\IfNoValueTF{#1}{\text{\sffamily \bfseries V}}{\boldsymbol{v}_{#1}}}
\NewDocumentCommand\mmoW{g}{\IfNoValueTF{#1}{\text{\sffamily \bfseries W}}{\boldsymbol{w}_{#1}}}
\NewDocumentCommand\mmoX{g}{\IfNoValueTF{#1}{\text{\sffamily \bfseries X}}{\boldsymbol{x}_{#1}}}
\NewDocumentCommand\mmoY{g}{\IfNoValueTF{#1}{\text{\sffamily \bfseries Y}}{\boldsymbol{y}_{#1}}}
\NewDocumentCommand\mmoZ{g}{\IfNoValueTF{#1}{\text{\sffamily \bfseries Z}}{\boldsymbol{z}_{#1}}}

\NewDocumentCommand\momA{g}{\IfNoValueTF{#1}{\text{\sffamily \bfseries A}}{\boldsymbol{a}_{#1}}}
\NewDocumentCommand\momB{g}{\IfNoValueTF{#1}{\text{\sffamily \bfseries B}}{\boldsymbol{b}_{#1}}}
\NewDocumentCommand\momC{g}{\IfNoValueTF{#1}{\text{\sffamily \bfseries C}}{\boldsymbol{c}_{#1}}}
\NewDocumentCommand\momD{g}{\IfNoValueTF{#1}{\text{\sffamily \bfseries D}}{\boldsymbol{d}_{#1}}}
\NewDocumentCommand\momE{g}{\IfNoValueTF{#1}{\text{\sffamily \bfseries E}}{\boldsymbol{e}_{#1}}}
\NewDocumentCommand\momF{g}{\IfNoValueTF{#1}{\text{\sffamily \bfseries F}}{\boldsymbol{f}_{#1}}}
\NewDocumentCommand\momG{g}{\IfNoValueTF{#1}{\text{\sffamily \bfseries G}}{\boldsymbol{g}_{#1}}}
\NewDocumentCommand\momH{g}{\IfNoValueTF{#1}{\text{\sffamily \bfseries H}}{\boldsymbol{h}_{#1}}}
\NewDocumentCommand\momI{g}{\IfNoValueTF{#1}{\text{\sffamily \bfseries I}}{\boldsymbol{i}_{#1}}}
\NewDocumentCommand\momJ{g}{\IfNoValueTF{#1}{\text{\sffamily \bfseries J}}{\boldsymbol{j}_{#1}}}
\NewDocumentCommand\momK{g}{\IfNoValueTF{#1}{\text{\sffamily \bfseries K}}{\boldsymbol{k}_{#1}}}
\NewDocumentCommand\momL{g}{\IfNoValueTF{#1}{\text{\sffamily \bfseries L}}{\boldsymbol{l}_{#1}}}
\NewDocumentCommand\momM{g}{\IfNoValueTF{#1}{\text{\sffamily \bfseries M}}{\boldsymbol{m}_{#1}}}
\NewDocumentCommand\momN{g}{\IfNoValueTF{#1}{\text{\sffamily \bfseries N}}{\boldsymbol{n}_{#1}}}
\NewDocumentCommand\momO{g}{\IfNoValueTF{#1}{\text{\sffamily \bfseries O}}{\boldsymbol{o}_{#1}}}
\NewDocumentCommand\momP{g}{\IfNoValueTF{#1}{\text{\sffamily \bfseries P}}{\boldsymbol{p}_{#1}}}
\NewDocumentCommand\momQ{g}{\IfNoValueTF{#1}{\text{\sffamily \bfseries Q}}{\boldsymbol{q}_{#1}}}
\NewDocumentCommand\momR{g}{\IfNoValueTF{#1}{\text{\sffamily \bfseries R}}{\boldsymbol{r}_{#1}}}
\NewDocumentCommand\momS{g}{\IfNoValueTF{#1}{\text{\sffamily \bfseries S}}{\boldsymbol{s}_{#1}}}
\NewDocumentCommand\momT{g}{\IfNoValueTF{#1}{\text{\sffamily \bfseries T}}{\boldsymbol{t}_{#1}}}
\NewDocumentCommand\momU{g}{\IfNoValueTF{#1}{\text{\sffamily \bfseries U}}{\boldsymbol{u}_{#1}}}
\NewDocumentCommand\momV{g}{\IfNoValueTF{#1}{\text{\sffamily \bfseries V}}{\boldsymbol{v}_{#1}}}
\NewDocumentCommand\momW{g}{\IfNoValueTF{#1}{\text{\sffamily \bfseries W}}{\boldsymbol{w}_{#1}}}
\NewDocumentCommand\momX{g}{\IfNoValueTF{#1}{\text{\sffamily \bfseries X}}{\boldsymbol{x}_{#1}}}
\NewDocumentCommand\momY{g}{\IfNoValueTF{#1}{\text{\sffamily \bfseries Y}}{\boldsymbol{y}_{#1}}}
\NewDocumentCommand\momZ{g}{\IfNoValueTF{#1}{\text{\sffamily \bfseries Z}}{\boldsymbol{z}_{#1}}}

\DeclareMathOperator*{\ArgminOp}{argmin}

\NewDocumentCommand{\argmin}{o}{%
  \IfNoValueTF{#1}%
    {\ArgminOp}%
    {\ArgminOp_{#1}}%
}

\NewDocumentCommand\dtms{O{k} g}{d^2_{#1}\IfNoValueTF{#2}{}{(#2)}}
\NewDocumentCommand\dtm{O{k} g}{d_{#1}\IfNoValueTF{#2}{}{(#2)}}

\NewDocumentCommand\dtmmed{g}{d_{DTMm}\IfNoValueTF{#1}{}{(#1)}}
\NewDocumentCommand\dtmemd{g}{d_{EMD}\IfNoValueTF{#1}{}{(#1)}}

\NewDocumentCommand\dpdzero{g}{d_{PD0}\IfNoValueTF{#1}{}{(#1)}}
\NewDocumentCommand\dpdone{g}{d_{PD1}\IfNoValueTF{#1}{}{(#1)}}
\NewDocumentCommand\dpdtwo{g}{d_{PD2}\IfNoValueTF{#1}{}{(#1)}}

\NewDocumentCommand\demd{g+g}{%
\IfNoValueTF{#1}{d_{\text{EMD}}}
{d_{\text{EMD}}(#1,#2)}}

\NewDocumentCommand\wasser{O{} g G{}}{
\IfNoValueTF{#3}{\calW_{#1} }{\calW_{#1}^{#3}} 
\IfNoValueTF{#2}{}{(#2)}
}
\NewDocumentCommand\dwassk{O{} g G{}}{\wasser[#1]{#2}{#3}}

\newcommand{\eg}{{\em e.g.}\xspace}
\newcommand{\ie}{{\em i.e.}\xspace}

\NewDocumentCommand\atilde{g}{\IfNoValueTF{#1}{\tilde{a}}{\tilde{a_{#1}}}}
\NewDocumentCommand\ptilde{g}{\IfNoValueTF{#1}{\tilde{p}}{\tilde{p_{#1}}}}
\NewDocumentCommand\qtilde{g}{\IfNoValueTF{#1}{\tilde{q}}{\tilde{q_{#1}}}}

\NewDocumentCommand\degree{g}{%
\IfNoValueTF{#1}
{\ensuremath{^{\circ}}}
{\ensuremath{{#1}^{\circ}}}
}

\NewDocumentCommand\Rn{G{n}}{\mathbb{R}^{#1}}
\NewDocumentCommand\Rd{G{d}}{\mathbb{R}^{#1}}
\NewDocumentCommand\Zn{G{n}}{\mathbb{Z}^{#1}}
\NewDocumentCommand\Zd{G{d}}{\mathbb{Z}^{#1}}

\NewDocumentCommand\Euclidn{G{n}}{\mathbb{E}^{#1}}
\NewDocumentCommand\Euclidd{G{d}}{\mathbb{E}^{#1}}

\NewDocumentCommand\GLn{G{n}}{GL(#1)} 

\NewDocumentCommand\Ogro{G{3}}{O(#1)} 
\NewDocumentCommand\On{G{3}}{O(#1)} 
\NewDocumentCommand\SOn{G{3}}  {SO(#1)} 
\NewDocumentCommand\SOnlie{G{3}}{\mathfrak{so}(#1)} 

\NewDocumentCommand\En{G{3}}{E(#1)} 
\NewDocumentCommand\SEn{O{} G{3}}{SE(#2) \IfNoValueTF{#1}{}{^{#1}} }

\NewDocumentCommand\SEnp{G{3}}{E^{+}(#1)} 

\NewDocumentCommand\SEnlie{O{} G{3}}{\mathfrak{se}(#2) \IfNoValueTF{#1}{}{^{#1}} }

\NewDocumentCommand\TpM{O{p} m}{T_{#1}{#2}} 

\NewDocumentCommand\UFmakeset{g}{\text{UF.make\_set} \IfNoValueTF{#1}{}{(#1)}}
\NewDocumentCommand\UFunion{g}{\text{UF.union} \IfNoValueTF{#1}{}{(#1)}}
\NewDocumentCommand\UFfind{g}{\text{UF.find} \IfNoValueTF{#1}{}{(#1)}}
\NewDocumentCommand\UFnumcc{g}{\text{UF.num\_cc} \IfNoValueTF{#1}{}{(#1)}}
\NewDocumentCommand\UFnumnodes{g}{\text{UF.num\_nodes} \IfNoValueTF{#1}{}{(#1)}}

\NewDocumentCommand\expL{O{p} m}{
\exp
\IfEqCase{#1}{
{p}{(}      
{P}{\big(}
{b}{[}      
{B}{\big[}
{c}{\{}     
{C}{\big\{}
}[\PackageError{expL}{Undefined option to expL: #1}{}]
#2
\IfEqCase{#1}{
{p}{)}
{P}{\big)}
{b}{]}
{B}{\big]}
{c}{\{}
{C}{\big\{}
}[\PackageError{expL}{Undefined option to expL: #1}{}]
}
\NewDocumentCommand\expl{O{p} m}{\expL[#1]{#2}}

\NewDocumentCommand\erf{O{p} m}{
\text{erf}
\IfEqCase{#1}{
{p}{(}      
{P}{\big(}
{b}{[}      
{B}{\big[}
{c}{\{}     
{C}{\big\{}
}[\PackageError{expL}{Undefined option to expL: #1}{}]
#2
\IfEqCase{#1}{
{p}{)}
{P}{\big)}
{b}{]}
{B}{\big]}
{c}{\{}
{C}{\big\{}
}[\PackageError{expL}{Undefined option to expL: #1}{}]
}

\newcommand{\stirlingnumfk}[2]{} 

\NewDocumentCommand\nabladec{O{} G{}}{\nabla_{#1} #2}

\NewDocumentCommand{\divergence}{O{} g}{\nabla_{#1}\!\bullet\IfNoValueTF{#2}{}{\left(#2\right)}}
\NewDocumentCommand\laplacian{O{} G{}}{\bigtriangleup_{#1} #2}
\NewDocumentCommand\gradient{O{} G{}}{\nabla_{#1}#2}

\NewDocumentCommand{\partiald}{s m m} {
\IfBooleanTF{#1}
{\frac{\partial }{\partial #3} #2} 
{\frac{\partial #2}{\partial #3}}
}

\newcommand{\partialD}[2] {\frac{d #1}{d #2}} 
\NewDocumentCommand\partialDe{g g G{2}}{(\partialD{#1}{#2})^{#3}} 

\NewDocumentCommand\partialDk{g g G{2}}   {\frac{d^{#3} {#1}}{d {#2}^{#3}} }      

\NewDocumentCommand\partialdk  {g g G{2}}   {\frac{\partial^{#3} {#1}}{\partial {#2}^{#3}} }      
\NewDocumentCommand\partialdkl{g g G{2}}    {\frac{\partial^{#3} }{\partial {#2}^{#3}} #1}
\NewDocumentCommand\partialdkat{g g G{2} g} {\frac{\partial^{#3} {#1}}{\partial {#2}^{#3}}_{|#4}}
\NewDocumentCommand\partialdklat{g g G{2} g}{ \frac{\partial^{#3} }{\partial {#2}^{#3}} #1 _{|#4}}

\NewDocumentCommand\partialdtwo{g g g}{ \frac{\partial^{2} {#1}}{\partial {#2} \partial {#3}}}
\NewDocumentCommand\partialdtwoat{g g g g}{ \frac{\partial^{2} {\tored #1}}{\partial {#2} \partial {#3}} _{| #4}}

\NewDocumentCommand\distproj{O{} g g}{d_{#2}^{#1}\IfNoValueTF{#3}{}{(#3)}}
\NewDocumentCommand\distHAOS{g g}{\overrightarrow{d}_{#1}\IfNoValueTF{#2}{}{(#2)}} 
\NewDocumentCommand\distHA{g g}{d_{HA}\IfNoValueTF{#1}{}{(#1, #2)}}                 

\NewDocumentCommand\distMHAOS{g g}{\overrightarrow{\overline{d}}_{#1}\IfNoValueTF{#2}{}{(#2)}} 
\NewDocumentCommand\distMHA{g g}{\overline{d}_{HA}\IfNoValueTF{#1}{}{(#1, #2)}}                 

\NewDocumentCommand\command{m+g}{
\IfNoValueTF{#2}
{command-1}
{command-2 }%
}

\NewDocumentCommand\vvnorm{O{} m g}{\left\lVert {#2} \right\rVert_{#1}\IfNoValueTF{#3}{}{^{#3}}}

\NewDocumentCommand\vvnormi{G{\cdot}} {{\vvnorm[\infty]{#1}}}
\NewDocumentCommand\vvnormtv{G{\cdot}}{{\| #1 \|_{TV}}}

\NewDocumentCommand\dist{g}{%
\IfNoValueTF{#1}
{\text{dist}}
{\text{dist}(#1)}
}

\NewDocumentCommand\PSD{G{d}}{S^+_{#1}}
\NewDocumentCommand\PSDcor{G{d}}{\calC^+_{#1}}

\NewDocumentCommand\onevector{G{}}{ {{\bf 1_{#1}}} }
\NewDocumentCommand\lavectorone{G{}}{ {{\bf 1_{#1}}} }

\NewDocumentCommand\identitymatrix{G{}}{ {{\bf I}_{#1}} }
\NewDocumentCommand\idmatrix{G{}}{ {{\bf I}_{#1}} }
\NewDocumentCommand\Imatrix{G{}}{ {\rred{\bf I}_{#1}} } 

\NewDocumentCommand\transpose{m G{}}{{#1}^{\mathsf{#2 T}}}
\NewDocumentCommand\latrans{m G{}}{{#1}^{\mathsf{#2 T}}}

\newcommand{\interior}[1]{ {\kern0pt#1}^{\mathrm{o}}}

\NewDocumentCommand\Tn{G{n}}{ \mathbb{T}^{#1} }
\NewDocumentCommand\Td{G{d}}{ \mathbb{T}^{#1} }
\NewDocumentCommand\torusd{G{d}}{ \mathbb{T}^{#1} }
\NewDocumentCommand\torusdc{O{c} G{d}}{\mathbb{T}_{#1}^{#2} } 
\NewDocumentCommand\torustri{O{{\bm{2\pi}}} G{d}}{\torusdc[#1]{#2}}

\NewDocumentCommand\diam{g}{%
\IfNoValueTF{#1}
{\text{diam}}
{\text{diam}(#1)}
}

\newcommand{\bfvec}[1]{{\mathbf{#1}}} 

\NewDocumentCommand\dotp{O{} g g}{ \langle #2, #3\rangle_{#1}}

\NewDocumentCommand\dotpn{o m m}{%
\langle #2, #3 \rangle>
\IfNoValueTF{#1}
{}
{_{#1}}
}

\NewDocumentCommand\Volume{O{} g}{{\tt Volume}_{#1}\IfNoValueTF{#2}{}{(#2)}}
\NewDocumentCommand\Area  {O{} g}{{\tt Area}_{#1} \IfNoValueTF{#2}{}{(#2)}}

\NewDocumentCommand\Sd{G{d-1}}{S^{#1}} 
\NewDocumentCommand\sphered{G{d-1}}{S^{#1}}

\NewDocumentCommand\Bd{G{d}}{B^{#1}}
\NewDocumentCommand\Bdcr{O{d} g} { B^{#1} \IfNoValueTF{#2}{}{(#2)} }

\NewDocumentCommand\unitSA{g}{A \IfNoValueTF{#1}{}{(#1)}}
\NewDocumentCommand\unitSV{g}{V \IfNoValueTF{#1}{}{(#1)}}

\NewDocumentCommand\spheredr{m+g}{\IfNoValueTF{#2}{S^{#1}}{S^{#1}(#2)}}
\NewDocumentCommand\balldr{m+g}{\IfNoValueTF{#2}{B^{#1}}{B^{#1}(#2)}}
\NewDocumentCommand\chord{O{} m}{\text{\tt chord}_{#1}(#2)} 

\NewDocumentCommand\spheredrarea{m+g}{\IfNoValueTF{#2}{\text{Area}_{#1}}{\text{Area}_{#1}(#2)}}
\NewDocumentCommand\balldrvol{m+g}{\IfNoValueTF{#2}{\text{Vol}_{#1}}{\text{Vol}_{#1}(#2)}}

\NewDocumentCommand\coneDARvol{m+m+g}{
\IfNoValueTF{#3}
{V^{\text{Cone}}_{#1, #2}}
{V^{\text{Cone}}_{#1, #2}(#3)}
}

\NewDocumentCommand\ballcapDARvol{m+m+g}{
\IfNoValueTF{#3}
{V^{\text{Cap}}_{#1, #2}}
{V^{\text{Cap}}_{#1, #2}(#3)}
}
\NewDocumentCommand\sphericalcapDARarea{m+m+g}{
\IfNoValueTF{#3}
{A^{\text{Cap}}_{#1, #2}}
{A^{\text{Cap}}_{#1, #2}(#3)}
}

\NewDocumentCommand\RatioCut{g}{\IfNoValueTF{#1}{\text{RatioCut}}{\text{RatioCut}[#1]}}
\NewDocumentCommand\Ncut{g}{\IfNoValueTF{#1}{\text{NCut}}{\text{NCut}[#1]}}

\NewDocumentCommand\giv{O{} g}{%
\text{name}
\IfNoValueTF{#1}{V_{G}(#2)}V_{#1}(#2){}
}

\NewDocumentCommand\NNG{O{} g}{
\text{NNG}_{#1}
\IfNoValueTF{#2}{}{(#2)}
}

\NewDocumentCommand\MST{O{} g}{
\text{MST}_{#1}
\IfNoValueTF{#2}{}{(#2)}
}

\newcommand{\uniformD}[1]{\calU(#1)}

\NewDocumentCommand\normalD{m g}{\calN(#1 \IfNoValueTF{#2}{)}{\mid #2)}}

\NewDocumentCommand\vonMF{O{} g}{\text{vMF}_{#1}\IfNoValueTF{#2}{}{(#2)}}

\NewDocumentCommand\dopearl{G{X=x}}{\text{do}(#1)} 
 
\NewDocumentCommand\probX{O{} g}{
\mathbb{P}_{#1}\IfNoValueTF{#2}{}{\left[{#2} \right]}}

\NewDocumentCommand\probXP{m+g}{%
\IfNoValueTF{#2}
{\ensuremath{ {\mathbb{P}_{#1}}}}
{\ensuremath{ {\mathbb{P}_{#1}}\left[ {#2} \right]}}
}

\NewDocumentCommand\expX{O{} g}{%
\IfNoValueTF{#2} 
{ {\mathbb{E}_{#1}} }
{ {\mathbb{E}_{#1}}\left[ {#2} \right] }
}

\NewDocumentCommand\expXB{O{} m}{
{\langle #2 \rangle_{#1} }}

\NewDocumentCommand\expXu{O{} g}{%
\mathbb{E}
\IfNoValueTF{#1}
{}
{_{#1}}
\IfNoValueTF{#2}
{}
{[#2]}
}

\NewDocumentCommand\varX{g}{\mathrm{Var} \IfNoValueTF{#1}{}{\left[ {#1} \right]}}

\NewDocumentCommand\sigX{g}{%
\IfNoValueTF{#1}
{\ensuremath{ {\mathbb{\sigma}}}}
{\ensuremath{ {\mathbb{\sigma}}\left[ {#1} \right]}}
}

\NewDocumentCommand\mcflow{g}{\varphi \IfNoValueTF{#1}{}{(#1)}} 
\NewDocumentCommand\mcconduct{g}{\phi \IfNoValueTF{#1}{}{(#1)}} 

\NewDocumentCommand\fisheremp{g}{F \IfNoValueTF{#1}{}{(#1)}}

\NewDocumentCommand\entH{g}{H\IfNoValueTF{#1}{}{(#1)}} 

\NewDocumentCommand\fisherinfo{O{} G{\theta}}{I_{#1}(#2)}

\NewDocumentCommand\fishermat{G{\theta}}{I_{#1}}

\NewDocumentCommand\priorjef{g} {J\IfNoValueTF{}{}{(#1)}}
\NewDocumentCommand\jefprior{g} {J\IfNoValueTF{}{}{(#1)}}

\NewDocumentCommand\divKL{s g g}{\IfBooleanTF{#1}{\widehat{D}}{D}_{\mathrm{KL}} \IfNoValueTF{#2}{}{\left(#2 \Vert #3\right)}}
\NewDocumentCommand\KLdiv{s g g}{\IfBooleanTF{#1}{\widehat{D}}{D}_{\mathrm{KL}} \IfNoValueTF{#2}{}{\left(#2 \Vert #3\right)}}

\NewDocumentCommand\divJS{s g g}{\IfBooleanTF{#1}{\widehat{D}}{D}_{\mathrm{JS}} \IfNoValueTF{#2}{}{\left(#2 \Vert #3\right)}}
\NewDocumentCommand\JSdiv{s g g}{\IfBooleanTF{#1}{\widehat{D}}{D}_{\mathrm{JS}} \IfNoValueTF{#2}{}{\left(#2 \Vert #3\right)}}

\NewDocumentCommand\emcres{O{t} G{ij}}{r_{#2}^{(#1)}}  
\NewDocumentCommand\emcsum{O{t} G{j}}{n_{#2}^{(#1)}}  
\NewDocumentCommand\emcmu{O{t} G{j}}{\mu_{#2}^{(#1)}} 
\NewDocumentCommand\emcw{O{t} G{j}}{w_{#2}^{(#1)}}    
\NewDocumentCommand\emisum{O{t}}{N^{(#1)}}           

\NewDocumentCommand\lacovar{G{}}{\Sigma_{#1}}
\NewDocumentCommand\lacorr{g G{}}{\Omega_{#1}}

\NewDocumentCommand\pearson{g}{\IfNoValueTF{#1}{\text{Pear.}}{\text{Pear.}(#1)}}

\NewDocumentCommand\spear{g}{\IfNoValueTF{#1}{\text{Spear.}}{\text{Spear.}(#1)}}
\NewDocumentCommand\mic{g}{\IfNoValueTF{#1}{\text{MIC}}{\text{MIC}(#1)}}
\NewDocumentCommand\micd{O{d} g}{
\text{MIC}_{#1}
\IfNoValueTF{#2}{}{ (#2)}}

\NewDocumentCommand\pvalue{o}{\text{p}\!-\!\text{val}\IfNoValueTF{#1}{}{_{#1}}}

\NewDocumentCommand\fTheta{g}{\IfNoValueTF{#1}{f_{\Theta}}{f_{\Theta}(#1)}}

\NewDocumentCommand\Xsample{g}{\IfNoValueTF{#1}{\ensuremath{x^{(n)}}\xspace}{\ensuremath{x^{(#1)}}\xspace}}
\NewDocumentCommand\Ysample{g}{\IfNoValueTF{#1}{\ensuremath{y^{(n)}}\xspace}{\ensuremath{y^{(#1)}}\xspace}}

\NewDocumentCommand\mean{O{} g}{{\text{mean}}_{#1}\IfNoValueTF{#2}{}{(#2)}}
\NewDocumentCommand\average{O{} g}{{\text{mean}}_{#1}\IfNoValueTF{#2}{}{(#2)}}

\NewDocumentCommand\median{O{} g}{{\text{median}_{#1}}\IfNoValueTF{#2}{}{(#2)}}
\NewDocumentCommand\minimum{O{} g}{{\text{minimum}_{#1}}\IfNoValueTF{#2}{}{(#2)}}

\NewDocumentCommand\likeli{G{X} G{\Theta}}{L(#1\mid #2)}
\NewDocumentCommand\loglik{G{X} G{\Theta}}{\calL(#1\mid #2)}

\NewDocumentCommand\mmd{O{} g}{%
\text{MMD}
\IfNoValueTF{#1}
{}
{_{#1}}
\IfNoValueTF{#2}
{}
{[#2]}
}

\NewDocumentCommand\mmds{O{} g}{%
\text{MMD}^2
\IfNoValueTF{#1}
{}
{_{#1}}
\IfNoValueTF{#2}
{}
{[#2]}
}

\NewDocumentCommand\normat{O{r} g}{\text{\codecx{Norm.}}_{#1} \IfNoValueTF{#2}{} {(#2)} } 
\NewDocumentCommand\lanorm{O{r} g}{\text{\codecx{Norm.}}_{#1} \IfNoValueTF{#2}{} {(#2)} } 
\NewDocumentCommand\softmax{O{r} g}{\text{\codecx{softmax}}_{#1} \IfNoValueTF{#2}{} {(#2)} }
 
\NewDocumentCommand\sinkhorn{g}{\codecx{Sinkhorn}\IfNoValueTF{#1}{}{(#1)}} 

\NewDocumentCommand\concat{g}{{\codecx{Concat}}{\codecx{Concat}(#1)}}

\NewDocumentCommand\attn{g}{\codecx{Attn} \IfNoValueTF{#1}{}{(#1)}}
\NewDocumentCommand\selfattn{g}{\codecx{SelfAttn} \IfNoValueTF{#1}{}{(#1)}}

\NewDocumentCommand\Katt{O{1} G{}} {K^{(#1)}_{#2}} 
\NewDocumentCommand\Kasink{G{}}{\Katt{#1}}

\NewDocumentCommand\attQW{s O{} g}{\IfBooleanTF{#1}{\prescript{h}{}{W_{Q_{#2}}}}{W_{Q_{#2}}} \IfNoValueTF{#3}{}{(#3)}}
\NewDocumentCommand\attWQ{s O{} g}{\IfBooleanTF{#1}{\prescript{h}{}{W_{Q_{#2}}}}{W_{Q_{#2}}} \IfNoValueTF{#3}{}{(#3)}}

\NewDocumentCommand\attKW{s O{} g}{\IfBooleanTF{#1}{\prescript{h}{}{W_{K_{#2}}}}{W_{K_{#2}}} \IfNoValueTF{#3}{}{(#3)}}
\NewDocumentCommand\attWK{s O{} g}{\IfBooleanTF{#1}{\prescript{h}{}{W_{K_{#2}}}}{W_{K_{#2}}} \IfNoValueTF{#3}{}{(#3)}}

\NewDocumentCommand\attVW{s O{} g}{\IfBooleanTF{#1}{\prescript{h}{}{W_{V_{#2}}}}{W_{V_{#2}}} \IfNoValueTF{#3}{}{(#3)}}
\NewDocumentCommand\attWV{s O{} g}{\IfBooleanTF{#1}{\prescript{h}{}{W_{V_{#2}}}}{W_{V_{#2}}} \IfNoValueTF{#3}{}{(#3)}}

\NewDocumentCommand\attQ{s O{} g}{\IfBooleanTF{#1}{\prescript{h}{}{Q_{#2}}}{Q_{#2}} \IfNoValueTF{#3}{}{(#3)}}
\NewDocumentCommand\attK{s O{} g}{\IfBooleanTF{#1}{\prescript{h}{}{K_{#2}}}{K_{#2}} \IfNoValueTF{#3}{}{(#3)}}
\NewDocumentCommand\attV{s O{} g}{\IfBooleanTF{#1}{\prescript{h}{}{V_{#2}}}{V_{#2}} \IfNoValueTF{#3}{}{(#3)}}

\NewDocumentCommand\roformer{g}{\codecx{RoFormer}\IfNoValueTF{#1}{}{(#1)}}
\NewDocumentCommand\alibi{g}{\codecx{ALiBi}\IfNoValueTF{#1}{}{(#1)}}

\newcommand{\adjdof}[1]{{#1}\text{-dof}}

\NewDocumentCommand\sijref{G{ij}}{\overset{\circ}{s}_{#1}}
\NewDocumentCommand\dijref{G{ij}}{\overset{\circ}{d}_{#1}}

\NewDocumentCommand\Vgnm{g}{V_{GNM}\IfNoValueTF{#1}{}{(#1)}}
\NewDocumentCommand\Vanm{g}{V_{ANM}\IfNoValueTF{#1}{}{(#1)}}

\NewDocumentCommand\wienerW{s O{t} g}{
\IfBooleanTF{#1}{\bar{W}_{#2}}{W_{#2}}
\IfNoValueTF{#3}{}{(#3)}}

\NewDocumentCommand\wienerB{s O{t} g}{
\IfBooleanTF{#1}{\bar{B}_{#2}}{B_{#2}}
\IfNoValueTF{#3}{}{(#3)}}

\NewDocumentCommand\gdct{O{} g}{\text{GDCT}_{#1}\IfNoValueTF{#2}{}{(#2)}}

\NewDocumentCommand\chemeq{ m m O{} O{}} {
{#1 \underset{#4}{\stackrel{#3}{\rightleftharpoons}} #2}
}

\NewDocumentCommand\brackets{m+g}{\IfNoValueTF{#2}{[#1]}{[#1]_{#2}}}

\NewDocumentCommand\deltagd{s}{\IfBooleanTF{#1}{\Delta G_d^{\circ}}{\Delta G_d}}
\NewDocumentCommand\deltaga{s}{\IfBooleanTF{#1}{\Delta G_a^{\circ}}{\Delta G_a}}

\NewDocumentCommand\ddeltagd{s}{\IfBooleanTF{#1}{\deltagd*}{\deltagd}}

\NewDocumentCommand\deltaGexp{g}{%
  \IfNoValueTF{#1}
  {\ensuremath{\Delta G}\xspace}
  {\ensuremath{\Delta G_{#1}}\xspace}
}

\NewDocumentCommand\bsa{g}{%
\IfNoValueTF{#1}
{\ensuremath{\text{BSA\xspace}}}
{\ensuremath{\text{BSA}(#1)}}
}

\NewDocumentCommand\Calpha{O{} g}{\IfNoValueTF{#2}{C_{\alpha}^{#1}}{C_{\alpha;#2}^{#1}}}
\NewDocumentCommand\Cbeta{O{} g} {\IfNoValueTF{#2}{C_{\beta}^{#1}}{C_{\beta;#2}^{#1}}}
\NewDocumentCommand\Cgamma{O{} g}{\IfNoValueTF{#2}{C_{\gamma}^{#1}}{C_{\gamma;#2}^{#1}}}
\NewDocumentCommand\Cdelta{O{} g} {\IfNoValueTF{#2}{C_{\delta}^{#1}}{C_{\delta;#2}^{#1}}}
\NewDocumentCommand\Cepsilon{O{} g} {\IfNoValueTF{#2}{C_{\epsilon}^{#1}}{C_{\epsilon;#2}^{#1}}}

\newcommand{\msa}[1]{MSA\xspace}

\NewDocumentCommand\SASA{g}{%
\IfNoValueTF{#1}
{\ensuremath{\text{SASA}}}
{\ensuremath{\text{SASA}(#1)}}
}

\NewDocumentCommand\hemogflp{g}{%
\IfNoValueTF{#1}
{FL+}{FL+[#1]}
}
\NewDocumentCommand\hemoghlm{g}{%
\IfNoValueTF{#1}
{HL-}{HL-[#1]}
}
\NewDocumentCommand\hemoghlp{g}{%
\IfNoValueTF{#1}
{HL+}{HL+[#1]}
}

\NewDocumentCommand\dlactiv{O{} g}{\sigma_{#1} \IfNoValueTF{#2}{}{(#2)}} 
\NewDocumentCommand\dlsoftmax{O{} g}{\codecx{softmax}_{#1} \IfNoValueTF{#2}{}{(#2)}} 
\NewDocumentCommand\dlonehot{g}{\codecx{onehot} \IfNoValueTF{#1}{}{(#1)}} 
\NewDocumentCommand\dlconcat{O{} g}{\codecx{concat}_{#1} \IfNoValueTF{#2}{}{(#2)}} 

\NewDocumentCommand\dllinearnb{g}{\codecx{LinearNoBias} \IfNoValueTF{#1}{}{(#1)}} 
\NewDocumentCommand\dllinear{g}{\codecx{Linear} \IfNoValueTF{#1}{}{(#1)}} 
\NewDocumentCommand\dlrelu{O{} g}{\codecx{relu}_{#1} \IfNoValueTF{#2}{}{(#2)}} 
\NewDocumentCommand\dlmlpex{g}{\dlrelu{\dllinear{\IfNoValueTF{#1}{}{#1}}}}   

\NewDocumentCommand\dlmlp{O{} g}  {\codecx{MLP}_{#1}  \IfNoValueTF{#2}{}{(#2)}} 
\NewDocumentCommand\dlmlpone{O{} g}{\codecx{MLP}_{#1} \IfNoValueTF{#2}{}{(#2)}} 
\NewDocumentCommand\dlmlptwo{O{} g}{\codecx{MLP2}_{#1}\IfNoValueTF{#2}{}{(#2)}} 

\NewDocumentCommand\dllayernorm{g}{\codecx{LayerNorm} \IfNoValueTF{#1}{}{(#1)}} 
\NewDocumentCommand\dldropout{O{} g}{\codecx{Dropout}_{#1} \IfNoValueTF{#2}{}{(#2)}} 

\NewDocumentCommand\pointj{g}{%
\IfNoValueTF{#1}
{p^{(j)}}{p^{(#1)}}
}

\NewDocumentCommand\alphaij{O{} G{ij}}{\alpha_{#2}^{#1}}
\NewDocumentCommand\betaij{O{} G{ij}} {\beta_{#2}^{#1}}
\NewDocumentCommand\sigmaij{O{} G{ij}}{\sigma_{#2}^{#1}}

\NewDocumentCommand\phiij{O{} G{i}}{\phi_{#2}^{#1}}
\NewDocumentCommand\varphiij{O{} G{i}}{\varphi_{#2}^{#1}}
\NewDocumentCommand\tauij{O{} G{ij}}{\tauij_{#2}^{#1}}
\NewDocumentCommand\deltaij{O{} G{ij}}{\delta_{#2}^{#1}}

\NewDocumentCommand\gammaij{O{}  G{ij}}{\gamma_{#2}^{#1}}
\NewDocumentCommand\Lambdaij{O{} G{ij}}{\Lambda_{#2}^{#1}}
\NewDocumentCommand\Lambdadec{O{} G{}}{\Lambda_{#2}^{#1}}

\NewDocumentCommand\thetai{O{} G{i}}{\theta_{#2}^{#1}}
\NewDocumentCommand\Thetai{O{} G{i}}{\Theta_{#2}^{#1}}

\NewDocumentCommand\lambdarg{g}{\IfNoValueTF{#1}{\lambda}{\lambda_{#1}}}

\NewDocumentCommand\aij{O{} G{ij}}{a_{#2}^{#1}}
\NewDocumentCommand\Aij{O{} G{ij}}{A_{#2}^{#1}}
\NewDocumentCommand\aijvec{O{} G{ij}}{\overrightarrow{a}_{#2}^{#1}}
\NewDocumentCommand\aijt{G{i}G{j}} {a_{#1,#2}}
\NewDocumentCommand\Aijt{G{i}G{j}} {A_{#1,#2}}

\NewDocumentCommand\bij{O{} G{ij}}{b_{#2}^{#1}}
\NewDocumentCommand\Bij{O{} G{ij}}{B_{#2}^{#1}}
\NewDocumentCommand\bijvec{O{} G{ij}}{\overrightarrow{b}_{#2}^{#1}}

\NewDocumentCommand\bijt{G{i}G{j}} {b_{#1,#2}}
\NewDocumentCommand\Bijt{G{i}G{j}} {B_{#1,#2}}

\NewDocumentCommand\cij{G{ij}} {c_{#1}}
\NewDocumentCommand\Cij{G{ij}} {C_{#1}}
\NewDocumentCommand\cijk{G{ijk}} {c_{#1}}
\NewDocumentCommand\Cijk{G{ijk}} {C_{#1}}

\NewDocumentCommand\cijt{G{i}G{j}} {c_{#1,#2}}
\NewDocumentCommand\Cijt{G{i}G{j}} {C_{#1,#2}}
\NewDocumentCommand\calCijt{G{i}G{j}} {\calC_{#1,#2}}

\NewDocumentCommand\dij{O{} G{ij}} {d_{#2}^{#1}}
\NewDocumentCommand\Dij{O{} G{ij}} {D_{#2}^{#1}}

\NewDocumentCommand\dijt{G{i}G{j}} {d_{#1,#2}}
\NewDocumentCommand\Dijt{G{i}G{j}} {D_{#1,#2}}

\NewDocumentCommand\dijts{G{i}G{j}}{d_{#1,#2}^2}
\NewDocumentCommand\dijs{G{ij}}{d_{#1}^2}

\NewDocumentCommand\eij{G{ij}} {e_{#1}}
\NewDocumentCommand\eijt{G{i}G{j}} {e_{#1,#2}}
\NewDocumentCommand\Eij{G{ij}} {E_{#1}}
\NewDocumentCommand\Eijt{G{i}G{j}} {E_{#1,#2}}

\NewDocumentCommand\eijs{G{ij}}{e_{#1}^2}
\NewDocumentCommand\eijts{G{i}G{j}}{e_{#1,#2}^2}

\NewDocumentCommand\fij{G{ij}}{f_{#1}}
\NewDocumentCommand\Fij{G{ij}}{f_{#1}}
\NewDocumentCommand\fijt{G{i}G{j}}{f_{#1,#2}}
\NewDocumentCommand\Fijt{G{i}G{j}}{F_{#1,#2}}

\NewDocumentCommand\gij{G{ij}}{g_{#1}}
\NewDocumentCommand\Gij{G{ij}}{G_{#1}}
\NewDocumentCommand\gijt{G{i}G{j}}{g_{#1,#2}}
\NewDocumentCommand\Gijt{G{i}G{j}}{G_{#1,#2}}

\NewDocumentCommand\hij{G{ij}}{h_{#1}}
\NewDocumentCommand\Hij{G{ij}}{H_{#1}}
\NewDocumentCommand\hijt{G{i}G{j}}{h_{#1,#2}}
\NewDocumentCommand\Hijt{G{i}G{j}}{H_{#1,#2}}

\NewDocumentCommand\iij{G{ij}}{i_{#1}}
\NewDocumentCommand\iijt{G{i}G{j}}{i_{#1,#2}}
\NewDocumentCommand\Iij{G{ij}}{I_{#1}}
\NewDocumentCommand\Iijt{G{i}G{j}}{I_{#1,#2}}

\NewDocumentCommand\jij{G{ij}}{j_{#1}}
\NewDocumentCommand\jijt{G{i}G{j}}{j_{#1,#2}}
\NewDocumentCommand\Jij{G{ij}}{J_{#1}}
\NewDocumentCommand\Jijt{G{i}G{j}}{J_{#1,#2}}

\NewDocumentCommand\kij{O{} G{ij}}{k_{#2}^{#1}}
\NewDocumentCommand\Kij{O{} G{ij}}{K_{#2}^{#1}}
\NewDocumentCommand\kijvec{O{} G{ij}}{\overrightarrow{k}_{#2}^{#1}}

\NewDocumentCommand\mij{G{ij}}{m_{#1}}
\NewDocumentCommand\mijhat{G{ij}}{\hat{m}_{#1}}

\NewDocumentCommand\Mij{G{ij}}{M_{#1}}
\NewDocumentCommand\mijt{G{i}G{j}}{m_{#1,#2}}
\NewDocumentCommand\Mijt{G{i}G{j}}{M_{#1,#2}}

\NewDocumentCommand\nij{O{} G{ij}}{n_{#2}^{#1}}
\NewDocumentCommand\Nij{O{} G{ij}}{N_{#2}^{#1}}

\NewDocumentCommand\oij{O{} G{ij}}{o_{#2}^{#1}}
\NewDocumentCommand\Oij{O{} G{ij}}{O_{#2}^{#1}}
\NewDocumentCommand\oijvec{O{} G{ij}}{\overrightarrow{o}_{#2}^{#1}}
\NewDocumentCommand\oijtilde{O{} G{ij}}{\tilde{o}_{#2}^{#1}}

\NewDocumentCommand\pij{O{} G{ij}}{p_{#2}^{#1}}
\NewDocumentCommand\Pij{O{} G{ij}}{P_{#2}^{#1}}
\NewDocumentCommand\pijvec{O{} G{ij}}{\overrightarrow{p}_{#2}^{#1}}

\NewDocumentCommand\rij{O{} G{ij}}{r_{#2}^{#1}}
\NewDocumentCommand\Rij{O{} G{ij}}{R_{#2}^{#1}}

\NewDocumentCommand\qij{O{} G{ij}}{q_{#2}^{#1}}
\NewDocumentCommand\Qij{O{} G{ij}}{Q_{#2}^{#1}}
\NewDocumentCommand\qijvec{O{} G{ij}}{\overrightarrow{q}_{#2}^{#1}}

\NewDocumentCommand\qbarfunc{O{} g}{\bar{q}_{#1}  \IfNoValueTF{#2}{}{(#2)}}

\NewDocumentCommand\sij{O{} G{ij}}{s_{#2}^{#1}}
\NewDocumentCommand\Sij{O{} G{ij}}{S_{#2}^{#1}}
\NewDocumentCommand\sijvec{O{} G{ij}}{\overrightarrow{s}_{#2}^{#1}}
\NewDocumentCommand\sijtilde{O{} G{ij}}{\tilde{s}_{#2}^{#1}}

\NewDocumentCommand\tij{G{ij}}{t_{#1}}
\NewDocumentCommand\Tij{G{ij}}{T_{#1}}

\NewDocumentCommand\uij{G{ij}}{u_{#1}}
\NewDocumentCommand\Uij{G{ij}}{U_{#1}}
\NewDocumentCommand\uijt{G{i}G{j}}{u_{#1,#2}}
\NewDocumentCommand\Uijt{G{i}G{j}}{U_{#1,#2}}

\NewDocumentCommand\vij{O{} G{ij}}{v_{#2}^{#1}}
\NewDocumentCommand\Vij{O{} G{ij}}{V_{#2}^{#1}}
\NewDocumentCommand\vijvec{O{} G{ij}}{\overrightarrow{v}_{#2}^{#1}}
\NewDocumentCommand\vijt{G{i}G{j}}{v_{#1,#2}}
\NewDocumentCommand\Vijt{G{i}G{j}}{V_{#1,#2}}

\NewDocumentCommand\wij{O{} G{ij}}{w_{#2}^{#1}}
\NewDocumentCommand\Wij{O{} G{ij}}{W_{#2}^{#1}}
\NewDocumentCommand\wijvec{O{} G{ij}}{\overrightarrow{w}_{#2}^{#1}}

\NewDocumentCommand\xij{O{} G{ij}}{x_{#2}^{#1}}
\NewDocumentCommand\Xij{O{} G{ij}}{X_{#2}^{#1}}

\NewDocumentCommand\yij{G{ij}}{y_{#1}}
\NewDocumentCommand\Yij{G{ij}}{Y_{#1}}

\NewDocumentCommand\zij{G{ij}}{z_{#1}}
\NewDocumentCommand\Zij{G{ij}}{Z_{#1}}

\NewDocumentCommand\piij{G{ij}}{\pi_{#1}}
\NewDocumentCommand\thetaij{G{ij}}{\theta_{#1}}
\NewDocumentCommand\Thetaij{G{ij}}{\Theta_{#1}}

\NewDocumentCommand\costpc{g}{\Delta\IfNoValueTF{#1}{}{(#1)}} 

\NewDocumentCommand\eigenvec{O{v} G{v}}{
\IfNoValueTF{#2}{#2}{#2_{#1}}}

\NewDocumentCommand\FMgeodesic{O{} g}{\gamma_{#1} \IfNoValueTF{#2}{}(#2)}

\NewDocumentCommand\bbflow{G{}}{\codecx{BBFlow#1}}

\NewDocumentCommand\OTCPL{g}{\pi \IfNoValueTF{#1}{}{(#1)}}  
\NewDocumentCommand\otcpl{g}{\pi \IfNoValueTF{#1}{}{(#1)}}  
\NewDocumentCommand\otcost{g}{c\IfNoValueTF{#1}{}{(#1)}}  

\NewDocumentCommand\RectFlow{g}{\text{RectFlow}\IfNoValueTF{#1}{}{(#1)}}

\NewDocumentCommand{\verblinett}{v}{{\tt #1}}
\NewDocumentCommand{\verblineem}{v}{{\em #1}}
\NewDocumentCommand{\verblinebf}{v}{{\bf #1}}

\newcommand{\codec}[1]{{\tt \text{#1}}} 
\newcommand{\codecx}[1]{{\tt \text{#1}}\xspace} 

\newcommand{\ucainria}{Centre Inria at Université Côte d'Azur, France\xspace}

\newcounter{classd}

\NewDocumentCommand\lloyd{g}{\codecx{\toblue Lloyd}\IfNoValueTF{#1}{}{(#1)}}
\NewDocumentCommand\lloyditers{g}{\codecx{\toblue Lloyd}\IfNoValueTF{#1}{}{(#1)}}

\NewDocumentCommand\Distproba{g}{D_{\text{COST}}\IfNoValueTF{#1}{}{(#1)}}

\NewDocumentCommand\smartseeding{g}{\codecx{\toblue SmartSeeding}\IfNoValueTF{#1}{}{(#1)}}
\NewDocumentCommand\kmeanspp{O{blue} g}{\codecx{\textcolor{#1}{k-means++}}\IfNoValueTF{#2}{}{(#2)}}

\NewDocumentCommand\Dsquare{g}{D^2\IfNoValueTF{#1}{}{(#1)}}
\NewDocumentCommand\Dsquaremean{g}{\overline{D}^2\IfNoValueTF{#1}{}{(#1)}}

\NewDocumentCommand\onehot{g}{\codecx{One\_hot}\IfNoValueTF{#1}{}{(#1)}}

\NewDocumentCommand\kmeansfunc{o G{K}}{\IfNoValueTF{#1}{\Phi_{#2}}{\Phi_{#2,#1}}}

\NewDocumentCommand\kmeansfun {G{K}}{\Phi_{#1}} 
\NewDocumentCommand\kmeansfuncmean{G{K}}{\overline{\Phi}_{#1}} 
\NewDocumentCommand\kmeansfunmean {G{K}}{\overline{\Phi}_{#1}} 

\NewDocumentCommand\kmeansinertia{G{S} G{s}}{\varphi_{#1}(#2)}
\NewDocumentCommand\kmeansfunS {O{K}}{\Phi_{#1}^\text{D}} 
\NewDocumentCommand\kmeansfunCOM {G{K}}{\Phi_{#1}^\text{S-COM}} 

\NewDocumentCommand\kmeansopt{G{K}}{\Phi_{#1,OPT}}

\newcommand{\frederic}{Fr\'ed\'eric\xspace}

\NewDocumentCommand\gmpfr{g}{%
\IfNoValueTF{#1}
{\codecx{Gmpfr}}
{\codecx{Gmpfr}[#1]}
}

\NewDocumentCommand\scoreDFM{g}{%
\IfNoValueTF{#1}
{\Phi(\calS)}
{\Phi(#1)}
}

\NewDocumentCommand\varinf{O{} g}{
\text{VI}_{#1}
\IfNoValueTF{#2}{}{(#2)}
}

\NewDocumentCommand\pibest{g}{%
\IfNoValueTF{#1}
{{\pi_{best}}}
{{\pi_{best}}(#1)}
}

\NewDocumentCommand\blosum{g}{%
\IfNoValueTF{#1}
{{\codecx{BLOSUM}}}
{{\codecx{BLOSUM#1}}}
}

\NewDocumentCommand\alphafold{G{2}}{\codecx{AlphaFold#1}}

\NewDocumentCommand\sabdab{G{blue}}{\textcolor{#1}{SAbDab}}

\newcommand{\sblref}{\cite{cazals2017structural}}

\newcommand{\sbllong}{Structural Bioinformatics Library\xspace}

\newcommand{\sblweb}{\url{https://sbl.inria.fr}} 
\NewDocumentCommand\sblwebhref{O{black}}{\href{https://sbl.inria.fr}{\textcolor{#1}{Structural Bioinformatics Library}}}

\NewDocumentCommand\powerdist{g}{\pi \IfNoValueTF{#1}{}{(#1)}}

\NewDocumentCommand\vorr{g}{%
\IfNoValueTF{#1}
{\text{Vor}}
{\text{Vor}(#1)
}}

\NewDocumentCommand\kalp{G{\alpha}}{\calK_{#1}}
\NewDocumentCommand\kalpha{G{\alpha}}{\calK_{#1}}
\NewDocumentCommand\Kalpha{G{\alpha}}{\calK_{#1}}
\NewDocumentCommand\acomplex{G{\alpha}}{\calK_{#1}}
\NewDocumentCommand\ashape{G{\alpha}}{\calW_{#1}}

\NewDocumentCommand\ballrestriction{O{R} G{}}{{#1}_{#2}}
\NewDocumentCommand\vorregion{G{i}} { V_{#1}}

\NewDocumentCommand\sfm{g}{\IfNoValueTF{#1}{\calF_{\alpha}}{\calF_{#1}}}
\NewDocumentCommand\sfmA{g}{\IfNoValueTF{#1}{\calF^(A)_{\alpha}}{\calF^{A}_{#1}}}
\NewDocumentCommand\sfmB{g}{\IfNoValueTF{#1}{\calF^(B)_{\alpha}}{\calF^{B}_{#1}}}

\NewDocumentCommand\spectraldom{g}{\IfNoValueTF{#1}{\codecx{SPECTRALDOM}}{\codecx{SPECTRALDOM/#1}}}

\NewDocumentCommand\stiffc{G{}}{ \gamma^\text{Cov}_{#1} }
\NewDocumentCommand\stiffnc{G{}}{ \gamma^\text{NCov}_{#1} }
\NewDocumentCommand\stiffhbond{G{}}{ \gamma^\text{HB}_{#1} }
\NewDocumentCommand\stiffSB{G{}}{  \gamma^\text{SB}_{#1} }

\NewDocumentCommand\tmalign{g}{\codecx{TM}\IfNoValueTF{#1}{}{(#1)}}
\NewDocumentCommand\TMXref{O{A}}{{#1}^\text{ref}} 

\NewDocumentCommand\lrmsdpw{g}{\text{lRMSDpw}\IfNoValueTF{#1}{}{(#1)}}

\NewDocumentCommand\rmsd{gg}{%
\IfNoValueTF{#2}
{\ensuremath{\text{RMSD}}\xspace }
{\ensuremath{\text{RMSD}(#1,#2)}\xspace}
}
\NewDocumentCommand\rmsdw{gg}{%
\IfNoValueTF{#2}
{\ensuremath{\text{RMSD}_\text{w}}\xspace }
{\ensuremath{\text{RMSD}_\text{w}(#1,#2)}\xspace}
}

\NewDocumentCommand\gopt{g+g}{%
\IfNoValueTF{#2}
{\ensuremath{g^{\text{OPT}}}\xspace}
{\ensuremath{g^{\text{OPT}}(#1, #2)}\xspace}
}

\NewDocumentCommand\lrmsdoptrm{gg}{%
\IfNoValueTF{#2}
{\ensuremath{g_{#1}^{\text{OPT}}}}
{\ensuremath{g_{#1}^{\text{OPT}}}(#2)}
}

\NewDocumentCommand\lrmsd{gg}{%
\IfNoValueTF{#2}
{\ensuremath{\text{lRMSD}}\xspace}
{\ensuremath{\text{lRMSD}(#1,#2)}\xspace}
}
\NewDocumentCommand\lrmsds{gg}{%
\IfNoValueTF{#2}
{\ensuremath{\text{lRMSD}^2}\xspace}
{\ensuremath{\text{lRMSD}^2(#1,#2)}\xspace}
}
\NewDocumentCommand\lrmsdvw{gg}{%
\IfNoValueTF{#2}
{\ensuremath{\text{lRMSD}_{\text{vw}}}\xspace}
{\ensuremath{\text{lRMSD}_{\text{vw}}(#1,#2)}\xspace}
}

\NewDocumentCommand\lrmsdvws{gg}{%
\IfNoValueTF{#2}
{\ensuremath{\text{lRMSD}^2_{\text{vw}}}\xspace}
{\ensuremath{\text{lRMSD}^2_{\text{vw}}(#1,#2)}\xspace}
}

\NewDocumentCommand\lrmsdew{gg}{%
\IfNoValueTF{#2}
{\ensuremath{\text{lRMSD}_{\text{ew}}}\xspace}
{\ensuremath{\text{lRMSD}_{\text{ew}}(#1,#2)}\xspace}
}

\NewDocumentCommand\lrmsdews{gg}{%
\IfNoValueTF{#2}
{\ensuremath{\text{lRMSD}^2_{\text{ew}}}\xspace}
{\ensuremath{\text{lRMSD}^2_{\text{ew}}(#1,#2)}\xspace}
}

\NewDocumentCommand\lrmsdfinal{gg}{%
\IfNoValueTF{#2}
{\ensuremath{\text{lRMSD}_{\text{final}}}\xspace}
{\ensuremath{\text{lRMSD}_{\text{final}}(#1,#2)}\xspace}
}

\NewDocumentCommand\rmsdcomb{gg}{%
\IfNoValueTF{#2}
{\ensuremath{\text{RMSD}_{\text{Comb.}}}\xspace}
{\ensuremath{\text{RMSD}_{\text{Comb.}}(#1,#2)}\xspace}
}
\NewDocumentCommand\rmsdcombs{gg}{%
\IfNoValueTF{#2}
{\ensuremath{\text{RMSD}^2_{\text{Comb.}}}\xspace}
{\ensuremath{\text{RMSD}^2_{\text{Comb.}}(#1,#2)}\xspace}
}

\NewDocumentCommand\dCalC{gg}{%
\IfNoValueTF{#2}
{d_{\calC}}
{d_{\calC}(#1,#2)}
}

\NewDocumentCommand\extendConf{g}{
\IfNoValueTF{#1}
{\codec{ExtendConformation}}
{\codec{ExtendConformation}(#1)}
}
\NewDocumentCommand\selectConf{g}{
\IfNoValueTF{#1}
{\codec{SelectConformationToExtend}}
{\codec{SelectConformationToExtend}(#1)}
}
\NewDocumentCommand\acceptConf{g}{
\IfNoValueTF{#1}
{\codec{AcceptConformation}}
{\codec{AcceptConformation}(#1)}
}
\NewDocumentCommand\recordConf{g}{
\IfNoValueTF{#1}
{\codec{RecordNewConformation}}
{\codec{RecordNewConformation}(#1)}
}

\NewDocumentCommand\energyP{gg}{
\IfNoValueTF{#1}
{\codec{E}}
{\codec{E}(#1)}
}

\NewDocumentCommand\sampleConfUnif{gg}{
\IfNoValueTF{#1}
{\codec{SampleConformationUniformly}}
{\codec{SampleConformationUniformly}(#1)} }

\NewDocumentCommand\sampleConfMoveSet{gg}{
\IfNoValueTF{#1}
{\codec{SampleConformationWithMoveSet}}
{\codec{SampleConformationWithMoveSet}(#1,#2)}}

\newcommand{\RRT}{\codecx{RRT}}   
\newcommand{\rrt}{\codecx{RRT}}   
   
\NewDocumentCommand\algoHY{g}{%
\IfNoValueTF{#1}
{\codecx{Hybrid}}
{\codecx{Hybrid-switch}-{#1}}
}
\NewDocumentCommand\algohybrid{g}{%
\IfNoValueTF{#1}
{\codecx{Hybrid}}
{\codecx{Hybrid-switch}-{#1}}
}

\NewDocumentCommand\distij{g}{%
\IfNoValueTF{#1}
{d_{ij}}
{d_{#1}}
}

\newcommand{\distik}{\distik{ik}}

\NewDocumentCommand\flowij{g}{%
\IfNoValueTF{#1}
{f_{ij}}
{f_{#1}}
}

\NewDocumentCommand\motifAB{g}{\IfNoValueTF{#1}{M^{(AB)}}{M^{(AB)}_{#1}}}
\NewDocumentCommand\motifab{g}{\IfNoValueTF{#1}{M^{(AB)}}{M^{(AB)}_{#1}}}

\NewDocumentCommand\motifa{g}{\IfNoValueTF{#1}{M^{(A)}}{M^{(A)}_{#1}}}
\NewDocumentCommand\motifb{g}{\IfNoValueTF{#1}{M^{(B)}}{M^{(B)}_{#1}}}

\NewDocumentCommand\pconsSeq{gg}{%
  \IfNoValueTF{#1}
  {PCS}
  {PCS_{\leq #2}^{#1}}
}

\NewDocumentCommand\consSeq{g}{%
\IfNoValueTF{#1}{CS}
{CS^{(#1; D)}}
}

\NewDocumentCommand\consSS{g}{%
\IfNoValueTF{#1}
{CSS}
{CSS^{(#1, D)}}
}

\NewDocumentCommand\pconsSS{ggg}{%
\IfNoValueTF{#1}
{PCSS}
{PCSS_{\leq #2, \geq #3}^{(#1; D)}
}}

\NewDocumentCommand\scoreij{g+g}{%
\IfNoValueTF{#2}
{s_{ij}}
{s_{#1, #2}}
}

\NewDocumentCommand\dihedralangles{g}{\IfNoValueTF{#1}{\text{DihedralAngles}}{\text{DihedralAngles}(#1)}}

\NewDocumentCommand\cposeA{G{i}}{A^{(#1)}}
\NewDocumentCommand\cposeB{G{i}}{B^{(#1)}}

\NewDocumentCommand\Kdist{G{i} G{j}}{R_{#1, #2}}

\NewDocumentCommand\rigdom{s m g}{D_{#2}\IfBooleanTF{#1}{^{e}}{}\IfNoValueTF{#3}{}{(#3)}}
\NewDocumentCommand\rigdomenv{m g g}{\calE_{#1}\IfNoValueTF{#2}{}{(#2, #3)}} 

\NewDocumentCommand\matrixT{G{i,t}}{T_{#1}}
\NewDocumentCommand\loopij{G{ij}}{L_{#1}}
\NewDocumentCommand\confij{s m g}{
\IfBooleanTF{#1}{\bar{C}_{#2}}{C_{#2}}
\IfNoValueTF{#3}{}{(#3)}}

\NewDocumentCommand\tgeo{g}{\tilde{t}\IfNoValueTF{#1}{}{_{#1}}}

\NewDocumentCommand\tcolbbmin{G{i} G{j} g}{t^c_{#1, #2} \IfNoValueTF{#3}{}{(#3)}}
\NewDocumentCommand\tcolbbmax{G{i} G{j} g}{t^c_{#1, #2} \IfNoValueTF{#3}{}{(#3)}}

\NewDocumentCommand\tcolminccb{g}{t^c\IfNoValueTF{#1}{}{(#1)}}
\NewDocumentCommand\tcolmaxccb{g}{t^c\IfNoValueTF{#1}{}{(#1)}}

\NewDocumentCommand\crispanglespre{g}{\IfNoValueTF{#1}{ \bm{\chi}_{pre} }{\chi_{pre}^{(#1)}}}
\NewDocumentCommand\crispanglespost{g}{\IfNoValueTF{#1}{ \bm{\chi}_{post} }{\chi_{post}^{(#1)}}}

\NewDocumentCommand\tripepml{s m}{\IfBooleanTF{#1}{\textcolor{red}{T^{'}_{#2}}}{T^{'}_{#2}}}
 \NewDocumentCommand\tripepcore{s m}{\IfBooleanTF{#1}{\textcolor{red}{T^{'}_{#2}}}{T^{'}_{#2}}}
\NewDocumentCommand\pepbody{s m g}{
\IfBooleanTF{#1}{\textcolor{blue}{P_{#2}\IfNoValueTF{#3}{}{(#3)}}}{P_{#2}\IfNoValueTF{#3}{}{(#3)}}}

\NewDocumentCommand\algLSonestep{s}{\IfBooleanTF{#1}{\codecx{LS\_one\_step\_approx}}{\codecx{LS\_one\_step}}}

\newcommand{\oneorall}{\text{One|All}}
\newcommand{\numharvecs}{N_V}
\newcommand{\lsoutputrate}{N_{OR}}
\newcommand{\numhares}{N_{ES}}

\NewDocumentCommand\alglsgen{s G{\oneorall} G{\numhares} G{\numharvecs} G{\lsoutputrate} g}{
\IfBooleanTF{#1}{\mathbb{MLS}}{\mathbf{ULS}} 
^{#4;#5}_{#2;#3}                           
\IfNoValueTF{#6}{}{[#6]}                      
}

\NewDocumentCommand\mapMA{g}{
\IfNoValueTF{#1}
{f_{\calM\rightarrow \calA}}
{f_{\calM\rightarrow \calA}(#1)}
}

\NewDocumentCommand\aalpha{G{k} G{i}}{\alpha_{#1,#2}}
\NewDocumentCommand\atau{G{k} G{i}}{\tau_{#1,#2}}
\NewDocumentCommand\asigma{G{k} G{i}}{\sigma_{#1,#2}}
\NewDocumentCommand\adelta{G{k} G{i}}{\delta_{#1,#2}}
\NewDocumentCommand\aeta{G{k} G{i}}{\eta_{#1,#2}}
\NewDocumentCommand\axi{G{k} G{i}}{\xi_{#1,#2}}

\NewDocumentCommand\anglereptuple{G{k} G{i}}{{\alpha_{#1,#2}, \eta_{#1,#2}, \xi_{#1,#2-1}, \delta_{#1,#2-1}}}

\NewDocumentCommand\anglereptau{m m g}{\IfNoValueTF{#3}{\bfvec{A}_{#1,#2}}{\bfvec{A}_{#1,#2}(#3)}}

\NewDocumentCommand\anglereptri{m g}{\IfNoValueTF{#2}{\bfvec{A}_{#1}} {\bfvec{A}_{#1}(#2)}}

\NewDocumentCommand\falpha{m m g}{\IfNoValueTF{#3}{f^{(\alpha)}_{(#1,#2)}}{f^{(\alpha)}_{(#1,#2)}(#3)}}
\NewDocumentCommand\fxi{m m g}   {\IfNoValueTF{#3}{f^{(\xi)}_{(#1,#2)}}{f^{(\xi)}_{(#1,#2)}(#3)}}
\NewDocumentCommand\feta{m m g}  {\IfNoValueTF{#3}{f^{(\eta)}_{(#1,#2)}}{f^{(\eta)}_{(#1,#2)}(#3)}}
\NewDocumentCommand\fdelta{m m g}{\IfNoValueTF{#3}{f^{(\delta)}_{(#1,#2)}}{f^{(\delta)}_{(#1,#2)}(#3)}}

\NewDocumentCommand\dovifunc{g}{\IfNoValueTF{#1}{\text{DOVI}_{\atau}}{\text{DOVI}_{\atau}(#1)}}
\NewDocumentCommand\dovifuncinv{g}{\IfNoValueTF{#1}{\text{DOVI}^{-1}_{\atau}}{\text{DOVI}^{-1}_{\atau}(#1)}}

\NewDocumentCommand\anglespace{g}{\IfNoValueTF{#1}{\calA}{\calA_{#1}}}
\NewDocumentCommand\anglespacevalid{g}{\IfNoValueTF{#1}{\calV}{\calV_{#1}}}
\NewDocumentCommand\anglespacesolution{g}{\IfNoValueTF{#1}{\calS}{\calS_{#1}}}
\NewDocumentCommand\anglespaceclashfree{g}{\IfNoValueTF{#1}{\cal{F}}{\cal{F}_{#1}}}

\NewDocumentCommand\Iboundmin{m m g}{
\IfNoValueTF{#3}
{I_{\tau}^{\text{min}}(\anglereptau{#1}{#2})}
{I_{\tau}^{\text{min}}(\anglereptau{#1}{#2}{#3})}}

\NewDocumentCommand\Iboundmax{m m g}{\IfNoValueTF{#3}
{I_{\tau}^{\text{max}}(\anglereptau{#1}{#2})}
{I_{\tau}^{\text{max}}(\anglereptau{#1}{#2}{#3})}}

\NewDocumentCommand\Iboundminrot{m m g}{
\IfNoValueTF{#3}
{I_{\tau \mid \delta}^{\text{min}}(\anglereptau{#1}{#2})}
{I_{\tau \mid \delta}^{\text{min}}(\anglereptau{#1}{#2}{#3})}}

\NewDocumentCommand\Iboundmaxrot{m m g}{
\IfNoValueTF{#3}
{I_{\tau \mid \delta}^{\text{max}}(\anglereptau{#1}{#2})}
{I_{\tau \mid \delta}^{\text{max}}(\anglereptau{#1}{#2}{#3})}}

\newcommand{\compovec}[2]{}

\NewDocumentCommand\univectrsli{O{i} g}{\IfNoValueTF{#2}{\bfvec{U}^{(t)}_{#1}}{\bfvec{U}^{(t)}_{#1;#2}}}
\NewDocumentCommand\rvtrsli{O{i}}{{C^{(t)}_{#1}}}
\NewDocumentCommand\vectrsli{O{i} g}{\IfNoValueTF{#2}{\bfvec{T}^{(t)}_{#1}}{\bfvec{T}^{(t)}_{#1;#2}}}
\NewDocumentCommand\homtrsl{O{i} g}{{\bfvec{\tilde{T}_{#1}}}\IfNoValueTF{#2}{}{(#2)}}

\NewDocumentCommand\univecroti{O{i} g}{\IfNoValueTF{#2}{\bfvec{U}^{(r)}_{#1}}{\bfvec{U}^{(r)}_{#1;#2}}}
\NewDocumentCommand\rvroti{O{i}}{{C^{(r)}_{#1}}}
\NewDocumentCommand\vecroti{O{i} g}{\IfNoValueTF{#2}{\bfvec{V}^{(r)}_{#1}}{\bfvec{V}^{(r)}_{#1;#2}}}
\NewDocumentCommand\univecrotisq{O{i} g}{\IfNoValueTF{#2}{\bfvec{U}_{#1}^{(r)}{^2}}{\bfvec{U}_{#1;#2}^{(r)}{^2}}}

\NewDocumentCommand\homrot{O{i} g}{{\bfvec{\tilde{R}_{#1}}}\IfNoValueTF{#2}{}{(#2)}}

\NewDocumentCommand\veczi{g}{\IfNoValueTF{#1}{\bf \hat{Z}_i}{\bf \hat{Z}_{#1}}} 
\NewDocumentCommand\vecxi{g}{\IfNoValueTF{#1}{\bf \hat{X}_i}{\bf \hat{X}_{#1}}} 
\NewDocumentCommand\vecyi{g}{\IfNoValueTF{#1}{\bf \hat{Y}_i}{\bf \hat{Y}_{#1}}} 

\NewDocumentCommand\tauip{s g g}{\IfBooleanTF{#1}{\tau^{'}_{#2;#3}}{\tau_{#2;#3}}}
\NewDocumentCommand\sigmaip{s g g}{\IfBooleanTF{#1}{\sigma^{'}_{#2;#3}}{\sigma_{#2;#3}}}

\NewDocumentCommand\Itausig{s m g}{
\IfBooleanTF{#1}{I^{'}}{I} 
_{#2}                      
\IfNoValueTF{#3}{}{(#3)}  
}

\NewDocumentCommand\Itausigrot{s m g}{
\IfBooleanTF{#1}{I_{#2\mid \delta}^{'} }{I_{#2\mid \delta} }
\IfNoValueTF{#3}{}{(#3)}
}
\NewDocumentCommand\Iset{m g}{\calI_{#1} \IfNoValueTF{#2}{}{(#2)}}
\NewDocumentCommand\Isetrot{m g}{\calI_{#1\mid\delta}  \IfNoValueTF{#2}{}{(#2)}}

\NewDocumentCommand\sigmast{g}{\IfNoValueTF{#1}{\sigma_{i-1}^{*}}{\sigma_{i-1}^{*}(#1)}}
\NewDocumentCommand\sigmastp{g}{\IfNoValueTF{#1}{\sigma_{i-1}^{**}}{\sigma_{i-1}^{**}(#1)}}

\NewDocumentCommand\taust{g}{\IfNoValueTF{#1}{\tau_{i}^{*}}{\tau_{i}^{*}(#1)}}
\NewDocumentCommand\taustp{g}{\IfNoValueTF{#1}{\tau_{i}^{**}}{\tau_{i}^{**}(#1)}}

\NewDocumentCommand\tlcdoublex{g}{\IfNoValueTF{#1}{\codec{TLCdouble}\xspace}{\codec{TLCdouble}[-x#1]\xspace}}

\newcommand{\TLCRO}{\widetilde{TLC}}
\renewcommand{\TLCRO}{TLC-RO}
\renewcommand{\TLCRO}{\overline{\calD}}

\NewDocumentCommand\ramdomdat{g}{\IfNoValueTF{#1}{\calR_{\calD}}{\calR_{\calD, #1}}}

\NewDocumentCommand\ramdomrec{g}{\IfNoValueTF{#1}{\calR_{\TLCRO}}{\calR_{\TLCRO, #1}}}

\NewDocumentCommand\NNinDB{O{\calD} m}{%
\IfNoValueTF{#1}{nn_{\calD}(#2)}{nn_{#1}(#2)}
}
\NewDocumentCommand\NNinDBC{O{\calD} m}{%
\IfNoValueTF{#1}{nn_{\calD}^{Class}(#2)}{nn_{#1}^{Class}(#2)}
}

\NewDocumentCommand\distToNN{O{\calD} m}{%
\IfNoValueTF{#1}{d_p^{(Y)}(#2)}{d_p^{(#1)}(#2)}
}

\NewDocumentCommand\potenev{O{} g}{
\IfNoValueTF{#1}{V}{V_{#1}}
\IfNoValueTF{#2}{}{(#2)}
}

\NewDocumentCommand\deltaVr{O{} g}{%
\IfNoValueTF{#1}{\Delta_r V}{\Delta_r V_{#1}}
\IfNoValueTF{#2}{}{(#2)}
}

\NewDocumentCommand\frepfun{O{p} g}{%
\IfNoValueTF{#1}{F_p}{F_{#1}}
\IfNoValueTF{#2}{}{(#2)}
}

\NewDocumentCommand\frepder{g}{%
\IfNoValueTF{#1}{{ F^{'}_p}} {{ F^{'}_p(#1)} }}

\NewDocumentCommand\frepderder{g}{%
\IfNoValueTF{#1}{{ F^{''}_p}} {{ F^{''}_p(#1)} }}

\NewDocumentCommand\signpred{g}{%
\IfNoValueTF{#1}
{\text{\textcolor{blue}{\tt Sign}}\xspace}
{\text{\textcolor{blue}{\tt Sign}}(#1)}
}

\NewDocumentCommand\ispositive{g}{%
\IfNoValueTF{#1}
{\text{\textcolor{blue}{\tt Is\_positive}}\xspace}
{\text{\textcolor{blue}{\tt Is\_positive}}(#1)}
}

\NewDocumentCommand\derivchangessign{g}{%
\IfNoValueTF{#1}
{\text{\textcolor{blue}{\tt Derivative\_changes\_sign}}\xspace}
{\text{\textcolor{blue}{\tt Derivative\_changes\_sign}}(#1)}
}

\NewDocumentCommand\intervaltoowide{g}{%
\IfNoValueTF{#1}
{\text{\textcolor{blue}{\tt Interval\_too\_wide}}}
{\text{\textcolor{blue}{\tt Interval\_too\_wide}}(#1)}
}

\NewDocumentCommand\findroot{g}{%
\IfNoValueTF{#1}
{\text{\tt \bf Find\_root}\xspace}
{\text{\tt \bf Find\_root}\xspace(#1)}
}
\NewDocumentCommand\evalFp{g}{%
\IfNoValueTF{#1}
{\text{\bf Evaluate}}
{\text{\bf Evaluate}F_p(#1)}
}
\NewDocumentCommand\evalFpp{g}{%
\IfNoValueTF{#1}
{\text{\bf Evaluate}}
{\text{\bf Evaluate}F^{'}_p(#1)}
}

\NewDocumentCommand\updateroot{g}{%
\IfNoValueTF{#1}
{\text{\tt \bf Update\_root}\xspace}
{\text{\tt \bf Update\_root}\xspace(#1)}
}

\newcommand{\procRewire}{\textsc{Rewire}\xspace}
\newcommand{\procBestParent}{\textsc{BestParent}\xspace}

\NewDocumentCommand\distCS{g}{\textsc{Dist} \IfNoValueTF{#1}{}{(#1)}}

\NewDocumentCommand\cradleModel{m g}{{\tt Cradle-{#1}\IfNoValueTF{#2}{}{-{#2}}}}
\NewDocumentCommand\twoDimModel{m g}{{\tt 2D-{#1}\IfNoValueTF{#2}{}{-{#2}}}}

\newcommand{\liesen}{\ensuremath{\mathfrak{se}_3}}

\definecolor{darkorange}{HTML}{eb8c15}

\newcommand{\Cfree}{\ensuremath{\calC_{\text{free}}}}
\newcommand{\Exp}{\textrm{Exp}}
\newcommand{\Log}{\textrm{Log}}

\newcommand{\qsampled}{q_\text{sampled}}
\newcommand{\qnew}{q_\text{new}}

\newcommand{\qfrom}{q_\text{from}}

\newcommand{\qstart}{q_0}

\newcommand{\qinfty}{q_\infty}
\newcommand{\qgoal}{q_\infty}

\newcommand{\Ksteps}{K_\text{steps}}

\newcommand{\krrt}{\ensuremath{k_\text{RRT}}\xspace}

\definecolor{DarkGreen}{RGB}{10,50,32}
\NewDocumentCommand\plangennew{O{} m m m}{\textcolor{blue}{\tt #2$^{#3}$-#4${}_{#1}$}\xspace}
\NewDocumentCommand\plangengreen{O{} m m m}{\textcolor{DarkGreen}{\tt #2$^{#3}$-#4${}_{#1}$}\xspace}
\NewDocumentCommand\plangen{O{} m m m}{\textcolor{blue}{\tt #2$^{#3}$-#4${}_{#1}$}\xspace}

\NewDocumentCommand{\rrtO}{O{}}{\plangen[#1]{RRT}{}{$\emptyset$}} 
\NewDocumentCommand{\rrtC}{O{}}{\plangen[#1]{RRT}{}{C}}
\NewDocumentCommand{\rrtSC}{O{}}{\plangen[#1]{RRT}{*}{C}}
\NewDocumentCommand{\rrtQC}{O{}}{\plangen[#1]{RRT}{Q}{C}}
\NewDocumentCommand{\rrtS}{O{}}{\plangen[#1]{RRT}{*}{$\emptyset$}}
\NewDocumentCommand{\rrtQ}{O{}}{\plangen[#1]{RRT}{Q}{$\emptyset$}}

\NewDocumentCommand{\rrtiO}{O{}}{\plangengreen[#1]{RRTI}{}{$\emptyset$}} 
\NewDocumentCommand{\rrtiC}{O{}}{\plangengreen[#1]{RRTI}{}{C}}
\NewDocumentCommand{\rrtiSC}{O{}}{\plangengreen[#1]{RRTI}{*}{C}}
\NewDocumentCommand{\rrtiQC}{O{}}{\plangengreen[#1]{RRTI}{Q}{C}}
\NewDocumentCommand{\rrtiS}{O{}}{\plangengreen[#1]{RRTI}{*}{$\emptyset$}}
\NewDocumentCommand{\rrtiQ}{O{}}{\plangengreen[#1]{RRTI}{Q}{$\emptyset$}}

\NewDocumentCommand{\rrteO}{O{}}{\plangengreen[#1]{RRTE}{}{$\emptyset$}} 
\NewDocumentCommand{\rrteC}{O{}}{\plangengreen[#1]{RRTE}{}{C}}
\NewDocumentCommand{\rrteSC}{O{}}{\plangengreen[#1]{RRTE}{*}{C}}
\NewDocumentCommand{\rrteQC}{O{}}{\plangengreen[#1]{RRTE}{Q}{C}}
\NewDocumentCommand{\rrteS}{O{}}{\plangengreen[#1]{RRTE}{*}{$\emptyset$}}
\NewDocumentCommand{\rrteQ}{O{}}{\plangengreen[#1]{RRTE}{Q}{$\emptyset$}}

\NewDocumentCommand\harO{O{}}{\plangennew[#1]{HARG}{}{$\emptyset$}}
\NewDocumentCommand\harC{O{}}{\plangennew[#1]{HARG}{}{C}}
\NewDocumentCommand\harS{O{}}{\plangennew[#1]{HARG}{*}{$\emptyset$}}
\NewDocumentCommand\harQ{O{}}{\plangennew[#1]{HARG}{Q}{$\emptyset$}}
\NewDocumentCommand\harSC{O{}}{\plangennew[#1]{HARG}{*}{C}}
\NewDocumentCommand\harQC{O{}}{\plangennew[#1]{HARG}{Q}{C}}

\NewDocumentCommand\harlO{O{}}{\plangennew[#1]{HARL}{}{$\emptyset$}}
\NewDocumentCommand\harlC{O{}}{\plangennew[#1]{HARL}{}{C}}
\NewDocumentCommand\harlS{O{}}{\plangennew[#1]{HARL}{*}{$\emptyset$}}
\NewDocumentCommand\harlQ{O{}}{\plangennew[#1]{HARL}{Q}{$\emptyset$}}
\NewDocumentCommand\harlSC{O{}}{\plangennew[#1]{HARL}{*}{C}}
\NewDocumentCommand\harlQC{O{}}{\plangennew[#1]{HARL}{Q}{C}}

\NewDocumentCommand\harkO{O{}}{\plangennew[#1]{HARk}{}{$\emptyset$}}
\NewDocumentCommand\harkC{O{}}{\plangennew[#1]{HARk}{}{C}}
\NewDocumentCommand\harkS{O{}}{\plangennew[#1]{HARk}{*}{$\emptyset$}}
\NewDocumentCommand\harkQ{O{}}{\plangennew[#1]{HARk}{Q}{$\emptyset$}}
\NewDocumentCommand\harkSC{O{}}{\plangennew[#1]{HARk}{*}{C}}
\NewDocumentCommand\harkQC{O{}}{\plangennew[#1]{HARk}{Q}{C}}

\NewDocumentCommand\harfO{O{}}{\plangennew[#1]{HARF}{}{$\emptyset$}}
\NewDocumentCommand\harfC{O{}}{\plangennew[#1]{HARF}{}{C}}
\NewDocumentCommand\harfS{O{}}{\plangennew[#1]{HARF}{*}{$\emptyset$}}
\NewDocumentCommand\harfQ{O{}}{\plangennew[#1]{HARF}{Q}{$\emptyset$}}
\NewDocumentCommand\harfSC{O{}}{\plangennew[#1]{HARF}{*}{C}}
\NewDocumentCommand\harfQC{O{}}{\plangennew[#1]{HARF}{Q}{C}}

\NewDocumentCommand\trajball{O{i} g}{
X_{#1}
\IfNoValueTF{#2}{}{\bigl(#2\bigr)}
}

\newcommand{\inbluetext}[1]{\bblue{\codecx{#1}}\xspace}
\newcommand\HAR[1][]{\bblue{\codecx{HARG}#1}\xspace}
\renewcommand\RRT[1][]{\bblue{\codecx{RRT}#1}\xspace}
\newcommand\har{\HAR}
\newcommand\harg{\HAR}
\newcommand\harf{\bblue{\codecx{HARF}}\xspace}
\newcommand\harl{\bblue{\codecx{HARL}}\xspace}
\renewcommand\rrt{\RRT}

\newcommand{\harstar}{\inbluetext{HARG${}^*$}}

\newcommand{\rrtstar}{\inbluetext{RRT${}^*$}}
\newcommand{\harconnect}{\inbluetext{HARG-C}}
\newcommand{\harlconnect}{\inbluetext{HARL-C}}
\newcommand{\rrtconnect}{\inbluetext{RRT-C}}

\newcommand{\quickrrtstar}{\inbluetext{RRT${}^Q$}}
\newcommand{\harstarconnect}{\inbluetext{HARG${}^*$-C}}

\newcommand{\rrtstarconnect}{\inbluetext{RRT${}^*$-C}}

\newcommand{\quickharlstarconnect}{\inbluetext{HARL${}^Q$-C}}
\newcommand{\quickrrtstarconnect}{\inbluetext{RRT${}^Q$-C}}

\newcommand{\harfconnect}{\inbluetext{HARF-C}}
\newcommand{\quickharfstar}{\inbluetext{HARF${}^Q$}}

\newcommand{\quickharfstarconnect}{\inbluetext{HARF${}^Q$-C}}

\newcommand{\pr}[1]{\,{\scriptsize (#1)}}

\newcommand{\distcol}{d_\text{col.}}
\newcommand{\timecol}{t_\text{col.}}

\NewDocumentCommand\SteerRRT{g}{\bblue{\codecx{Steer\_RRT}}\IfNoValueTF{#1}{}{(#1)}}

\NewDocumentCommand\rootdist{g}{\bblue{\codecx{Dist\_to\_root}}\IfNoValueTF{#1}{}{(#1)}}
\NewDocumentCommand\CollisionTime{g}{\bblue{\text{Collision\_time\_D}}\IfNoValueTF{#1}{}{(#1)}}

\NewDocumentCommand\colltimeD{g} {t^c  \IfNoValueTF{#1}{} {(#1)}}

\NewDocumentCommand\bestparent{g}{\bblue{\codecx{Best\_parent}}\IfNoValueTF{#1}{}{(#1)}}
\NewDocumentCommand\bestparentRed{g}{\rred{\codecx{Best\_parent}}\IfNoValueTF{#1}{}{(#1)}}

\NewDocumentCommand\rewire{g}{\bblue{\codecx{Rewire}}\IfNoValueTF{#1}{}{(#1)}}
\NewDocumentCommand\rewireRed{g}{\rred{\codecx{Rewire}}\IfNoValueTF{#1}{}{(#1)}}

\NewDocumentCommand\stepkplus{G{k}}{D^{+}_{#1}} 
\NewDocumentCommand\stepkminus{G{k}}{D^{-}_{#1}} 

\NewDocumentCommand\HCA{g} {\bblue{\codecx{HCA}}  \IfNoValueTF{#1}{} {(#1)}}
\NewDocumentCommand\CCD{g} {\bblue{\codecx{CCD}}  \IfNoValueTF{#1}{} {(#1)}}

\newcommand{\urrt}{\hat{u}_\textrm{RRT}}

\newcommand{\headerCubicles}{
\begin{qcolgray}
\textbf{Cubicles} \enspace {\small Box $405\times200\times200$  \;|\;
$\dist(\qstart,\qgoal)=205$ \;|\;
$\delta_\text{RRT}=70$, $\delta_\text{HARF}=80$, $\delta_\text{HARL}=\delta_\text{HARG}=110$ \;|\; 
$1\,\text{min}$ runs}
\end{qcolgray}
}
\newcommand{\headerTDSmall}{
\begin{qcolgray}
\textbf{\twoDimModel{small-obstacle}} \enspace {\small Box $100\times100$ \;|\;
$\dist(\qstart,\qgoal)=45.5$ \;|\;
$\delta_\text{RRT}=\delta_\text{HARF}=\delta_\text{HARL}=\delta_\text{HARG}=10$ \;|\; 
$1\,\text{min}$ runs}
\end{qcolgray}
}
\newcommand{\headerTDLarge}{
\begin{qcolgray}
\textbf{\twoDimModel{big-obstacle}} \enspace {\small Box $100\times100$ \;|\;
$\dist(\qstart,\qgoal)=45.5$ \;|\;
$\delta_\text{RRT}=12$, $\delta_\text{HARF}=\delta_\text{HARL}=\delta_\text{HARG}=7$ \;|\; 
$1\,\text{min}$ runs}
\end{qcolgray}
}

\newcommand{\headerCradleNine}{
\begin{qcolgray}
\cradleModel{9} \enspace {\small Box $228\times228\times220$ \;|\;
$\dist(\qstart,\qgoal)=30.0$ \;|\; 
$\delta_\text{RRT}=\delta_\text{HARF}=\delta_\text{HARL}=\delta_\text{HARG}=15$ \;|\; 
$1\,\text{min}$ runs}
\end{qcolgray}
}
\newcommand{\headerCradleTFcoarse}{
\begin{qcolgray}
\cradleModel{25}{coarse} \enspace {\small \small Box $280\times280\times220$ \;|\;
$\dist(\qstart,\qgoal)=60.3$ \;|\; 
$\delta_\text{HARG} = 25$, $\delta_\text{RRT} = \delta_\text{HARF}=35$, $\delta_\text{HARL}=40$ \;|\; 
$1\,\text{min}$ runs}
\end{qcolgray}
}
\newcommand{\headerCradleTFtight}{
\begin{qcolgray}
\cradleModel{25}{tight} \enspace {\small Box $236\times236\times220$ \;|\; 
$\dist(\qstart,\qgoal)=35.4$ \;|\; 
$\delta_\text{HARL}=\delta_\text{HARG}=25$ \;|\; $5\,\text{min}$ runs}
\end{qcolgray}
}
\newcommand{\headerCradleTFtightSparse}{
\begin{qcolgray}
\cradleModel{25}{tight} \enspace {\small Box $236\times236\times220$ \;|\; 
$\dist(\qstart,\qgoal)=35.4$ \;|\; 
$\delta_{RRT}=10, \delta_\text{HARF}=\delta_\text{HARL}=\delta_\text{HARG}=25$  \;|\; $1\,\text{min}$ runs}
\end{qcolgray}
}

\newcommand{\headerCradleSF}{
\begin{qcolgray}
\cradleModel{64} \enspace {\small Box $224\times224\times224$ \;|\;
$\dist(\qstart,\qgoal)=72.7$ \;|\;
$\delta_\text{HARF}=45, \delta_\text{RRT}=50$ \;|\; $5\,\text{min}$ runs}
\end{qcolgray}
}

\newif\ifSEPARATECAPTIONS
\SEPARATECAPTIONSfalse 

\title{Motion planning in high dimensional spaces hybridizing RRT and HAR via position-direction decoupling}

\author{\frederic Cazals\thanks{\ucainria; email: Frederic.Cazals@inria.fr}~ and Nelson Feyeux\thanks{\ucainria. Email: Nelson.a.Feyeux@inria.fr}}

\newcommand{\wdir}{./}
\begin{document}
\maketitle

\begin{abstract}
The exploration of high-dimensional spaces remains a challenging problem,
in particular in the presence of narrow passages and small clearances.
We propose novel sampling-based path-planning methods for
high-dimensional spaces combining Rapidly-exploring Random Trees (RRT)
and Hit-and-Run (HAR) random walks by decoupling the point being
extended from the direction of extension.
We also show that RRT and HAR appear as special cases of a generic
algorithm coupling the biases used for the point and direction
extension, respectively.
We further study a sparse-move strategy in
which only a fraction $p_r$ of the robots is moved at each step,
helping both RRT and the proposed HAR algorithms handle cluttered
instances.
Tests are presented for two families of models: classical {\em piano
mover} problems in 3D, and complex molecular systems involving tens of rigid
domains moving relatively to one another -- the latter viewed as
independent robots exploring the motion space $\SEn[N]$.
Within seconds on a standard laptop, our algorithms solve instances with
up to 64 robots and 384 degrees of freedom.
We conclude by suggesting one of our methods, HARF, as the method of
choice for complex multi-robot planning problems, being up to two
orders of magnitude faster than the classical RRT moving all robots at
each step--when it succeeds at all--, and still up to $2.4$ fold faster
on most instances when both use their best $p_r$.
\end{abstract}

\ifIEEE

\begin{IEEEkeywords}
motion planning, multi-robot planning, 
high dimensional spaces, rapidly exploring random trees, hit-and-run, molecular motions.
\end{IEEEkeywords}

\else

\noindent{\bf Keywords:} 
motion planning, multi-robot planning, 
high dimensional spaces, rapidly exploring random trees, hit-and-run, molecular motions.
\fi

\section{Introduction}

\subsection{Previous work}

\paragraph{Path planning, exact versus randomized methods.}
Path planning is concerned with the problem of planning robot
admissible moves to perform prescribed tasks, in natural or designed
environments. Given the configuration space $\calC$ of the robot(s),
the objective  is to compute one or several paths in the {\em free space}, 
\ie the subset of $\calC$ that is free of obstacles and collisions.
From an algorithmic perspective, it is useful to distinguish between
different classes of motion planning algorithms and their associated
guarantees~\cite{canny1988complexity,latombe1991robot,lavalle2006planning}. At
a high level, when robots and their environments (i.e., obstacles) are
modeled as geometric entities, one can differentiate between exact and
randomized methods. Exact approaches provide certified, complete
solutions but typically suffer from poor scalability. In contrast,
randomized methods rely on sampling techniques and generally offer
only asymptotic guarantees; nevertheless, they often perform
remarkably well in practice. Although a comprehensive survey of these
approaches is beyond the scope of this work, several of their key
components are directly relevant to our study.

In the realm of randomized algorithms, two landmarks are Probabilistic
Roadmaps (PRM) \cite{kavraki1994randomized,overmars1994probablisitic}
and Rapidly-exploring Random
Trees (RRT)~\cite{lavalle1998rapidly,kuffner2000rrt} -- see also
Section \ref{sec:rrt} for a review of recent developments.
The fundamental idea of PRM is to build a graph in the free space, and
to use it to perform queries between  pairs of start and goal points. The difficulty is
naturally to provide a comprehensive coverage of the free space, discovering in
particular narrow passages and unexplored regions.  
In the spirit of {\em tabu search}, the rapidly-exploring random trees
(RRT) method uses the {\em Voronoi bias} to {\em extend} as a priority
nodes having few neighbors and large Voronoi regions.
While maintaining the Voronoi diagram in a high dimensional
space is practically infeasible, RRT solely requires a dynamic data
structure for nearest neighbor queries.  The node extension requires
the knowledge of all the samples already produced, so that the 
exploration  algorithm is not a Markov chain.

In the realm of exact algorithms, of particular interest for our work
are visibility-based methods.  Given a region containing convex
obstacles, the visibility complex decomposes free rays (\ie rays lying
in the complement of obstacles) into a set of
cells~\cite{pocchiola1993visibility}.
Free rays are of importance in the context of planning as they encode
{\em far-reaching} rather than local connections.  While the
maintenance of such partitions is also beyond reach in high
dimensional spaces, difficulties have been partially overcome by
combining visibility maps and PRM~\cite{simeon2000visibility}.

In our work, visibility is actually connected to
Hit-and-Run random walks, which we review now.

\paragraph{Random walks and Markov chains to compute the volume of convex bodies.}
There is a remarkable connection between algorithms used in robotics
and those used to compute volumes in high dimensional spaces, and the volume
of polytopes in particular. 
While there is no fast deterministic algorithm to compute the volume
of a convex body reliably~\cite{bollobas1997volume}, randomized
algorithms based on Markov chains have been developed and gradually optimized.
A key step in this realm has been the invention of the Hit-and-Run
(HAR) algorithm to identify redundant constraints in linear
programs~\cite{berbee1987hit}. As the name suggests, given a point in
a polytope, one samples a random direction, finds the nearest polytope
hyperplane in that direction, generates a sample on that segment, and iterates.
This generative process defines a Markov chain whose mixing time
characterizes the speed at which the samples approximate a target
distribution, typically a log-concave distribution, and the samples
generated can be used to estimate volumes
~\cite{kannan1997random,lovasz1999hit,cousins2016practical}.
Other random walks, sampling points in a ball~\cite{lovasz1999faster}
or using jump processes have also been studied~\cite{chevallier2022efficient}.
Finally, we also note that HAR has been used to sample and explore non-convex
spaces~\cite{abbasi2017hit}.

\paragraph{Applications in structural biology.} An important application of planning
algorithms is structural bioinformatics. The function of proteins
is determined by their structure and dynamics~\cite{dill2010molecular}.
While \alphafold provided definitive progress for the prediction of
static structures~\cite{jumper2021highly}, understanding the dynamics
remains the overarching goal, in two settings. The first is to
characterize subtle entropic phenomena as dictated by statistical
physics~\cite{schmidt2013preconfiguration}, to delineate
thermodynamics and kinetics~\cite{lelievre2010free}.
The second is to chart the conformational space of a molecule, ideally
bridging the gap to free-energy calculations. Algorithms using the
aforementioned random walks proved highly effective in this
context. One may cite the combination of RRT with a Metropolis-like test
to explore (potential) energy landscapes~\cite{jaillet2011randomized},
and the application of HAR in suitable angular spaces to sample
conformations of long backbone loops~\cite{odonnell2023enhanced}.

In fact, the current work is motivated by the exploration of the
relative motions of rigid protein domains, under the guidance of flexible
linkers~\cite{simsir2021studying}. We call such models {\em
  molecular cradles}, in reference to Newton's cradles. Phrased
differently, planning a cradle is akin to planning the relative motion
of rigid robots, each enjoying six degrees of freedom. While physical
models have been proposed for a single
protein~\cite{koehl2024minactionpath2}, we are not aware of any
attempt to collectively move a large cohort of rigid domains.
In this work, we forget about linkers whose conformations can be
sampled by combining HAR and loop closure techniques -- see
\cite{odonnell2023geometric,odonnell2023enhanced} and references
therein. Abusing terminology, we simply refer to the domains as
individual molecules.

\paragraph{Multi-robot path planning.} Planning the collective motion of protein domains is an instance of multi‑robot path planning.
Multi-robot planning is notoriously
challenging~\cite{erdmann1987multiple,van2009centralized} and three
main families of approaches have been developed, namely decoupled 
techniques, centralized approaches, and sampling-based approaches, see
\cite{shome2020drrt} and references therein.
A common paradigm consists in first planning robot motions
independently using individual roadmaps, and subsequently handling
inter-robot collisions through search in the Cartesian product of the
resulting roadmaps~\cite{wagner2011m,solovey2016finding}.
A specific order based on prioritization may be used~\cite{van2005prioritized}.
The MAPF (multi-agent path finding) and especially CBS (conflict-based
search) \cite{barer2014suboptimal} are different graph algorithms for
multi-robot path planning, also constructing independent
individual paths prior to considering different branching scenarios to
handle collisions.
In general, the combinatorial complexity of the product space is
prohibitive, which motivated the exploitation of an implicit encoding of
its structure~\cite{shome2020drrt}.
 The complexity of the initial space also motivated the design of
 dimensionality-reduction-based planners.  In a nutshell, the recent
 {\em fiber bundle}-based approach stratifies the search space, solves
a simpler planning problem, and tackles the full problem by
 searching near the planned low-dimensional path~\cite{orthey2024multilevel}.
We also note that generative AI methods have recently been proposed 
for planning~\cite{shaoul2025multi}.

\subsection{Contributions}

As illustrated by recent previous work, there are two
orthogonal and actually complementary ways to tackle complex planning
problems: by exploiting decompositions of the configuration
space--what the fiber bundle approach does, or by designing more
powerful planners able to cope with the full space.

Our work falls into the latter category and exploits a fundamental
difference in how RRT and HAR work: HAR defines a Markov chain, whereas RRT
exploits the whole sampling history through its {\em Voronoi bias}.
Our starting point is that the two methods are complementary: RRT
determines which node to extend, while HAR determines the direction in
which to extend it.  Decoupling these two decisions yields a single
family of planners.

\begin{itemize}
\item {\bf An RRT algorithm using HAR for the extension step.}
We construct a family of RRT algorithms, denoted HARG
  (for global), HARL (for local) and HARF (for fixed distance
  steering), that differ in the choice of the new point $\qnew$ 
  added to the tree at each step, and allow more flexibility in regions with low
  clearance.
Moreover, RRT and HAR can be viewed as special cases of a more generic
algorithm coupling the positional and directional samplings via a von
Mises--Fisher distribution (Section~\ref{sec:interpol}). 

\item {\bf Joint multi-robot planning without product structure.} We
  instantiate the framework on $\SEn$ and $\SEn[N]$ and plan $N$
  robots \emph{jointly} in the product configuration space,
  without building or searching any explicit product of individual
  roadmaps. We introduce two modes termed {\em mass start} and {\em rolling start}
tuning the concomitance of robots moves.
On multi-robot problems, our methods solve instances with up to 64
robots (384 degrees of freedom) within seconds on a standard laptop.
\end{itemize}

\subsection{Notations}

The bounding box of $\Rd{2}$ or $\Rd{3}$ containing a model is denoted $\calR$.
The unit sphere in dimension $d$ is denoted $\Sd{d-1}$, and a scaled version of it, that
is, a sphere of radius $\rho$, is denoted $\rho\ \Sd{d-1}$.
The special Euclidean (resp. rotation) group is denoted $\SEn{3}$ (resp. $\SOn{3}$),
and the associated Lie algebras are denoted $\SEnlie{3}$ (resp. $\SOnlie{3}$).

\section{Motion planning with Hit-and-Run sampling}

\subsection{General setup}
\label{sec:generalsetup}

We consider sampling-based planners that require simple operations on the configuration space $\calC$:
\begin{enumeratep}
\item sampling configurations;
\item ranking neighbors of a sample using  a distance metric;
\item defining a continuous path between two samples and a function returning intermediate configurations;
\item checking collisions on a continuous path.
\end{enumeratep}

\paragraph{Free space and segments.}
The {\em free space} $\Cfree\subset \calC$ corresponds to
configurations where the robots do not self-intersect nor intersect
obstacles.  For practical reasons, physical boundaries are added to
$\Cfree$ to restrain the space in which the robots move.  
For a configuration $q\in \Cfree$, the {\em clearance} is
defined as the minimum distance from $q$ to the boundary of $\Cfree$.

Without any further assumption, we connect two points $q_0$ and $q_1$
in $\calC$ by the {\em curved segment}, denoted
$[q_0, q_1]$, connecting them. We may see this segment as a
parameterized curve $q(t)$ with $t\in[0, 1]$, with $q(0)=q_0$ and
$q(1)=q_1$. Naturally, the practical analytical expression of $q(t)$ depends on
the nature of $\calC$ and the interpolation scheme selected.

When considering a set of $N$ independent robots, the joint
configuration space is $\calC = \calC_1 \times \ldots \times \calC_N$.

\paragraph{Collision detection.}
Robots and obstacles are generally described by a collection of simple
primitives, triangles or balls for instance.
A {\em collision predicate} is a function stating whether $[q_0, q_1]$
is collision-free.  Such predicates are evaluated using collision
detection algorithms \cite{schwarzer2005adaptive, tang2009CA} which
decompose the $[0, 1]$ time interval into smaller subintervals until
they are all proven to be collision-free, or until a collision is
found.
In this work, such predicates are based on distance calculations, 
and two primitives are declared in collision if their distance is less than
a small threshold $\distcol$ (\eg $\sim1\%$ of the robot size).

The common
method to speed-up the collision detection is to represent the bodies as
a Bounding Volume Hierarchy \cite{linmanocha2004collision}. For multiple robots,
collision detection can be performed efficiently by prioritizing the pairs
of nodes in the different Bounding Volume Hierarchies that are more
likely to collide \cite{tang2013HCA}.

\paragraph{Planning problems.}
Consider initial and final configurations $q_0$ and $\qinfty$ in
$\Cfree$. A {\em planning} problem consists in finding a path
connecting $q_0$ and $\qinfty$ in $\Cfree$. A valid path is typically
defined by a sequence of points $( q_0, q_1, \dots, \qinfty )$, such
that each curved segment $[\qij{i}, \qij{i+1}]$ is collision-free.

The existence of paths raises two algorithmic problems of interest in this work:
\begin{problem}[Geometric path]
\label{pb:geometric-path}
Find any path connecting $q_0$ and $\qinfty$.  
\end{problem}

\begin{problem}[Shortest geometric path]
\label{pb:shortest-geometric-path}
Find a/the shortest path connecting $q_0$ and $\qinfty$.  
\end{problem}

\subsection{The RRT planning algorithm}
\label{sec:rrt}

The Rapidly exploring Random Tree (RRT) algorithm
~\cite{lavalle1998rapidly,kuffner2000rrt} connects two points $q_0$
and $\qgoal$ in $\Cfree$ by creating a tree such that (i) each node
is a random sample in $\Cfree$, and (ii) each edge is a collision-free
path.

At every iteration, a new random point $\qsampled$ in $\calC$
is passed to the {\bf steering procedure} defining $\qnew$:
if $\qsampled \in B(\qfrom, \delta)$, $\qnew$ is $\qsampled$; else
$\qnew$ is the projection of $\qsampled$ onto the sphere $S(\qfrom, \delta)$. 
Then if the curved segment $[\qfrom, \qnew]$ is
collision-free, the node $\qnew$ and the edge $[\qfrom, \qnew]$ are added to the tree.
The iteration is repeated until a path is found, or it is continued to
refine the already found path.

Many heuristics exist and can be used to improve the algorithm. The
{\em Connect} heuristic \cite{lavalle1998rapidly,kuffner2000rrt},
denoted \rrtconnect, uses two trees anchored at $\qstart$ and
$\qgoal$ to connect faster.  The {\em Star} and {\em Quick}
algorithms, denoted \rrtstar and \quickrrtstar, rewire the
tree~\cite{karaman2011anytime,jeong2019quick} using two procedures
called \procBestParent and \procRewire. This rewiring shortens
paths. The {\em Star-Connect} and {\em Quick-Connect} methods, denoted
\rrtstarconnect and \quickrrtstarconnect, combine the previous
ones~\cite{jordan2013optimal} by rewiring the two trees.  Finally, the
{\em Informed} heuristic~\cite{gammell2014informed} prunes samples
that are too far to improve the current found path: if $L$ is the
length of the currently found path, it discards any sample not in
$\calE_L=\{q\in\calC \colon \distCS{q, \qstart} + \distCS{q, \qgoal}
\le L \}$.

\paragraph{Path shortening.} After the algorithm is stopped, the path found (if any)
is optimized by repeatedly running the following two methods within a fixed wall-clock time budget.
The first uses simple short-cutting: 
two points are drawn at random on the path
and the sub-path is replaced by the direct connection if collision-free. 
The second is the more effective procedure \textit{partial shortcutting}~\cite{geraerts2007creating} which
shortcuts the path of one random robot at a time.

These procedures greatly reduce the total length of the path while conserving a valid path.

\subsection{Decoupling positions and directions in RRT with HAR}
\label{sec:har-local-global}
 
RRT and HAR differ in two fundamental ways.  First, while HAR defines a
Markov chain, RRT does not since the entire history of the sampling
process is used.
Second, RRT places the focus on the point being extended -- the
direction of the extension being fixed upon identifying $\qfrom$,
while HAR uses randomness on directions.

It is therefore natural to hybridize these two methods by leaving the
choice of the point extended to RRT, and that of the direction to HAR.
In doing so, a rapidly-exploring random tree is built as follows:
\begin{itemize}
\item (RRT feature) The Voronoi bias is used to select the point extended
while promoting unexplored regions. 

\item (HAR feature) The direction $\theta$ of the candidate new node is chosen at random, which
offers three {\em steering strategies}:
\begin{itemize}
\item Globally (HARG): the segment stemming from $\qfrom$ in the direction
  $\theta$ is traced until the first collision point, or until it
  leaves the domain. The new node $\qnew$ is randomly drawn on this segment.

\item Locally (HARL): the new node $\qnew$ is chosen randomly on the segment of
  length $\delta$ stemming from $\qfrom$, in the direction $\theta$.
  A boundary check and a collision check are performed afterwards to
  discard the new node in case of a positive result.
  
\item Fixed (HARF): the new node $\qnew$ is chosen at distance exactly
  $\delta$ of $\qfrom$ in the random direction $\theta$.  A boundary check
  and a collision check are performed afterwards to discard the new
  node in case of a positive result.
\end{itemize}
\end{itemize}
We generically denote the resulting algorithms HARG, HARL and HARF
(Algorithm \ref{alg:RRT-variants}).
Beyond these steering procedures, the algorithms match RRT.
After a new sample $\qnew$ is correctly drawn, it is stored in a tree
which is used to connect $\qstart$ and $\qgoal$.

\begin{algorithm*}[htb]
\begin{minipage}[t]{.5\linewidth}\vspace{0pt}
\begin{algorithmic}[1]
\Require Input: $\qstart, \qgoal \in \Cfree$
\Require Hyper-parameters: $\delta>0$, $\kappa\ge 0, p_r \in (0,1]$
\Procedure{\codecx{RRT}\,/\,\codecx{HARL}\,/\,\codecx{HARF}\,/\,\codecx{HARG}}{}
\State $\calT.\text{Init}(q_0)$
\While{True}
\State $\qsampled \leftarrow \textsc{randomConfig()}$
\State $\qfrom \leftarrow \calT.\text{nearest}(\qsampled)$
\State $\qnew \leftarrow \langle\text{variant-specific step, see right}\rangle$ \label{line:variant-step}

\State //Rolling start mode: immobilization of a robot
\For{each robot $i$}
\State With probability $1-p_r$: $\qnew^i \leftarrow \qfrom^i$
\EndFor
\If{$[\qfrom,\qnew]$ not in $\Cfree$}
\State \textbf{continue}
\EndIf
\State $\calT.\text{addVertex}(\qnew)$
\State $\calT.\text{addEdge}(\qfrom \rightarrow \qnew)$
\If{$\textsc{CollisionFree}(\qnew,\qgoal)$}
\State $\calT.\text{addEdge}(\qnew \rightarrow \qgoal)$
\EndIf
\EndWhile
\EndProcedure
\end{algorithmic}
\end{minipage}\hfill
\begin{minipage}[t]{.5\linewidth}\vspace{0pt}
\rrt, \harg, \harl and \harf match the template algorithm on the left and
differ only in line~\ref{line:variant-step}, the sampling of $\qnew$,
which becomes, with $\urrt$ the direction from $\qfrom$ to $\qsampled$:

\smallskip
\rrt: \begin{algorithmic}[1]
\State $\qnew \leftarrow \textsc{Steer}(\qfrom, \qsampled, \delta)$
\end{algorithmic}

\harg: \begin{algorithmic}[1]
\State $\theta \sim \calU(\Sd{d-1})$ if $\kappa=0$ else $\theta \sim \vonMF[\urrt, \kappa]$
\State $\timecol \leftarrow \textsc{TimeOfCollision}(\qfrom, \theta)$
\State $\qnew \sim \calU([\qfrom, \qfrom + \timecol\,\theta])$
\end{algorithmic}

\harl: \begin{algorithmic}[1]
\State $\theta \sim \calU(\Sd{d-1})$ if $\kappa=0$ else $\theta \sim \vonMF[\urrt,\kappa]$
\State $\qnew \sim \calU([\qfrom, \qfrom + \delta \theta])$
\end{algorithmic}

\harf: \begin{algorithmic}[1]
\State $\theta \sim \calU(\Sd{d-1})$ if $\kappa=0$ else $\theta \sim\vonMF[\urrt,\kappa]$
\State $\qnew \leftarrow \qfrom + \delta \theta$
\end{algorithmic}

\end{minipage}
\caption{{\bf Generic sampler yielding \rrt, \harg, \harl and \harf.}
The four algorithms differ in the way $\qnew$ is sampled (line~\ref{line:variant-step}):
\rrt steers a far-away sample onto $S(\qfrom,\delta)$;
\harg draws uniformly on the ray up to the first collision point;
\harl draws uniformly on a ray of length $\delta$;
\harf jumps a fixed step $\delta$ along a random direction.
$\vonMF$ refers to a von Mises-Fisher distribution on the unit sphere $\Sd$.
The vector $\urrt$ is the normalized vector $\qfrom$ to $\qsampled$. 
}
\label{alg:RRT-variants}
\end{algorithm*}

\paragraph{Decoupling the position and direction helps near obstacles.}
The benefits of position-direction decoupling 
are best understood by inspecting the situation in low clearance regions
 (Fig.~\ref{fig:steering-procedures}).
RRT pulls a far-away sample onto the sphere $S(\qfrom,\delta)$; only
the portion of that sphere lying in \Cfree is productive, so that as
soon as $\delta$ exceeds the local clearance most samples are
rejected.  In particular, in high dimensional spaces, mass
concentration phenomena~\cite{blum2020foundationsdatascience} are
  such that $\qsampled$ is far away from $\qfrom$, so that in the presence of obstacles,
steering steps are likely to be unproductive.
HARG instead traces a maximal-clearance ray along a random
direction: \emph{every} direction yields a productive segment, and
$\qnew$ may land arbitrarily far from $\qfrom$. HARL restricts that
ray to length $\delta$: its productive region coincides with that of
RRT, but a collision-free $\qnew$ is more likely because the whole
segment and not only its endpoint (as in RRT) is available. HARF places $\qnew$
at the fixed distance $\delta$, recovering RRT's step length while
keeping HAR's random direction.
The challenge posed by collisions is further exacerbated by the
dimensionality of the problem, since a sampled configuration of $N$
robots is considered to be in collision even if a single pair of
robots collides.

As shown in Experiments and consistent with the previous description, 
the HAR-derived methods manage to sample the free configuration space 
with small clearance more easily than the RRT algorithms.

\figBegins
\begin{center}
\begin{tabular}{cc}
\includegraphics[width=0.33\textwidth]{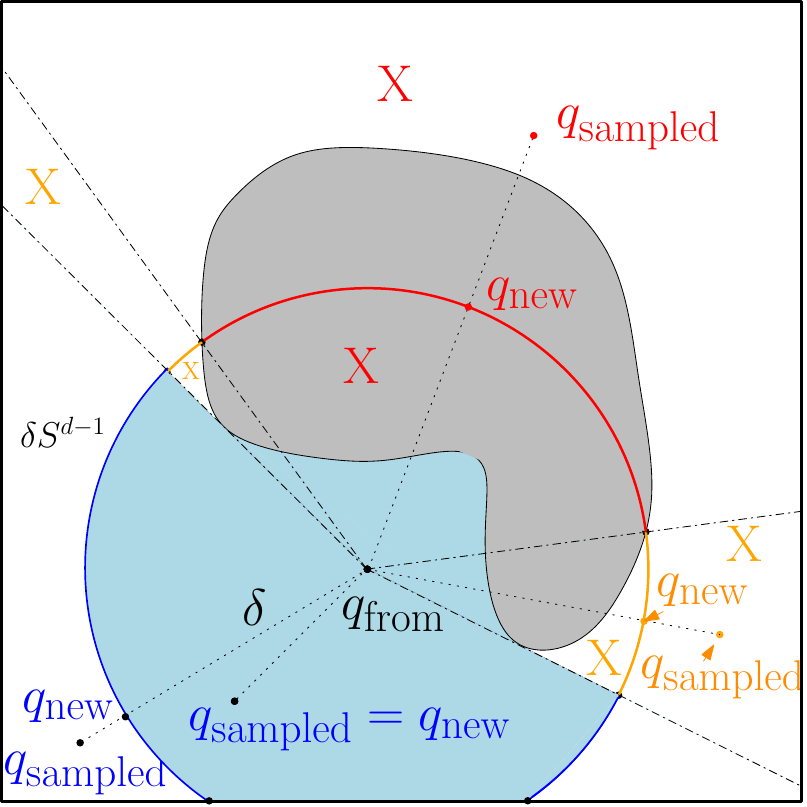} &
\includegraphics[width=0.33\textwidth]{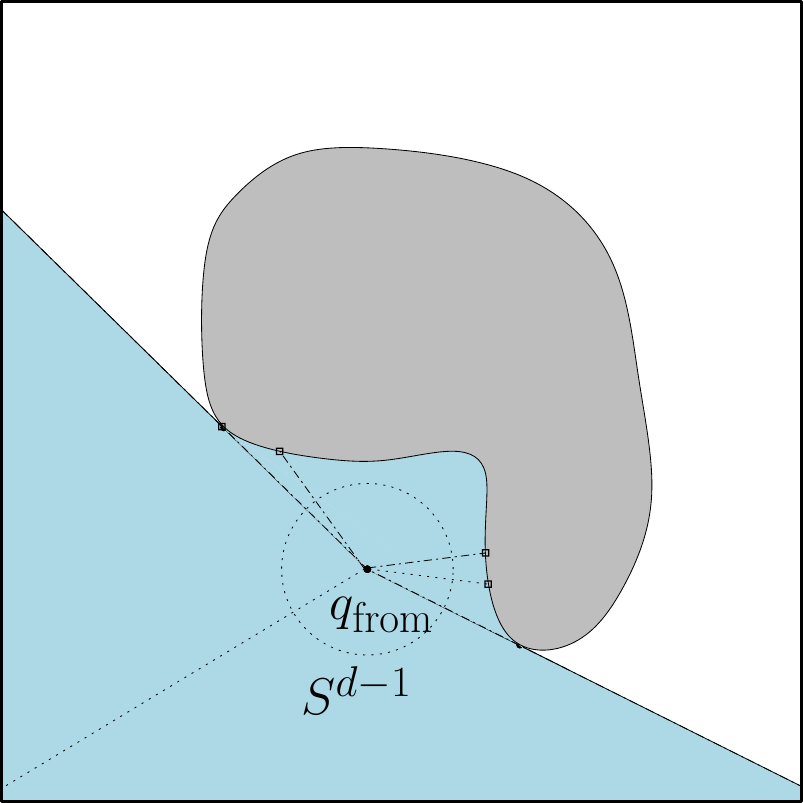}\\
RRT & Hit-and-Run global (HARG)\\
\includegraphics[width=0.33\textwidth]{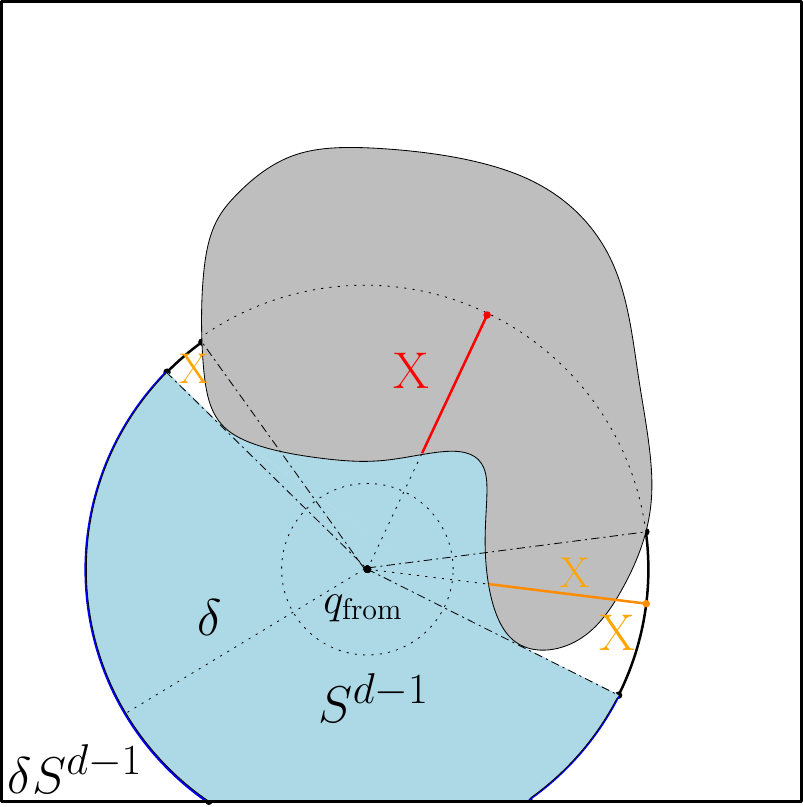}&
\includegraphics[width=0.33\textwidth]{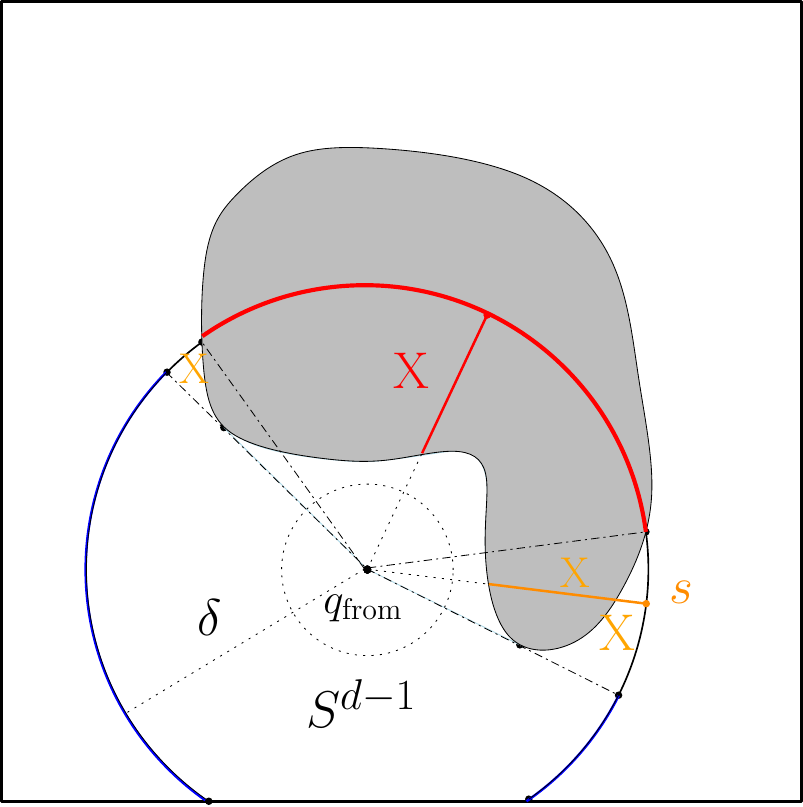}\\
Hit-and-Run local (HARL) & Hit-and-Run fixed (HARF) \\
\end{tabular}
\end{center}
\caption{
{\bf Steering procedures near obstacles: RRT vs HARG vs HARL vs HARF.}
{\bf (RRT)} Only those samples $\qsampled$ in the blue region
or projecting into the blue circle arcs (complementary of the orange and red ones)
are productive.
{\bf (Hit-and-Run global, HARG)} In any direction, the  ray centered at $\qfrom$ yields
a productive (blue) segment on which a productive sample $\qnew$ can be obtained.
{\bf (Hit-and-Run local, HARL)} 
The ray centered at $\qfrom$ of length $\delta$ on which $\qnew$ is sampled
also has a productive region identical to that of RRT. However, the likelihood to obtain
 a point in this region is higher than that of RRT.
{\bf (Hit-and-Run fixed, HARF)} 
The new node $\qnew$ is chosen at distance exactly
  $\delta$ of $\qfrom$ in the random direction $\theta$.
}
\label{fig:steering-procedures} 
\figEnds

\subsection{Recovering RRT and HAR strategies by biasing the extension direction}
\label{sec:interpol}

HAR draws a random extension direction while RRT forces
the move into the normalized direction $\urrt$ of $\qfrom$ to $\qsampled$.
These strategies have pros
and cons: if the space is clear, moving forward as in RRT is
beneficial; but in case of small clearance, moving backward might be
compulsory and the randomization in HAR helps.

Both strategies are actually recovered by biasing the choice of the
direction to be close to the vector $\urrt$.  This is achieved by
sampling the direction on the unit sphere $\Sd$ using
a von Mises-Fisher distribution $\vonMF[\urrt, \kappa]$ whose density
reads--with $\kappa\geq 0$ the concentration coefficient:
\begin{equation} 
\label{eq:vonMF}
\vonMF[\urrt, \kappa]{u}  \propto \exp(\kappa \, \latrans{u}\!\urrt).
\end{equation}
The larger $\kappa$, the more concentrated the distribution around
$\urrt$. For $\kappa=0$, one recovers a uniform distribution.

\medskip

Sampling the direction with the von Mises-Fisher distribution also
yields an interesting coupling between the positional and
directional biases: the former depends on the Voronoi bias, the
latter depends on the former via the concentration parameter.

\subsection{RRT improvements to HAR algorithms}

Since HAR samplers just modify the way $\qnew$ is sampled, the usual
improvements brought to RRT seen in Section~\ref{sec:rrt} also apply.  The \textit{Star} and
\textit{Quick} heuristics with the \procBestParent and \procRewire
procedures yield \rrtstar and \quickrrtstar, and they also work for
\harg, \harl and \harf, yielding for instance \harstar or
\quickharfstar.  The \textit{Connect} heuristic operating on two trees
instead of one also works, yielding \harconnect, \harlconnect and
\harfconnect.  Combining both the \textit{Connect} and
\textit{Star}/\textit{Quick} heuristics is possible, yielding for
instance \harstarconnect or \quickharlstarconnect.

Finally, the \textit{Informed} heuristic ensuring
that samples belong to the ellipsoidal region $\calE_L$ 
also applies.


\subsection{Coordinating the robots: the moving fraction $p_r$}
\label{sec:pr}

Planning $N$ robots over $\calC=\calC_1\times\dots\times\calC_N$ requires deciding, at each
extension, which robots actually move.  The difficulty of a planning
problem can be toned down by striking a balance between the {\em time
  trial} mode where robots move sequentially, and the {\em mass start}
mode where all robots move concomitantly.
To do so, we introduce a {\em rolling start} mode, where at each step,
a robot moves with a probability $p_r\in(0,1]$.  A non-moving robot becomes a static obstacle.  
We note that this adjustment is in the spirit of
dimensionality-reduction planners which aim at solving simpler
problems first~\cite{orthey2024multilevel, van2005prioritized}.

In all cases every tree edge is collision-free by construction, so the final tree
encodes valid joint paths. We study the impact of $p_r$ on the RRT and HAR
algorithms in the experiments.

\section{Instantiations: planning in $\SEn$ and $\SEn[N]$}
\label{sec:planning-SE}

We detail the primitives required to plan a system  whose conformational space is  a product of $\SEn{3}$.

\subsection{Rigid motions in $\SEn$}

Given a single rigid object, the {\em piano mover problem} consists in
finding a sequence of valid rigid $\SEn$ transformations
(translation/rotation) from a source location to a target location
while avoiding obstacles~\cite{canny1988complexity,latombe1991robot,lavalle2006planning}.

\paragraph{Configuration space.}
The configuration space is the space of rigid transformations $\calC = \SEn$ 
limited to the configurations confined in a bounding box $\calR$.

For the von Mises-Fisher bias in Eq. \ref{eq:vonMF}, we need an inner product in the tangent
space $\liesen$ of $\SEn{3}$.  Let's recall that the tangent space is
composed of couples $(\omega, v)$ where $\omega$ is the rotation
vector and $v$ is a velocity vector. 
We set 
\begin{equation*}
\dotp[\SEnlie]{(\omega_1, v_1)}{(\omega_2, v_2)} :=  v_1\cdot v_2 + r^2\omega_1\cdot\omega_2.
\end{equation*}

\paragraph{Distance.}
Let $\theta$ the angle of $q_1$ relative to $q_0$ and $r$ the radius
of a characteristic object.  We use the following distance~\cite{bregier2018defining}:
\begin{equation}\label{eq:distSE3linear}
\distCS{q_0, q_1} =  \sqrt{\|T_1-T_0\|^2 + r^2\theta^2}.
\end{equation}
In the sequel, the value $r$ is set to 1.

\paragraph{Continuous path between two samples.}
We connect two samples $q_0$ and $q_1$ by interpolating the
translation ($\mathbb R^3$) and rotation ($\SOn$) parts
independently. This geodesic will be used for planning algorithms to
link points. Besides, planning algorithms are looking for {\em
  collision-free} geodesics. The formulas for the geodesics are:
\begin{equation}
\label{eq:segmentlinear}
[q_0, q_1] = \left\{ q(t)=(R(t), T(t)) \mid t\in[0, 1] \right\},
\end{equation}
with
\begin{equation}
\begin{cases} 
R(t) &= \Exp_{\SOn}(t \Log_{\SOn}(R_1R_0^{-1})) R_0,\\
T(t) &= T_0 + t(T_1-T_0).
\end{cases}
\end{equation}

\begin{remark}
Alternatively, one may use the $L_1$ distance $\distCS{q_0, q_1} = \|T_1-T_0\| + r\theta$, as
in~\cite{kuffner2004effective,guo2026the-open-motion-planning-library2}.
\end{remark}

\paragraph{RRT and HAR algorithms: sampling $\qsampled$ in $\mathcal C$.}
A random transformation $\qsampled\in\mathcal C$ is needed to identify the point extended $\qfrom$. 
To do so, we simply draw $\qsampled = (R, T)$ with $R\sim \calU(\SOn)$ and $T\sim\calU(\calR)$.

\paragraph{HAR algorithms: sampling a random direction $\theta$.}
The different versions of HAR need a random direction $\theta =
(\omega, v) \in \liesen$ from $\qfrom$ to define the new point added
to the tree. Direction $\theta$ involves the rotation vector $\omega$
whose norm is uniform in $[0, \pi]$ and whose direction is uniform in
$\Sd{2}$, and the vector $v$ drawn uniformly at random in $\delta \Sd{2}$.
The radius $\delta$ balances the magnitudes of both parts: in the
direction $\theta$, the robot rotates by $\pi/2$ on average as it
moves a distance~$\delta$.

Then, from $\qfrom$ and in the direction $\theta$:
\begin{itemize}
\item the local HARL samples a new point $\qnew$ at a random distance between 0 and $\delta$;
\item the HARF samples $\qnew$ at a distance $\delta$;
\item the global HARG first extends the ray to the domain boundary or to a collision point and
then samples on this curved segment.
\end{itemize}

\subsection{Rigid motions in $\SEn[N]$}

\paragraph{Configuration space.}
Planning $N$ independent robots, each parameterized on $\SEn$, results
in a configuration space $\calC = \SEn[N]$.  For a point $q\in\calC$,
the $i$-th component $q^i$ represents the position and orientation of the
$i$-th robot.  

The inner product on the tangent spaces of $\SEn[N]$, composed of couples $(\omega^i, v^i)_{i\in \{1,\ldots,N\}}$, reads as
\begin{equation*}
\dotp[\SEnlie[N]]{ (\omega_1, v_1) } {(\omega_2, v_2)}  := \sum_i  v_1^i\cdot v_2^i + r^2\,\omega_1^i\cdot\omega_2^i.
\end{equation*}

\paragraph{Distance.}
The distance between $q_0$ and $q_1$ is defined as the quadratic average over all of their components:
\begin{equation}
\label{eq:distSE3N}
\distCS{q_0, q_1} = \sqrt{  \dfrac 1N \sum_{i=1}^N \|T_1^i - T_0^i \|^2 + r^2\,\theta_i^2}.
\end{equation}
with $\theta_i$ the angle between the two rotations $R_0^i$ and $R_1^i$.

\paragraph{Continuous path between two samples.}
The curved segment between $q_0\in \calC$ and $q_1\in\calC$ is computed
component-wise using the same formula as \eqref{eq:segmentlinear}.

\paragraph{RRT and HAR algorithms: sampling $\qsampled$ in $\calC$.}
RRT and HAR algorithms need to sample points and directions in $\SEn[N]$.

To construct $\qsampled$, the $\SEn$ procedure is applied
independently to each of the $N$ robots and for all $i$, the rotation
of $\qsampled^i$ is drawn from $\calU(\SOn)$ and its translation from
$\calU(\calR)$.  Note in particular that the $N$ translations are
drawn independently.

\paragraph{HAR algorithms: sampling a random direction $\theta$.}
As in the $\SEn$ case, the direction $\theta = ((\omega^1, v^1),
\ldots, (\omega^N, v^N))$ from $\qfrom$ must be randomly drawn.  All
the $\omega$ are independently drawn as in the $\SEn$ case;
the $(v^i)$ are drawn dependently
within the sphere $\rho\,\Sd{3N-1}$ in dimension $3N$, with $\rho$
specified in the following Lemma \ref{lem:R}--proof in Section~\ref{sec:sampling-coupled-transformations}:
\begin{lemma}
\label{lem:R}
Consider a random sample $(T_1,\dots,T_N) \sim \uniformD{\rho\ \Sd{3N-1}}$,
with $\rho = \sqrt{\frac{3\pi}{8}}\sqrt{N}\delta$. One has
\begin{equation}
\expX{ \vvnorm{T_i}} \to \delta \textrm{ when }N\to\infty  
\end{equation}
\end{lemma}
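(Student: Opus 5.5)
Since $\rho\,\Sd{3N-1}$ is the scaled unit sphere in dimension $3N$, we use the standard Gaussian representation of the uniform distribution on it. Let $G=(G_1,\dots,G_N)$ be a standard Gaussian vector in $\Rd{3N}$, with each block $G_i\in\Rd{3}$. Then $G/\vvnorm{G}$ is uniform on $\Sd{3N-1}$, so we may write
\begin{equation*}
T_i = \rho\,\frac{G_i}{\vvnorm{G}}.
\end{equation*}
By exchangeability, $\expX{\vvnorm{T_i}}$ does not depend on $i$, so it suffices to treat $i=1$. We then have
\begin{equation*}
\vvnorm{T_1} = \sqrt{\tfrac{3\pi}{8}}\,\delta\;\frac{\vvnorm{G_1}}{\vvnorm{G}/\sqrt{N}} .
\end{equation*}

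The plan is to show that the denominator concentrates at a constant while the numerator keeps a fixed law. By the strong law of large numbers, $\vvnorm{G}^2/N=\frac1N\sum_j\vvnorm{G_j}^2\to 3$ almost surely, so $\vvnorm{G}/\sqrt{N}\to\sqrt{3}$. The numerator $\vvnorm{G_1}$ follows a chi distribution with 3 degrees of freedom, whose mean is $\expX{\vvnorm{G_1}}=2\sqrt{2/\pi}$. The ratio therefore converges almost surely to $\vvnorm{G_1}/\sqrt3$. Formally, the limit of the expectation is then
\begin{equation*}
\sqrt{\tfrac{3\pi}{8}}\,\delta\cdot\frac{2\sqrt{2/\pi}}{\sqrt3}=\delta ,
\end{equation*}
which is exactly the claim. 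This computation also explains the choice of $\rho$.

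The main obstacle is justifying the interchange of limit and expectation, and we have two routes for it.

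\textbf{Uniform integrability.} The ratio $\vvnorm{G_1}/\vvnorm{G}$ is at most $1$, so the quantity is bounded by $\rho\propto\sqrt{N}$, which grows with $N$ and is not a uniform bound. We instead bound second moments: $\expX{\vvnorm{T_1}^2}=\rho^2\cdot 3/(3N)=\rho^2/N=\tfrac{3\pi}{8}\delta^2$, independent of $N$, using exchangeability and $\sum_i\vvnorm{T_i}^2=\rho^2$. Uniformly bounded second moments give uniform integrability, so almost-sure convergence upgrades to convergence of expectations.

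\textbf{Independence.} Alternatively, one can use that $G/\vvnorm{G}$ is independent of $\vvnorm{G}$. This yields the exact identity
\begin{equation*}
\expX{\vvnorm{G_1}}=\expX{\vvnorm{G}}\,\expX{\vvnorm{G_1}/\vvnorm{G}} .
\end{equation*}
Hence $\expX{\vvnorm{T_1}}=\rho\,\frac{2\sqrt{2/\pi}}{\expX{\vvnorm{G}}}$, and the closed form $\expX{\vvnorm{G}}=\sqrt2\,\Gamma(\tfrac{3N+1}{2})/\Gamma(\tfrac{3N}{2})\sim\sqrt{3N}$ gives the limit $\delta$ directly. We would present this second route as the main proof and the uniform integrability argument as a remark.
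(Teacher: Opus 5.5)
Your proposal is correct. It shares the paper's starting point, the Gaussian representation $T_i=\rho\,G_i/\|G\|$, but computes the expectation differently.

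The paper uses that $\|G_i\|^2/\|G\|^2\sim\mathrm{Beta}(3/2,\,3(N-1)/2)$, built from independent $\chi^2(3)$ and $\chi^2(3N-3)$ variables. It evaluates $E[\sqrt{X}]$ for this law as $B(2,\beta)/B(3/2,\beta)$, which gives $\frac{2}{\sqrt\pi}\,\Gamma(3N/2)/\Gamma(3N/2+1/2)$ on the unit sphere. It concludes with $\Gamma(x+a)/\Gamma(x+b)\sim x^{a-b}$.

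Your independence route reaches exactly the same closed form, since $2\sqrt{2/\pi}\,\Gamma(3N/2)/\bigl(\sqrt2\,\Gamma(3N/2+1/2)\bigr)=\frac{2}{\sqrt\pi}\,\Gamma(3N/2)/\Gamma(3N/2+1/2)$. You get there by factoring $E\|G_1\|$ through the radius--direction independence, which trades the Beta integral for two chi means. The final asymptotic step is the same.

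Your uniform-integrability route is genuinely different. It uses no special functions, only the strong law and the identity $E\|T_1\|^2=\rho^2/N$, and it shows transparently why $\rho\propto\sqrt N$ is the right scaling. What it does not give is the finite-$N$ value, such as the paper's $32/(15\pi)$ for $N=2$, or the rate $E\|T_i\|=\delta\bigl(1+O(1/N)\bigr)$ that the exact formula yields.

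One minor point for that route: the almost-sure statement requires realizing all $N$ on one probability space through a single i.i.d.\ sequence $G_1,G_2,\dots$. This is harmless because $E\|T_1\|$ depends only on the law, and convergence in distribution plus uniform integrability would suffice anyway.
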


\section{Experiments}

\subsection{Setup}

\paragraph{Contenders.} We compare the classical RRT variants
to our algorithms HARG, HARL and HARF on several models. Our
assessment is based on the two problems discussed in the Introduction,
namely finding a path (Pb. \ref{pb:geometric-path}) and finding a
good/optimal path (Pb. \ref{pb:shortest-geometric-path}).
The {\em Connect} algorithms are usually a good choice for the former
problem, with the latter benefiting from {\em Quick} and {\em
  Informed} heuristics.  We discarded the {\em Star} variants which
consistently showed inferior performance in terms of final path
quality.

All algorithms depend on the steering range value
$\delta$, and all {\em Quick} variants depend on a local range -- we
use the $\krrt=50$ nearest neighbors. These choices yield the final
list of contenders (Table \ref{tab:contenders}).
\medskip

Each contender is run $N_r=50$ times on each model, with a fixed time limit. 
We also run path shortening methods for $T_s = 30$\,s.

\begin{table}[!ht] 
\begin{center}
\adjustbox{max width=\coeffonefigCOLW\columnwidth}{
\begin{tabular}{|l|l|}
\hline
Name & Explanation \\
\hline
\rrtC  & RRT using {\em Connect} heuristic (C)\\
\rrtQC & RRT using {\em Quick} (Q) and {\em Connect} heuristics\\
\hline
\harC & HARG algorithm using the {\em Connect} heuristic \\
\harQC & HARG using {\em Quick} and {\em Connect} heuristics\\
\hline
\harlC & HARL algorithm using the {\em Connect} heuristic \\
\harlQC & HARL using {\em Quick} and {\em Connect} heuristics\\ 
\hline
\harfC & HARF algorithm using the {\em Connect} heuristic \\
\harfQC & HARF using {\em Quick} and {\em Connect} heuristics\\ 
\hline
\end{tabular}}
\end{center}
\caption{{\bf Contenders compared in this study.} 
A contender is run in the mass start mode ($p_r=1$) or rolling start mode ($p_r<1$).
All the algorithms also use the {\em Informed} heuristic.}
\label{tab:contenders} 
\end{table} 

\paragraph{Statistics.} The following statistics are reported:
\begin{itemize}
\item The time to find the first path and the length of the final path;

\item The number of successful point creations per iteration. Note
  that this ratio is $<1$ in general.  If collisions were checked with
  infinite precision (Rmk \ref{rmk:precision}), this ratio would be
  one for HARG.  In practice, a collision detection is triggered when
  the segment length is $<\distcol$, whence a value $<1$ as well.

\item The success rate \ie the percentage of runs that succeeded in \% (NB: no number means
all runs succeeded);
\end{itemize}

\subsection{Implementation} 
\label{sec:impl}

\ifANONYMOUS
The code is written in generic C++ and will be released upon publication of the paper.
\else
The code is written in generic C++ and is currently being integrated into the \sbllong (\sblweb{} and \sblref).
\fi
Instantiating this code for any configuration space requires
assembling a so-called traits class providing routines for the operations
mentioned in Sec. \ref{sec:generalsetup}, in particular sampling
configurations.
See Section \ref{sec:planning-SE}
for configuration spaces products of $\SEn{3}$.

Calculations were run on a DELL precision 5480 equipped with 20 logical cores of type Intel(R) Core(TM) i9-
13900H, 32 GB of RAM, and running Fedora 42.

\ifANONYMOUS
Result videos are available in the archive \url{https://anon.cat/vFqlcWHo.zip}.
\else
Result videos are available on \url{https://sbl.inria.fr/data/molecular-motion-planning/}.
\fi

\begin{remark}
\label{rmk:precision}
As noted in Section \ref{sec:generalsetup}, collision
predicates compare pairwise distances between primitives against a
fixed threshold $\distcol$. In theory, exact predicates require
assumptions on the number types (rational, algebraic, etc)~coding
the geometry of primitives and the operations carried
out~\cite{kettner2008classroom}.
Our instantiations involve rotations whence trigonometric
functions--see Sec. \ref{sec:planning-SE}. Robustness on these can be
achieved using multiprecision and/or interval number types--such as
MPFR or MPFI~\cite{fousse2005mpfr}.
Practically, in using a $\distcol$ much larger than numeric precision, we did not face
robustness issues using double floating point number types.
\end{remark}

\subsection{Hyper-parameter tuning}
\label{sec:hpt}

\paragraph{Steering range $\delta$.} 
To get the best for all algorithms, we first tune the steering range
$\delta$.  The {\em Connect} versions of all algorithms 
(\rrtconnect, \harfconnect, \harlconnect and
\harconnect, with $\kappa=0$) are run $N_r=50$ times for equally
spaced values of $\delta$ and for each algorithm, the value of
$\delta$ yielding the lowest mean time to obtain the first path is
retained (Fig. \ref{fig:delta-comparison}).

\begin{figure}[htb]
\begin{center}
\includegraphics[width=\coeffonefigCOLW\columnwidth]{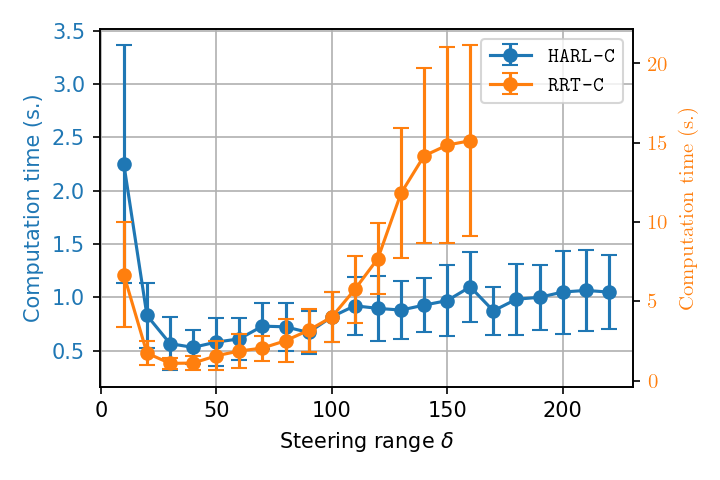}
\end{center}
\caption{
{\bf Hyper-parameter tuning: computation time to find a path for \rrtconnect (right axis)
and for \harlconnect (left axis), averaged over $N_r=50$ runs, with respect to $\delta$.}
Model used \cradleModel{25}{coarse}.
For these runs, the minimum value is 1.1 seconds ($\delta=30$) for \rrtconnect 
and 0.5 second ($\delta=40$) for \harlconnect.}
\label{fig:delta-comparison} 
\end{figure}

\paragraph{The hybrid interpolating between HAR and RRT with $\kappa$.} 
We have seen that RRT and HAR strategies are obtained as limit cases
of a generic method biasing the direction of the
extension (Section~\ref{sec:interpol}).
This behavior is observed in practice using \harfC with a von
Mises-Fisher distribution on directions, with $\kappa=0$
(resp. $\kappa=\infty$) giving HAR (resp.  RRT) (Fig. \ref{fig:tfc-vs-kapp}).

To delineate the intrinsic merits of RRT and HAR, we simply set $\kappa=0$ in the sequel.

\begin{figure}[htb]
\centerline{
\includegraphics[width=\coeffonefigCOLW\columnwidth]{\wdir/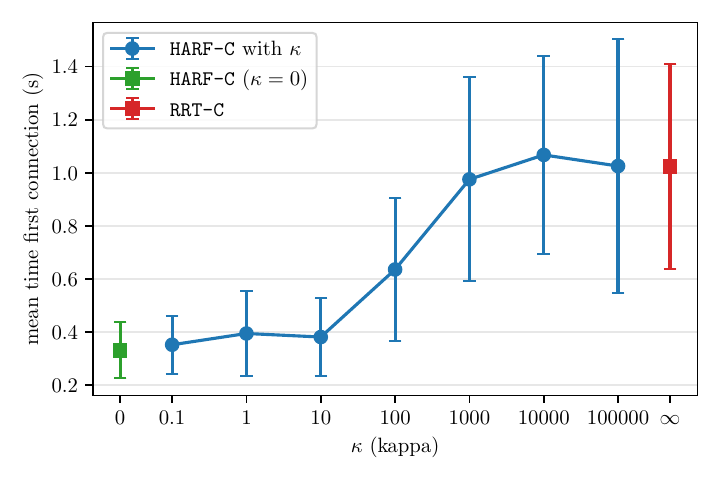}
}
\caption{{\bf Incidence of the bias on directions $\kappa$ on the time of first connection.}
Runs on \cradleModel{25}{coarse} averaged over $N_r=50$ repetitions.
When $\kappa \to \infty$, HARF recovers RRT's behavior.
}
\label{fig:tfc-vs-kapp}
\end{figure}

\subsection{Models}
\label{sec:models}

Our experiments are primarily focused on configuration spaces which
are products of $\SEn{3}$, for which we selected/designed challenging cases
with large number of degrees of freedom.

\paragraph{Models from the  OMPL library.}
We chose the challenging model {\em Cubicles} from the OMPL
library~\cite{sucan2012open} to test all algorithms (Fig.~\ref{fig:piano-plots} - row 1). 
In this $\adjdof{6}$ scenario, which corresponds to $\calC = \SEn{3}$, an object 
moves through obstacles and corridors to reach its final destination.

\paragraph{2D models.} We design 2D planning problems by 
requiring $N=10$ circular robots with varying radii to move in a
square. With $\calR$ the bounding box, this $\adjdof{2N}=\adjdof{20}$
model corresponds to $\calC=\calR^N$.  Robots must reach their final
destination while avoiding the other robots and a rectangular
obstacle.  We consider two scenarios named
\twoDimModel{small-obstacle} and \twoDimModel{big-obstacle}
respectively with a small and large central obstacle.
(Fig.~\ref{fig:piano-plots} - rows 2 and 3).

\paragraph{Multi-molecular cradle models.}
We define a molecular cradle as a set of $N$ rigid molecules/domains
moving independently. These are $\adjdof{6N}$ models with
configuration space $\calC=\SEn[N]$.
To explore a vast array of situations, we generate cradles at
random using {\em polycubes}, namely a connected random domain of $n$
cubes on the regular cubic lattice (Fig. \ref{fig:random-domain}). We
generate such random domains in a greedy fashion. The polycube is
initialized with a unit cube at the origin. A face of a cube $c$ is
termed {\em free} if $c$ does not have a neighboring cube sharing that
face. We maintain a list of cubes with free faces. Choosing at random
a cube and a free face of it, we extend the polycube by adding a cube
sharing that face. The process stops when $n$ cubes have been
generated.  We then convert the polycube into a pseudo-molecule by
replacing each cube by a ball of random radius.
\medskip

In practice, we
create four models of increasing complexity:
\cradleModel{9} -- $\adjdof{54}$, 
\cradleModel{25}{coarse} -- $\adjdof{150}$, 
\cradleModel{25}{tight}  -- $\adjdof{150}$, 
(Fig.~\ref{fig:cradles-plots}) and \cradleModel{64} (Fig.~\ref{fig:cradles-plots-sparse} - row 3) --  $\adjdof{384}$.

\subsection{RRT versus HAR in mass start mode}
\label{sec:reading-tables}

We run contenders in mass start mode ($p_r = 1$)
and report results using scatter plots
(Fig. \ref{fig:piano-plots}, detailed in Fig. \ref{fig:piano-tables}).

\paragraph{Piano mover and the \emph{Cubicles} model.}
The fastest method to report a path is \rrtconnect ($0.85$s, against
$3.5$--$7.7$\,s for the HAR variants), and final path lengths are on
par (e.g.\ $1696$ for \quickharfstarconnect, the shortest, versus
$1711$ for \quickrrtstarconnect). 
On this example and other one-robot cases (data not shown), RRT's
fixed-$\delta$ steering is very efficient, and the move {\em forward
  strategy} associated with the fixed direction $\urrt$ from $\qfrom$
towards $\qsampled$ pays off.

\paragraph{2D disk models.}
The picture reverses as soon as several robots move jointly
(Fig. \ref{fig:piano-plots} detailed in rows 2 and 3 of Fig. \ref{fig:piano-tables} ).
On the easier \twoDimModel{small-obstacle} scenario, \harfconnect finds a first path in
$0.017$\,s against $0.51$\,s for \rrtconnect ($\sim\!30$ times faster)
and on the harder \twoDimModel{big-obstacle} scenario in $0.40$\,s against $42.8$\,s for RRT ($\sim\!100$ times faster),
with consistently shorter paths.
All HAR variants, by contrast, run with $100\%$ success, exploring
the space efficiently as seen with the many nodes generated by HARG
and HARL.

A second observation is the sensitivity of the methods to the size of
the obstacle. Increasing this size results in a loss in
running time (24 fold for \harfconnect, 84 fold for \rrtconnect), a consequence of the collision inherent to the steering
procedures (Fig. \ref{fig:steering-procedures}).

\paragraph{Molecular cradles.}
The three molecular cradle models (Fig.\ref{fig:cradles-plots} detailed in rows 1 to 3 of
Fig.~\ref{fig:cradles-tables}) are of increasing difficulty 
and reveal a transition.
On the easy \cradleModel{9} ($\adjdof{54}$), all robots are moved
at once: \harfconnect is fastest to connect
($0.037$\,s) with \rrtconnect coming second  ($0.049$\,s), path
lengths being slightly shorter for \quickharlstarconnect algorithm.  
On the medium \cradleModel{25}{coarse}
($\adjdof{150}$), the gap opens: \harfconnect reaches a
path in $0.4$\,s against $1.13$\,s for \rrtconnect, with also shorter paths.
The tight \cradleModel{25}{tight} ($\adjdof{150}$) is starting to be
problematic for all methods: RRT and HARF do not find any path, and
HARL and HARG, while still succeeding on every run, become slow
($62$--$179$\,s) within the 5 minutes wall clock budget.

No algorithm was able to solve the \cradleModel{64} model in the mass
start mode within a 10 minutes wall clock constraint, which calls for experiments
in rolling start mode.

\figBegins
\begin{center}
\headerCubicles
\resizebox{\coeffresizeboxCOLW\columnwidth}{!}{ 
\begin{tabular}{ccc}
\includegraphics[width=.2\linewidth]{\wdir/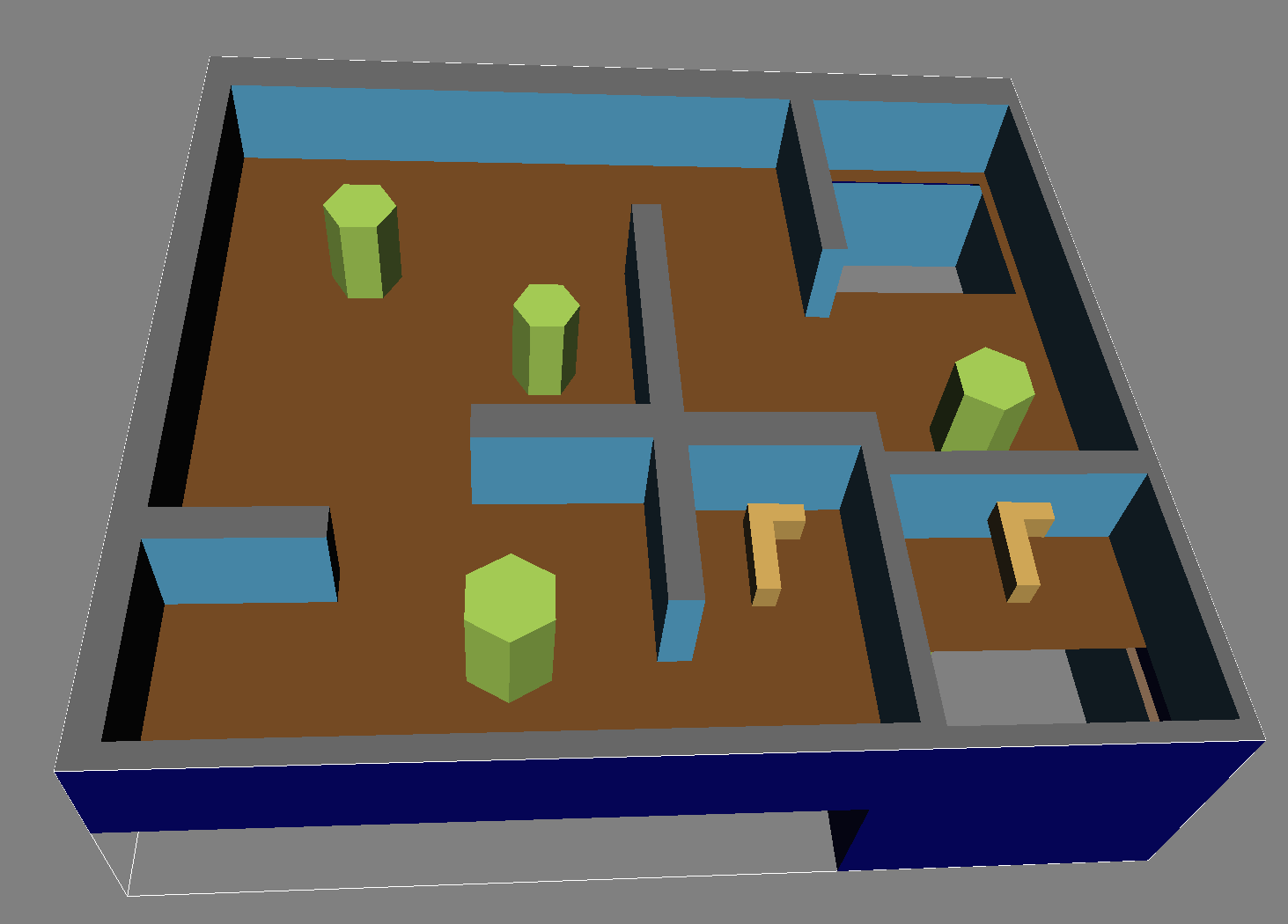} &
\includegraphics[width=.4\linewidth]{\wdir/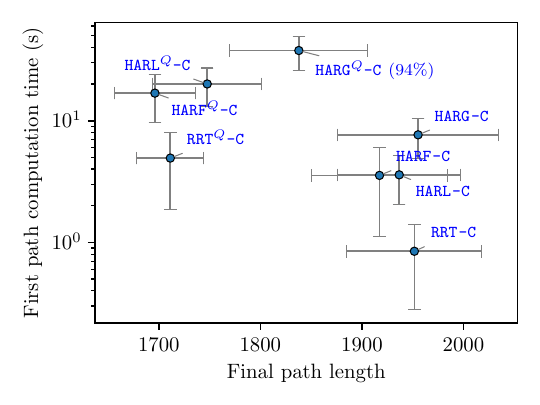} & 
\includegraphics[width=.4\linewidth]{\wdir/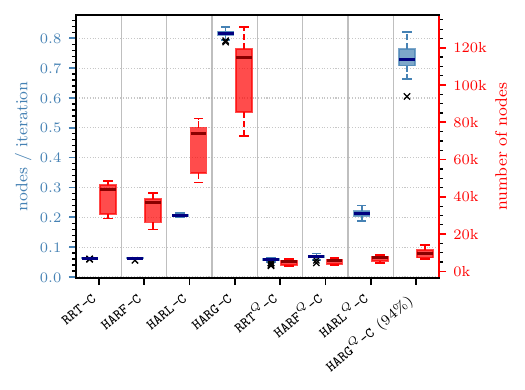}
\end{tabular}}
\headerTDSmall
\resizebox{\coeffresizeboxCOLW\columnwidth}{!}{ 
\begin{tabular}{ccc}
\includegraphics[width=.2\linewidth]{\wdir/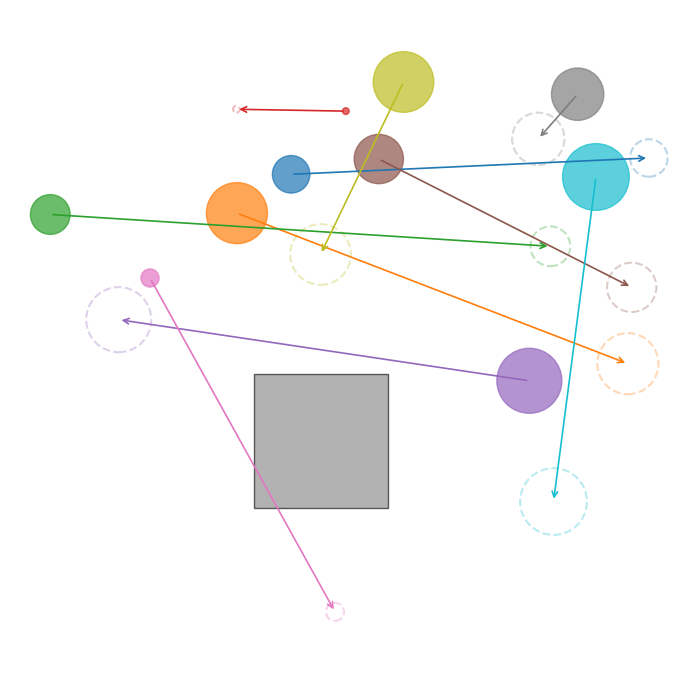} &
\includegraphics[width=.4\linewidth]{\wdir/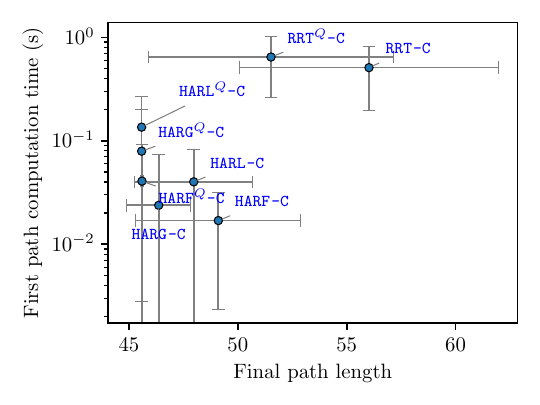} & 
\includegraphics[width=.4\linewidth]{\wdir/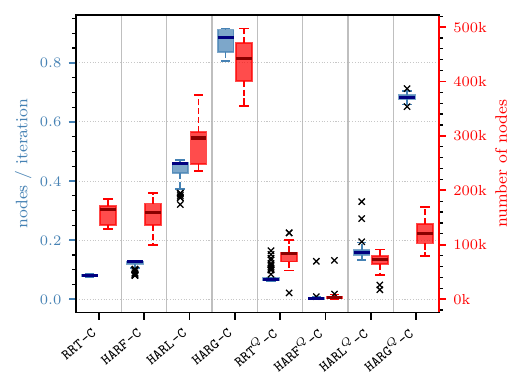}
\end{tabular}}
\headerTDLarge
\resizebox{\coeffresizeboxCOLW\columnwidth}{!}{ 
\begin{tabular}{ccc}
\includegraphics[width=.2\linewidth]{\wdir/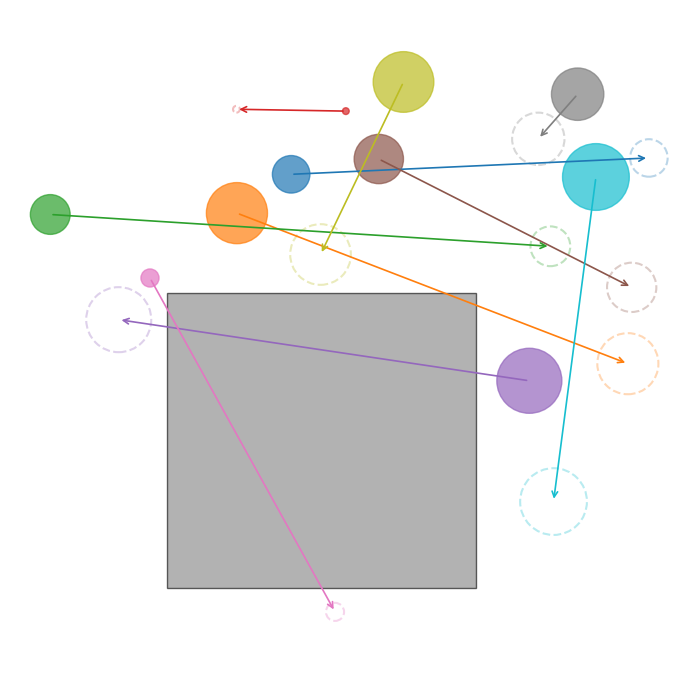} &
\includegraphics[width=.4\linewidth]{\wdir/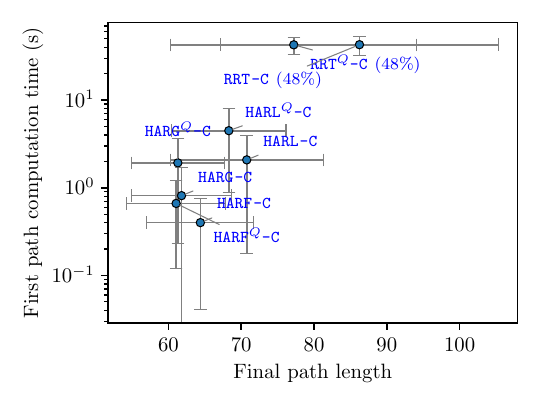} & 
\includegraphics[width=.4\linewidth]{\wdir/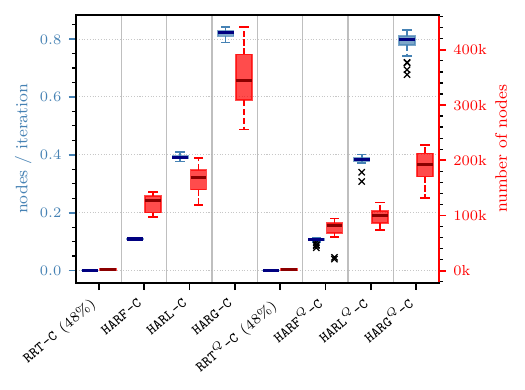}
\end{tabular}}
\end{center}
\caption{
{\bf Results for Cubicles and the 2D models in mass start mode ($p_r=1$), averaged over $N_r=50$ runs. }
{\bf (Left column)} The models.
{\bf (Middle column)} Performances of contenders from Table
\ref{tab:contenders}. The nearest to the bottom left corner, the
better.  Percentages refer to the success rate over the $N_r$ repeats.
{\bf (Right column)} Successful nodes per iterations (blue) and total number of created nodes (red).}
\label{fig:piano-plots}
\figEnds

\figBegins
\begin{center}
\headerCradleNine
\resizebox{\coeffresizeboxCOLW\columnwidth}{!}{ 
\begin{tabular}{ccc}
\includegraphics[width=.2\linewidth]{\wdir/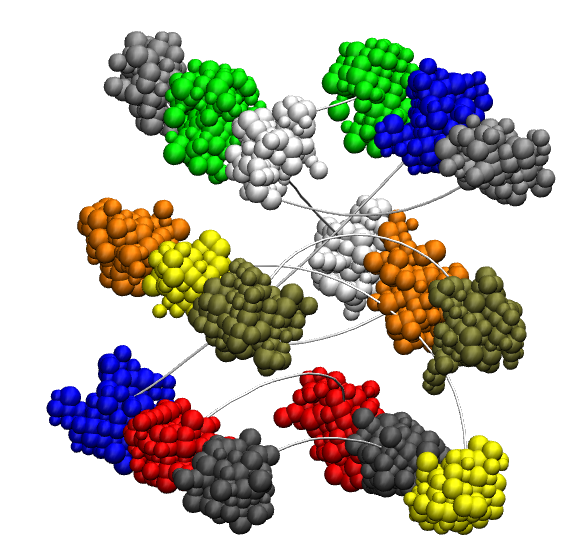} &
\includegraphics[width=.4\linewidth]{\wdir/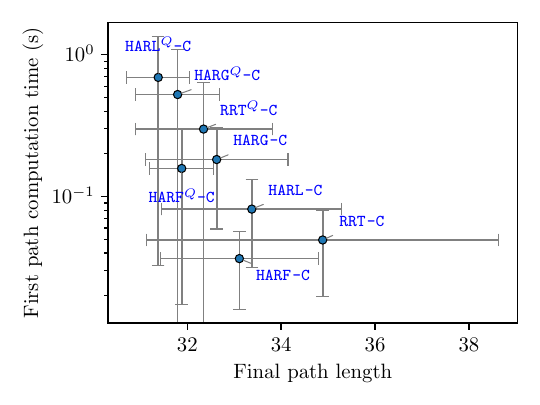} & 
\includegraphics[width=.4\linewidth]{\wdir/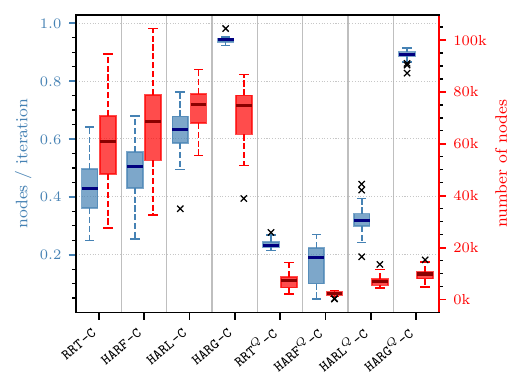}
\end{tabular}}
\headerCradleTFcoarse
\resizebox{\coeffresizeboxCOLW\columnwidth}{!}{ 
\begin{tabular}{ccc}
\includegraphics[width=.2\linewidth]{\wdir/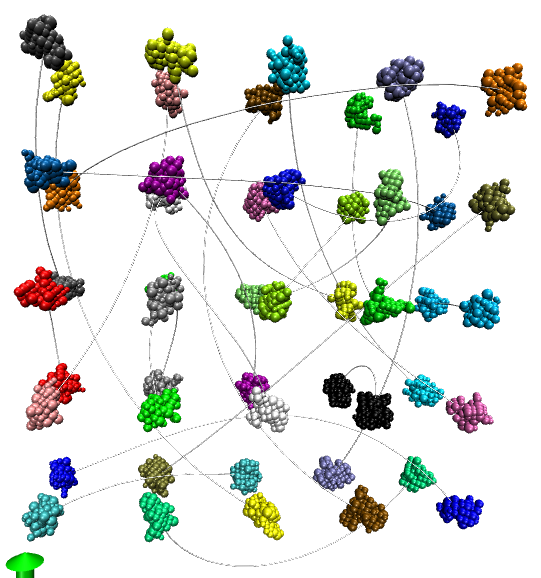} &
\includegraphics[width=.4\linewidth]{\wdir/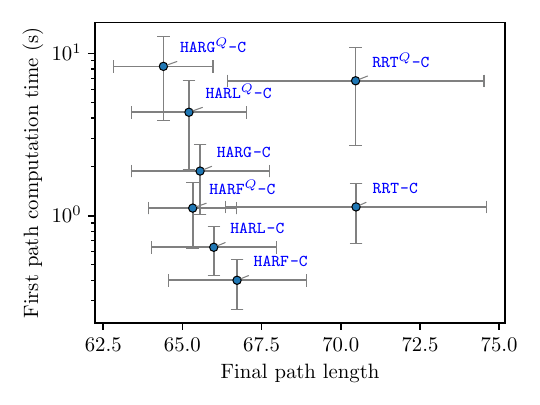} & 
\includegraphics[width=.4\linewidth]{\wdir/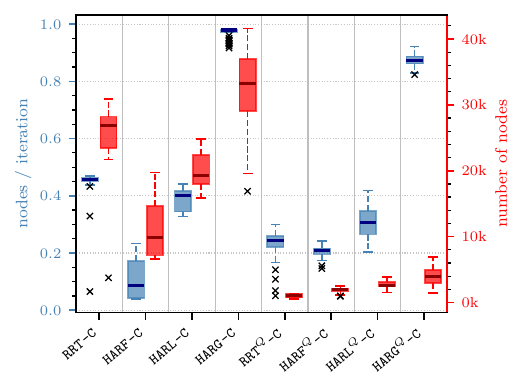}
\end{tabular}}
\headerCradleTFtight
\resizebox{\coeffresizeboxCOLW\columnwidth}{!}{ 
\begin{tabular}{ccc}
\includegraphics[width=.2\linewidth]{\wdir/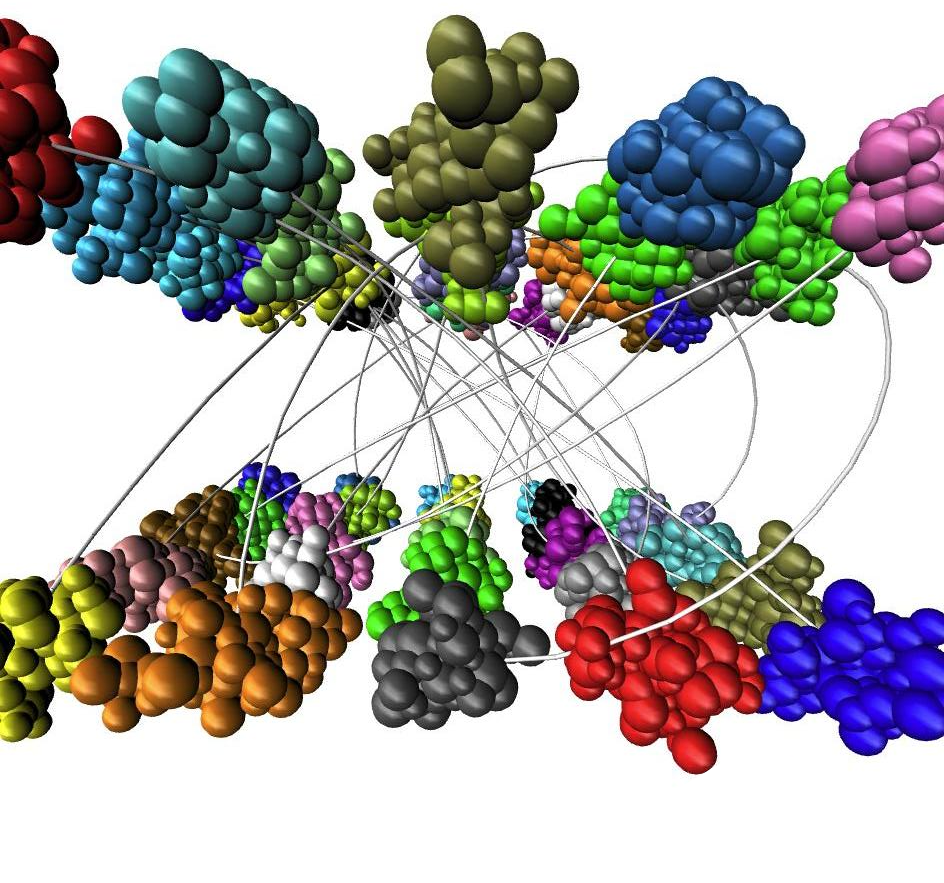} &
\includegraphics[width=.4\linewidth]{\wdir/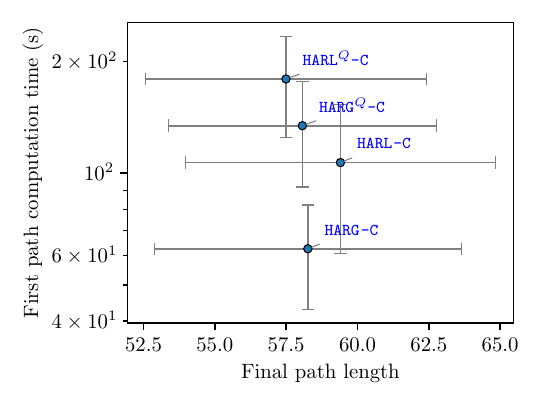} & 
\includegraphics[width=.4\linewidth]{\wdir/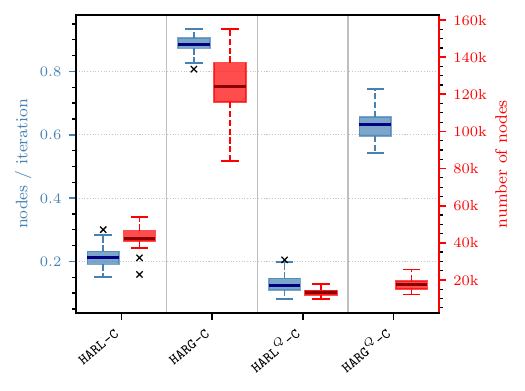}
\end{tabular}}
\end{center}
\caption{
{\bf Results for molecular cradle models in mass start mode ($p_r=1$), averaged over $N_r=50$ runs. }
{\bf (Left column)} The models.
{\bf (Middle column)} Performances of contenders from Table
\ref{tab:contenders}. The nearest to the bottom left corner, the
better.  Percentages refer to the success rate over the $N_r$ repeats.
{\bf (Right column)} Successful nodes per iterations (blue) and total number of created nodes (red).}
\label{fig:cradles-plots}
\figEnds

\subsection{RRT versus HAR in rolling start mode}
\label{sec:rs-mode}

The dimensionality reduction/projection methods discussed in Introduction 
aim at lowering the dimensionality  of the search space for complex planning problems.
A simple measure to do so is to make a robot move with probability $p_r\in (0,1]$ at each step,
with fixed robots acting as obstacles.
Experiments with the complex cases \cradleModel{25}{coarse} and 
\cradleModel{25}{tight} are conducted to assess the
role of $p_r$ (Fig. \ref{fig:tfc-vs-pr}). These show that
there is an optimal value of $p_r$, which may be equal to one
for easy/tame problems.
We observe that this optimal value is smaller for RRT than for HARL.
Indeed, the move forward strategy associated with 
the direction $\urrt$ from $\qfrom$ towards $\qsampled$ is getting
detrimental when crowding increases, hence the help of a lower
value $p_r$. Whereas HAR has more steps backward and can afford a larger
number of moving molecules/robots.

\paragraph{Results with the optimal $p_r$.}
Each algorithm on each model is now run with the best $p_r$ value
obtained via exhaustive search (Fig. \ref{fig:tfc-vs-pr}).  

Algorithms' performances improve with $p_r < 1$ when 
cluttered-ness increases, namely in \twoDimModel{big-obstacle},
\cradleModel{25}{tight} and \cradleModel{64}~(Fig.~\ref{fig:cradles-plots-sparse}).

For \cradleModel{25}{tight}, a low value of $p_r$ benefits RRT and
HARF which solve the \cradleModel{25}{tight} in 3.4s for \harfconnect
and 5.2s for \rrtconnect.  \harlconnect and \harconnect are also an order of
magnitude faster with their optimal $p_r$ value identified, as they go from
a $\sim 1-2$ minutes to $\sim 7-30$ seconds.

For the massive \cradleModel{64} model, \harfconnect can now find a
path in 17.3s with $p_r = 0.35$ and \rrtconnect in 41.6s with
$p_r=0.1$.  The \harlconnect and \harconnect algorithms were not able to solve
\cradleModel{64} within five minutes.
\medskip

Since tuning $p_r$ benefits every algorithm, the comparison at tuned $p_r$ is
fair. In doing so, \harfconnect remains ahead of \rrtconnect on all multi-robot
models but \cradleModel{25}{coarse}, by a factor $1.2$ to $2.4$, the margin
being largest on the largest instance.
The decisive advantage of the HAR-based samplers is therefore one of
robustness rather than of raw speed: they keep finding paths in regimes
where the fixed-step steering of \rrtconnect collapses.


\figBegins
\begin{center}
\resizebox{\coeffresizeboxCOLW\columnwidth}{!}{ 
\begin{tabular}{cc}
\includegraphics[width=.495\linewidth]{\wdir/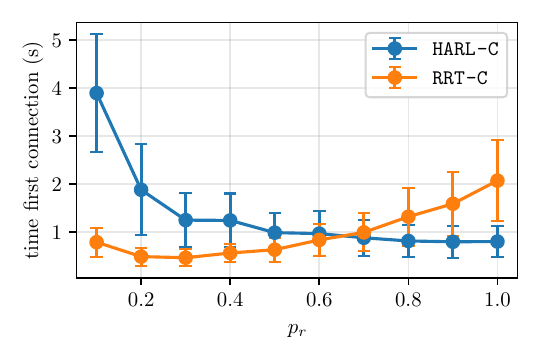}&
\includegraphics[width=.495\linewidth]{\wdir/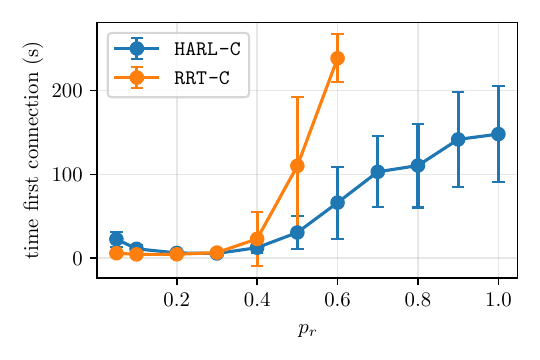}\\
\cradleModel{25}{coarse} & \cradleModel{25}{tight}
\end{tabular}}
\end{center}
\caption{{\bf Incidence of the per robot moving probability $p_r$ on the time
of first connection.} Run on
\cradleModel{25}{coarse} (left) and \cradleModel{25}{tight} (right) averaged over $N_r=50$ repetitions.}
\label{fig:tfc-vs-pr}
\figEnds

\figBegins
\begin{center}
\headerTDLarge
\resizebox{\coeffresizeboxCOLW\columnwidth}{!}{ 
\begin{tabular}{ccc}
\includegraphics[width=.2\linewidth]{\wdir/10r1o.png} &
\includegraphics[width=.4\linewidth]{\wdir/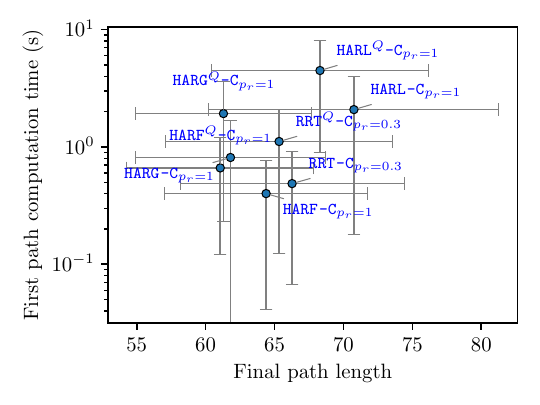} & 
\includegraphics[width=.4\linewidth]{\wdir/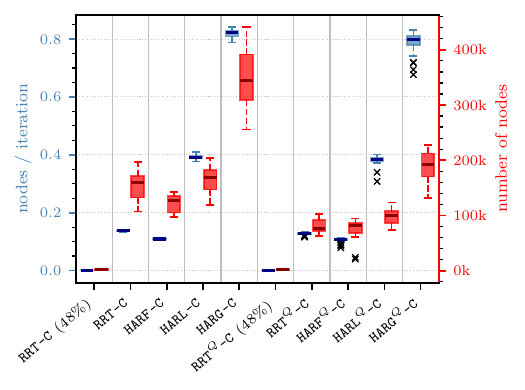}
\end{tabular}}

\headerCradleTFtightSparse
\resizebox{\coeffresizeboxCOLW\columnwidth}{!}{ 
\begin{tabular}{ccc}
\includegraphics[width=.2\linewidth]{\wdir/walls5x5-hard.png} &
\includegraphics[width=.4\linewidth]{\wdir/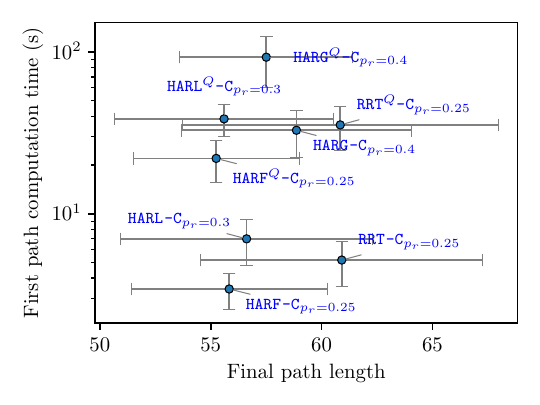} & 
\includegraphics[width=.4\linewidth]{\wdir/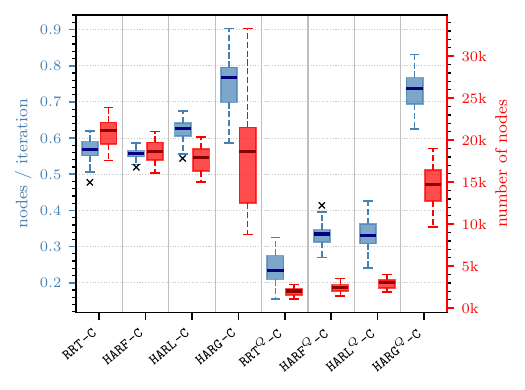}
\end{tabular}}

\headerCradleSF
\resizebox{\coeffresizeboxCOLW\columnwidth}{!}{ 
\begin{tabular}{ccc}
\includegraphics[width=.2\linewidth]{\wdir/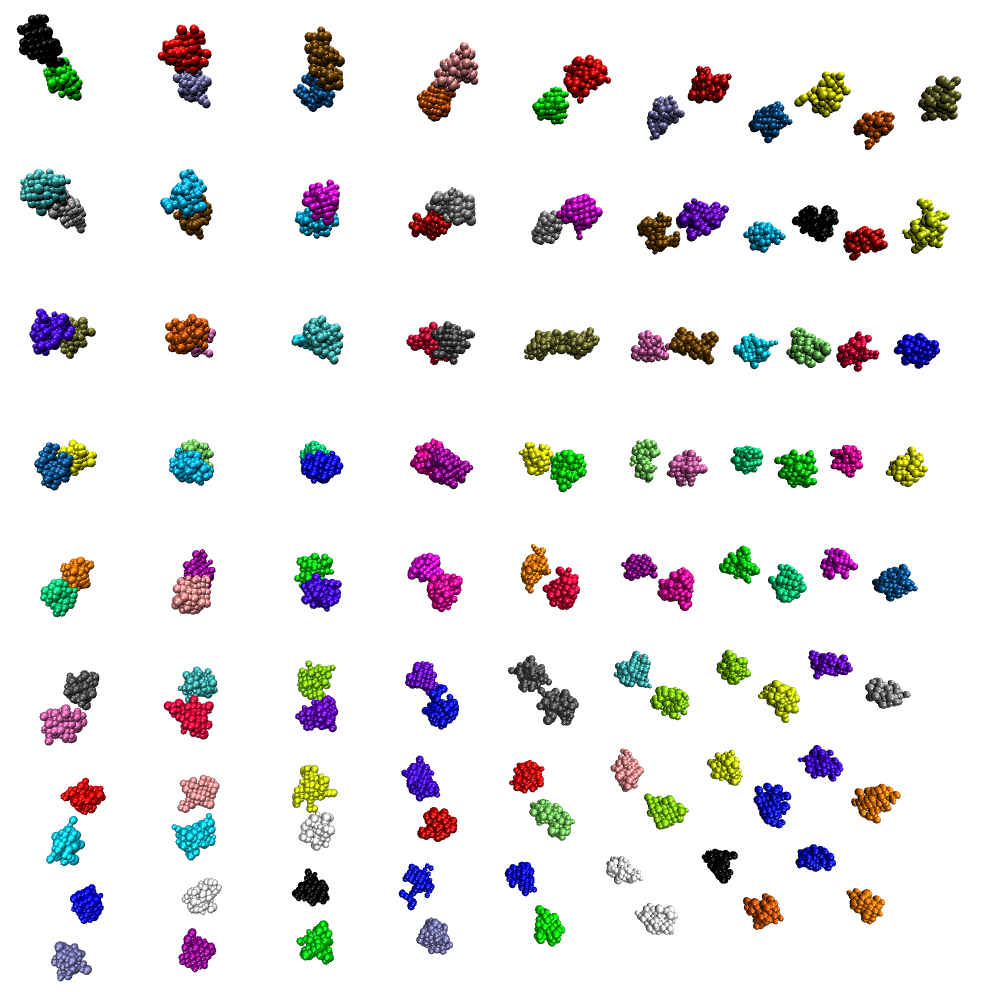} &
\includegraphics[width=.4\linewidth]{\wdir/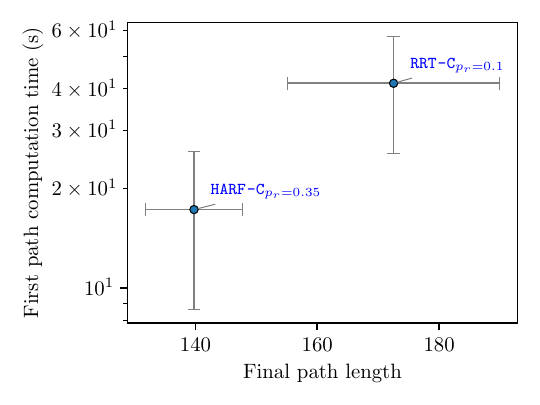} & 
\includegraphics[width=.4\linewidth]{\wdir/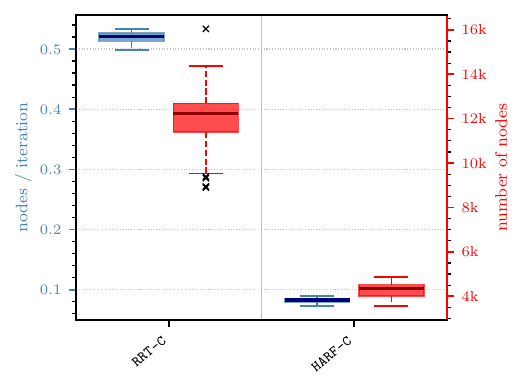}
\end{tabular}}

\end{center}
\caption{
{\bf Results for some cluttered models in {\em rolling start mode} ($p_r\in(0,1]$), 
averaged over $N_r=50$ runs. }
{\bf (Left column)} The models. For the \cradleModel{64} model, 
64 molecules are aligned on a grid with shuffled initial and final positions.
{\bf (Middle column)} Performances of contenders from Table
\ref{tab:contenders}. The nearest to the bottom left corner, the
better.  Percentages refer to the success rate over the $N_r$ repeats.
{\bf (Right column)} Successful nodes per iterations (blue) and total number of created nodes (red).}
\label{fig:cradles-plots-sparse}
\figEnds

\subsection{Observations irrespective of the start mode}

This battery of tests, with $p_r=1$ and $p_r<1$, 
which uses models of increasing complexity, reveals a precise picture:
\begin{itemize}
\item For single-robot problems (e.g.\ the OMPL \emph{Cubicles}),
  \rrtconnect performs best to find path quickly, and all variants are
  equally good in producing a short one.

\item For multi-robot problems, all algorithms can solve difficult
  problems, which may require tuning the fraction of moving robots
  $p_r$. \rrtconnect at $p_r=1$ is rarely ideal, struggling to find
  a path, a consequence of mass concentration in high dimensions and of the 
aforementioned forward mode.
\harlconnect and \harconnect are the most resilient at $p_r=1$, since
by construction they can take arbitrarily small productive steps and
benefit from backward steps.
For large instances ($25$ to $64$ molecules), tuning $p_r$ pays off, and
\rrtconnect and \harfconnect then solve these hard problems quickly. 

In terms of general performances, \harfconnect with low $p_r$ yields
very quick results.  As a contender, \rrtconnect with a suitable value
of $p_r$ (generally lower than that of \harfconnect) shows satisfactory
(albeit worse) performances.  \harlconnect and then \harconnect with a
tuned $p_r$ are slightly longer to find a path.

\item The \emph{Quick} heuristic makes every algorithm slower but
  shortens the final path; the gain is significant mostly in the
  presence of obstacles. Otherwise, simple path-shortening routines suffice to
  obtain high quality paths.

\item The \emph{successful steering} rate, which measures the
  probability of a steering step to generate a new sample, reflects
  the ability of each sampling rule to populate the free space.
 Algorithm HARG has the most productive iterations (it always places a
 node along the drawn direction, setting aside collisions for
 distances less than the threshold $\distcol$), HARL is very efficient
 as well, whereas HARF and RRT, which fix the step as
 $d(\qfrom,\qnew)\approx\delta$, produce fewer samples.
\end{itemize}

\subsection{Comparison against recent contenders}

While our work focuses on the benefits of decoupling positions and
directions to improve RRT-like algorithms, comments are in order with
respect to very recent work.

The {\em fiber bundle}-based approach from \cite{orthey2024multilevel}
stratifies the search space into a certain number of {\em levels}
to obtain lower-dimensional problems. 
On the pro side, the method handles complex cases, the largest
ones being the single robot hypercube (\adjdof{100}; 98 levels), eight flying
drones (\adjdof{48}; 8 levels), and a complex manipulator (\adjdof{72}; 3 levels).
For the flying drones scenario, we compare the original
results against our RRT and HAR algorithms.  The original paper plans
in 0.14 second (0.59 second with
\rrtconnect)~\cite{orthey2024multilevel}.  In mass start mode, our
\rrtconnect yields a path in 0.17 second and \harfconnect requires
0.21 second. In rolling start mode, the numbers fall to 0.062 second
and 0.14 second respectively. These results are comparable
in the mass start mode, and better in the rolling start mode.
In addition, our methods
do not require any complex operation on the configuration space.


The recent diffusion based approach~\cite{shaoul2025multi} reports
experiments on multi-robot systems with up to 20 robots and $\adjdof{40}$.
Running times are of the order of tens of seconds. For comparable systems,
our methods are one order of magnitude faster (Fig. \ref{fig:cradles-plots-sparse}).

\subsection{Steering in high dimensional spaces: different behaviors on rotations} 

The difficulties faced by RRT algorithms in the mass start mode
stem from the steering procedure and the value of $\delta$.
We illustrate this difficulty for our conformational spaces involving rotations.

By the classical mass concentration properties in high dimensional
spaces~\cite{blum2020foundationsdatascience}, the new sample
$\qsampled$ is outside the ball $B(\qfrom, \delta)$ with high
probability, so that $\qnew$ is retracted onto the associated sphere
$S(\qfrom, \delta)$ (Fig.~\ref{fig:steering-procedures}).  If $\delta$
is large and the clearance low, the collision probability is high.
On the other hand, if $\delta$ is small, the rotational shift
is small.  As an example consider an illustrative run of \rrtconnect
for \cradleModel{25}{tight} with $\delta=2$.
After a 1-minute run and 217k iterations, a mere 551 nodes have been
created (despite the small $\delta$ value), and all of their rotational components lie 
near the rotational component of the start or goal node/pose (Fig.~\ref{fig:rrt-rotation-failure}).
This is due to the fact that the global bounding box has dimensions 
$236\times236\times220$ and therefore the
steering procedure only retains a small percentage of the
rotation. Indeed, the ratio $\delta$ over $\dist(\qfrom, \qsampled)$ is often
very small thus after the steering procedure keeping only
the $\delta$ over $\dist(\qfrom, \qsampled)$ part of the segment,
both rotational components of $\qfrom$ and $\qnew$ become very close. 
This problem does not happen in low-dimensional scenarios as 
there is a higher chance that the sampled points are close to their nearest neighbors.

\begin{figure}[htb]
\begin{center}
\includegraphics[width=\coeffonefigCOLW\columnwidth]{\wdir/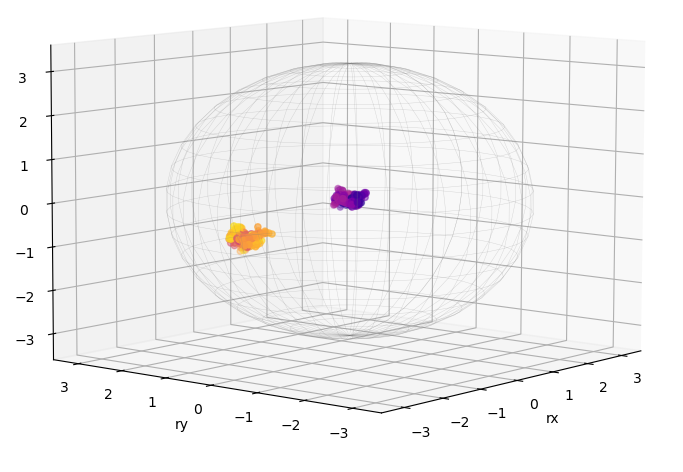}
\end{center}
\caption{{\bf Model \cradleModel{25}{tight}:     rotations of the molecule n${}^\circ1$ only after a one-minute run of
    \rrtconnect with $\delta=2$.}  Representing rotations with
  the angle-axis convention, a rotation is a point in the ball of
  radius $\pi$, with antipodal points on the bounding sphere being
  identified.
The two clusters illustrate the inability of \rrtconnect  to sample
rotations outside a small neighborhood of the starting and goal points.
}
\label{fig:rrt-rotation-failure} 
\end{figure}

\subsection{Failed improvements}

We conclude this section by reporting variations that looked
promising but did not bring any significant improvement.

\paragraph{Orthogonal sampling.} To maximize exploration in HARG, 
taking rays perpendicular to the previous ones seems appealing.  A
sample in the tree is located on a ray which has hit an obstacle
or the bounding box. Sampling a new ray in the orthogonal complement
might help to {\em slide} along the obstacle.
For multi-robot problems, this can be implemented in two ways: either
ray${}_1$ is perpendicular to ray${}_2$ for some scalar product, or
each center of mass of each robot in ray${}_2$ is moving
perpendicularly to its corresponding motion in ray${}_1$.

Neither of these two strategies gave any significant performance
changes.

\paragraph{Reset frequency.} 
The original HAR algorithm defines a Markov chain, sampling a new
point from the previous one.  Applying this idea to perform robot
planning results in a tree which is a path, with poor overall
performances (time to find a solution, and solution quality).  An
intermediate solution consists of following HAR for $\Ksteps$
consecutive steps, before choosing the sample to be extended with the
Voronoi bias rule.  However, experiments showed that using a reset at
each iteration, that is $\Ksteps=0$, is the best choice.

\paragraph{Using variable steering range in RRT.}
In RRT, using a fixed value $\delta$ for the steering range may not
be suitable if the clearance in $\Cfree$ varies.
This is particularly true for multi-robot planning, as a short
distance between a single pair of robots may jeopardize the sample
$\qnew$.
Sampling $\delta$ according to some distribution (\eg an exponential distribution)
provides more slack, with small values used in regions of low
clearance.  However, we did not observe any global consistent
improvement for multi-robot planning.

\paragraph{Weighting the rotational parts in $r$ parameter.} 
The distance metric used requires weighting the rotational part --
\eqref{eq:distSE3linear}, with large values of $r$ penalizing
rotations. But varying $r$ did not yield significant improvements,
so that we stuck to $r=1$.

\paragraph{Changing rotational parts of nodes in RRT.} 
As discussed above (Fig.~\ref{fig:rrt-rotation-failure}), RRT fails
for small $\delta$ because the rotational parts are not changing
enough.  Patching the steered points with random rotations
did not help either.

\section{Outlook}

\paragraph{A new tier of sampling-based planners.}
Rapidly-exploring random trees (RRT)  and Hit-and-Run (HAR) algorithms
are powerful techniques to explore high dimensional spaces.
The former has been introduced in motion planning and uses the
so-called Voronoi bias to diffuse in unseen regions of the
configuration space.
The latter comes from optimization and numerical mathematics.  It
defines a Markov chain that can be used to sample the uniform
measure.

Our interest in combining these techniques comes from multi-robot
planning, a challenging task due to the need to handle jointly fixed
obstacles and moving obstacles -- the robots themselves.  For such
cases, a large number of robots entails a high dimensional space, and
RRT faces difficulties in cluttered environments, since choosing the
steering distance faces conflicting constraints: a too small value
makes the robots barely advance, and too large values increase the
collision rate.
We show that a mixed sampling strategy combining RRT and HAR
(resulting in HARG, HARL or HARF algorithms) gets the best of both
worlds, the former favoring the exploration of unseen regions and the
latter enabling search in any possible direction, avoiding
mass-concentration phenomena.
Importantly, our algorithms are purely sampling-based and require
trivial changes in existing sampling-based planners -- they merely
require primitives to sample random points and random directions.

Tests on different scenarios show that RRT variants struggle with an
increased dimensionality.  While they are very good with a small number
of robots, they are not able to produce many nodes with a large cohort of
robots.  In contrast, HARL and HARG algorithms are more resilient to
the increase of dimensionality.

\paragraph{Taming down configuration space.}
Recent work has shown that complex planning problems could also be
tackled by stratifying the configuration space prior to using {\em
 classical} planners on the strata.  While highly effective, this
approach is relatively complex as it requires designing the
stratification upon determining an appropriate number of levels.
For multi-robot problems, a simpler (and apparently new) way to obtain
this behavior is the {\em rolling start mode} associated with a probability
$p_r<1$, which amounts to using randomization to decide which robots
move.

In fact, taming down a configuration space and developing novel planners,
as we do, are complementary approaches since the latter can be used
on strata of the original space.
More powerful planners also bear a clear advantage,
namely their simplicity and genericity.

\paragraph{Practical matters.}
As a practical guideline, RRT remains an excellent and well-understood
default to plan in low dimensional spaces.
As the number of robots grows and the space possibly becomes cluttered, the
fixed-step steering of RRT degrades and it increasingly needs a
reduced moving fraction $p_r$ to make progress; the HAR-based samplers,
and in particular HARF, then become markedly more effective. On
the tightest instances, no sampler copes with $p_r=1$, and tuning
$p_r$ benefits every algorithm. 
In short, we recommend using RRT for low dimensional problems, HAR
(and notably HARF, possibly with tuned $p_r$) for the many-robots case
motivating this work.

\paragraph{Further work.}
Our work opens interesting research avenues.
On the theoretical side, understanding the properties of our hybrid
algorithm would be extremely interesting, and would stress the
critical hyper-parameters as a function of the geometry of the free
space. Such questions appear challenging though, to capture the interplay
between RRT and HAR.
Application-wise, our planners will prove beneficial to address
the multi-robot planning problem for tens of robots in cluttered
environments. One particularly compelling application is the
elucidation of efflux mechanisms implemented by several cell types,
with applications to the design of antibiotics and anti-cancer drugs.



\bigskip
\noindent{\bf Acknowledgments.}  Pierre Alliez is acknowledged for his help to process
the flying drones geometric models using AlphaWrap.

This work has been supported by the French government through the 
3IA C\^ote d’Azur Investments (ANR-23-IACL-0001),
and by a grant from the French government, managed by the French National Research Agency (ANR) under the 
France 2030 program operated by the Inria Quadrant Programme (ANR-24-RRII-0002).

Akshat Jah is acknowledged for early work on this project.

\bibliographystyle{unsrt}




\clearpage

\appendix
\beginSI
\section{RRT and HAR: illustrations}

Figures \ref{fig:comparison-algorithms} and
\ref{fig:comparison-algorithms-har} depict the behavior of RRT and
HARL algorithms on the following toy system, which also illustrates
the genericity of the algorithms since a different configuration space is used:
\begin{itemize}
\item Consider a cradle model with $N$ molecules;
\item For each molecule, consider the geodesic path between the initial and final positions defined as a parameterized curve on $[0, 1]$ 
computed by Eq. \eqref{eq:segmentlinear}.
\item The overall configuration space is $\calC = [0, 1]^N$. Note that this corresponds to moving $N$ robots independently;
\item We use the algorithms to solve the associated motion planning problem
with  $\qstart= \latrans{ (0,\dots,0)} $ and $\qgoal=\latrans{(1,\dots,1)}$.
\item To illustrate the paths found by the various algorithms, we project nodes, edges and the solution onto the motion space of the first two robots.
\end{itemize}

\begin{figure}[htbp]
\begin{center}
\begin{tabular}{cc}
\includegraphics[width = .4\linewidth]{\wdir/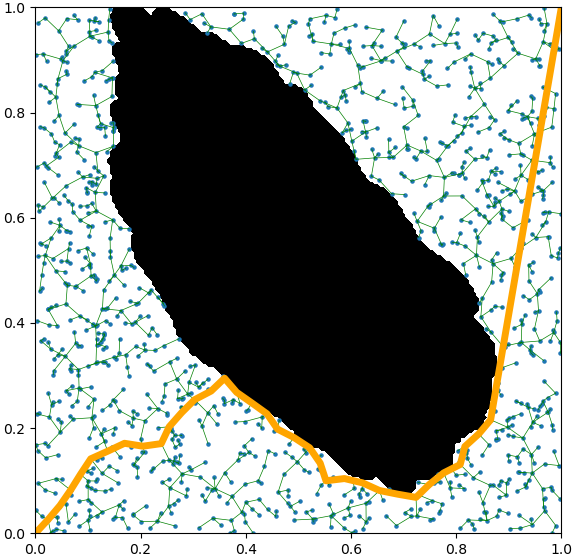} &
\includegraphics[width = .4\linewidth]{\wdir/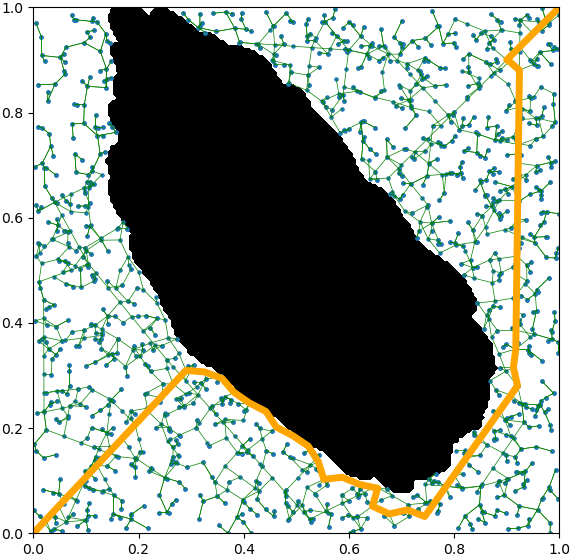} \\
\rrt & \rrtconnect\\
\includegraphics[width = .4\linewidth]{\wdir/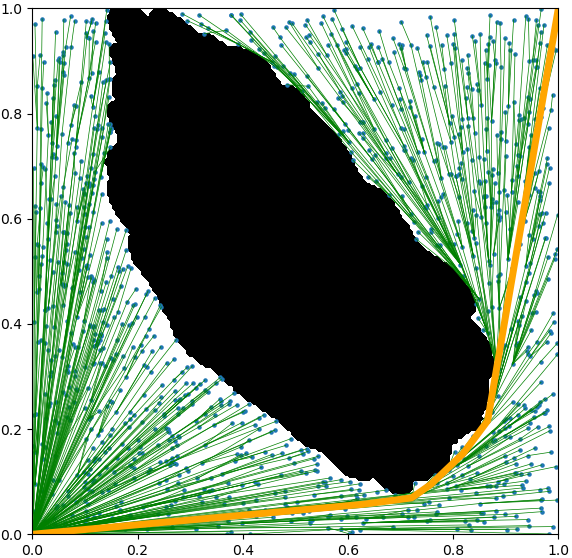} &
\includegraphics[width = .4\linewidth]{\wdir/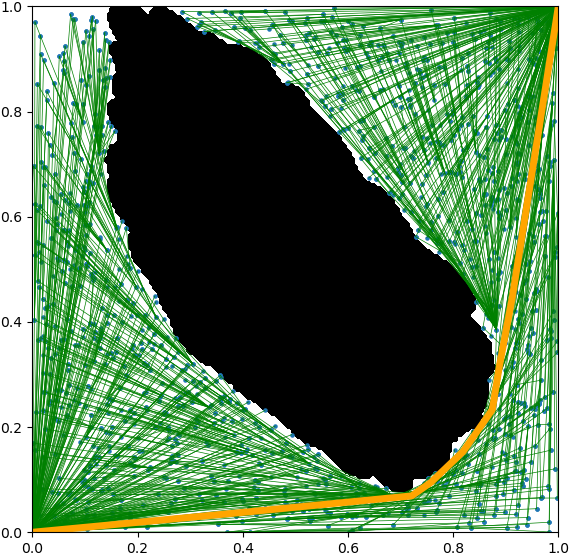} \\
\rrtstar & \rrtstarconnect\\
\includegraphics[width = .4\linewidth]{\wdir/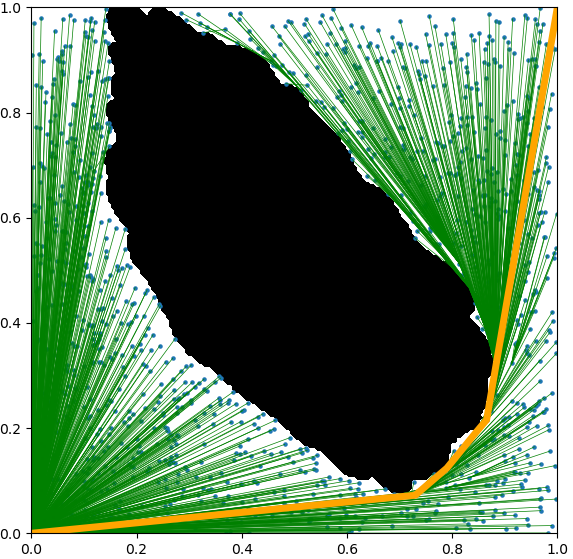} &
\includegraphics[width = .4\linewidth]{\wdir/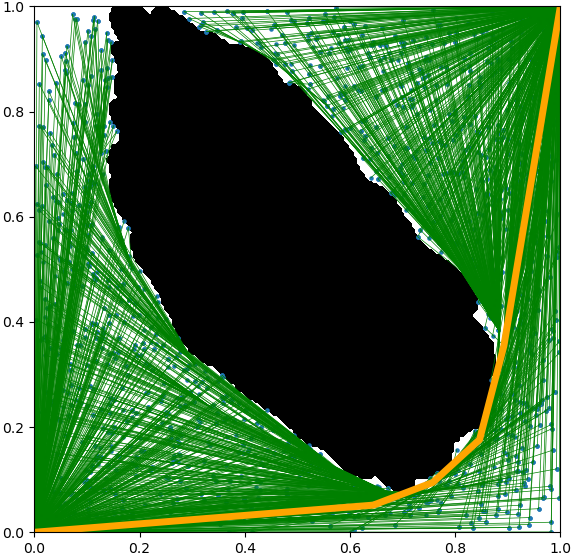} \\
\quickrrtstar & \quickrrtstarconnect
\end{tabular}
\end{center}
\caption{{\bf Molecular motions of two molecules in a cradle.}
 The collision zone is depicted in black. We show the behavior of the
 6 RRT algorithms with an exagerately reduced steering value of 2\%
 and after 1500 iterations. The best path is shown in orange.}
\label{fig:comparison-algorithms}
\end{figure}

\begin{figure}[htbp]
\begin{center}
\begin{tabular}{cc}
\includegraphics[width = .4\linewidth]{\wdir/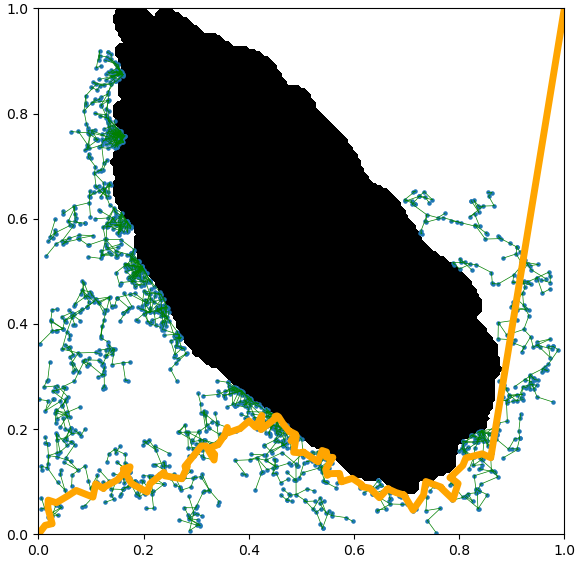}&
\includegraphics[width = .4\linewidth]{\wdir/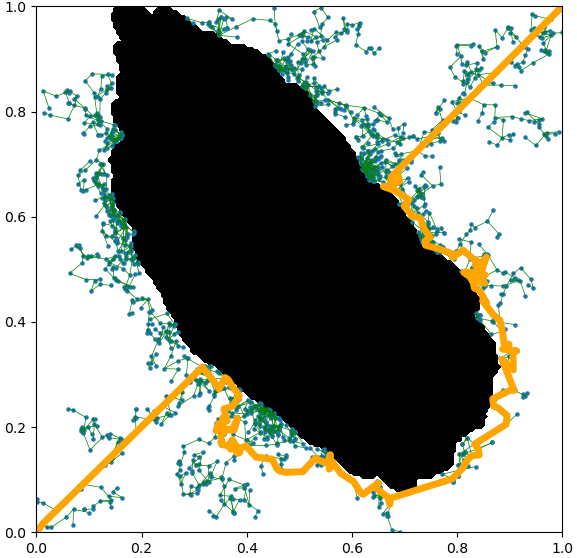} \\
\har & \harC\\
\includegraphics[width = .4\linewidth]{\wdir/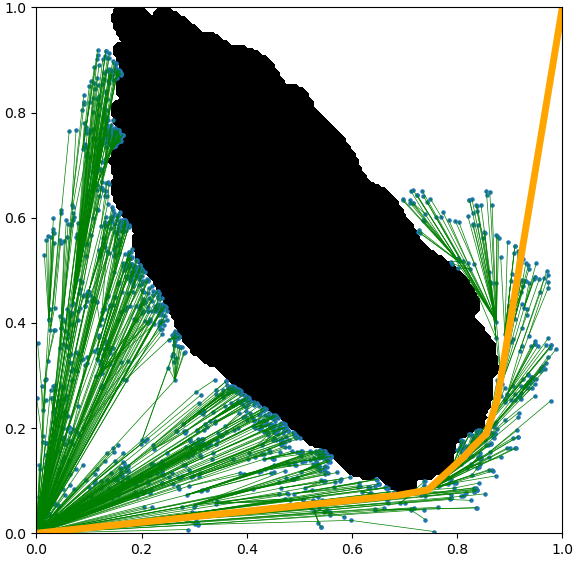}&
\includegraphics[width = .4\linewidth]{\wdir/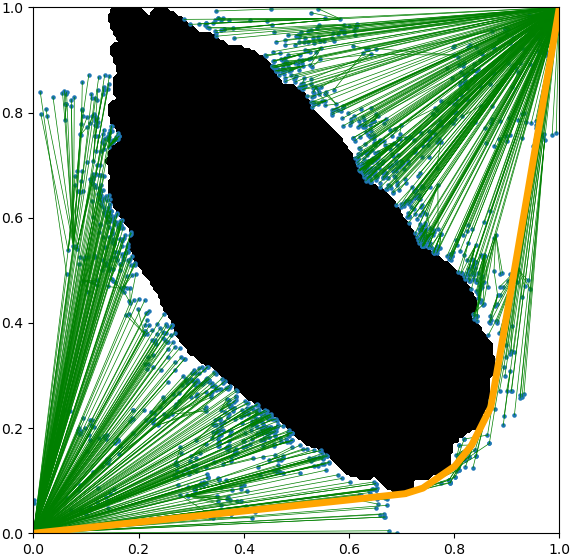} \\
\harS & \harSC\\
\includegraphics[width = .4\linewidth]{\wdir/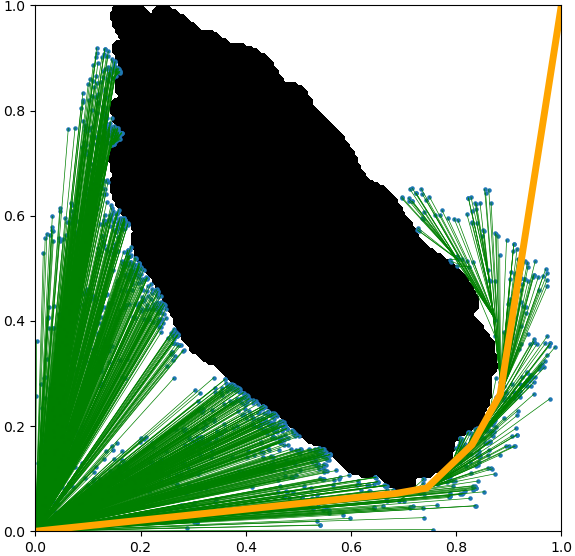}&
\includegraphics[width = .4\linewidth]{\wdir/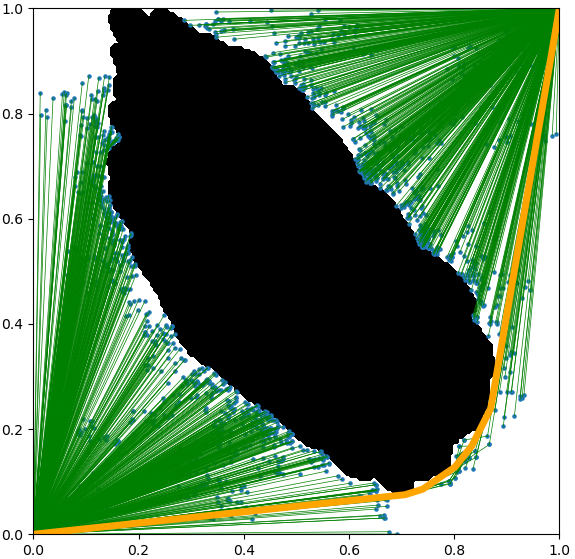} \\
\harQ  & \harQC
\end{tabular}
\end{center}
\caption{As Figure \ref{fig:comparison-algorithms} \textit{mutatis mutandis} for HAR algorithms.}
\label{fig:comparison-algorithms-har}
\end{figure}

\clearpage
\FloatBarrier

\section{Models}

\begin{figure}[htb]
\centerline{\includegraphics[width=\coeffonefigCOLW\columnwidth]{\wdir/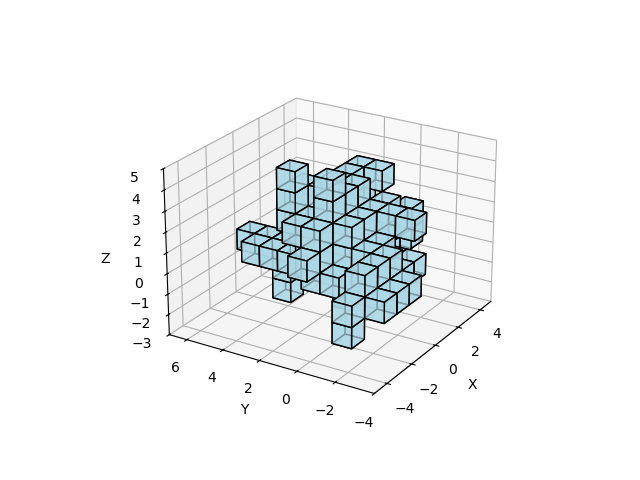}}
\caption{{\bf A random domain defined as a connected region of unit cubes -- a 3D polycube.}
Replacing the cubes by balls yields a fictitious random molecule/domain.
See also \url{https://en.wikipedia.org/wiki/Polycube}} 
\label{fig:random-domain} 
\end{figure}

\section{Sampling coupled translations}
\label{sec:norm_3Nsampling}
\label{sec:sampling-coupled-transformations}

Let $T_1, \ldots, T_N \in \mathbb R^3$ such that $(T_1, \ldots T_N)$
is sampled on $\Sd{3N-1}$. 
The following proves lemma \ref{lem:R}.

\begin{proposition}\label{prop:T_normal_law}
For all $i$, let $G_i \sim \calN(0, I_3)$ and $G = (G_1, \ldots, G_N)$. Then $T_i$ and $G_i / \|G\|$ have the same law.
\end{proposition}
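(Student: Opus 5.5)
The plan is to use the rotational invariance of the standard Gaussian in $\Rd{3N}$. Here $(T_1,\dots,T_N)$ is uniform on the unit sphere $\Sd{3N-1}$, and the scaled case $\rho\,\Sd{3N-1}$ follows by multiplying by $\rho$. The concatenated vector $G=(G_1,\dots,G_N)$ has i.i.d.\ coordinates, so $G\sim\calN(0,\identitymatrix{3N})$, whose density is proportional to $\exp(-\vvnorm{x}{2}/2)$. This density depends only on $\vvnorm{x}$, so $OG$ has the same law as $G$ for every orthogonal matrix $O\in\On{3N}$. Since $G\neq 0$ almost surely, $U:=G/\vvnorm{G}$ is well defined almost surely. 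It also satisfies $OU = OG/\vvnorm{OG}$, which has the same law as $U$. Hence the law of $U$ is a probability measure on $\Sd{3N-1}$ invariant under $\On{3N}$.

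The second step is to identify this measure with the uniform one. The normalized surface measure $\calU(\Sd{3N-1})$ is, up to scaling, the unique $\On{3N}$-invariant Borel probability measure on the sphere, and I will quote this as a classical fact. Alternatively, one can argue directly in polar coordinates: the Gaussian density factorizes as a function of $r=\vvnorm{x}$ times the surface element, so the radial part $\vvnorm{G}$ and the angular part $G/\vvnorm{G}$ are independent, and the angular part has density proportional to surface measure. Either way, $U\sim\calU(\Sd{3N-1})$, so $(T_1,\dots,T_N)$ and $U$ have the same law.

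Finally, the map $(x_1,\dots,x_N)\mapsto x_i$ projecting onto the $i$-th block of three coordinates is measurable. Applying it to two random vectors with the same law gives components with the same law, so $T_i \overset{d}{=} U_i = G_i/\vvnorm{G}$, which is the claim.

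The only step requiring care is the uniqueness of the invariant measure. I would cite it, or give the polar-coordinates computation, rather than reprove Haar uniqueness. The radial independence obtained this way is also what the subsequent proof of Lemma~\ref{lem:R} needs. There one writes
\[
\expX{\vvnorm{T_i}} = \rho\,\expX{\vvnorm{G_i}/\vvnorm{G}}.
\]
By the law of large numbers, $\vvnorm{G}/\sqrt{3N}\to 1$, and $\expX{\vvnorm{G_i}}=\sqrt{8/\pi}$ (the mean of a $\chi_3$ variable). Then $\rho\sqrt{8/\pi}/\sqrt{3N}=\delta$ for $\rho=\sqrt{3\pi/8}\sqrt{N}\delta$.
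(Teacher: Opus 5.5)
Your proof is correct and rests on the same fact as the paper's proof, which simply appeals to the sampler drawing points on the sphere by normalizing a standard Gaussian in $\mathbb{R}^{3N}$; you supply the justification the paper leaves implicit (rotational invariance of $G$ plus uniqueness of the $O(3N)$-invariant probability on $S^{3N-1}$, or the polar-coordinates factorization) and then project onto the $i$-th block of coordinates. In your side remark on Lemma~\ref{lem:R}, the almost sure convergence $\|G\|/\sqrt{3N}\to 1$ does not by itself justify passing to expectations, but the radial--angular independence you mention gives the exact identity $E\bigl[\|G_i\|/\|G\|\bigr]=E\|G_i\|/E\|G\|$, which recovers the paper's Gamma-ratio formula.
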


\begin{proof}
This is direct from the previous section as we use gaussian distributions to sample on the sphere
\end{proof}

\begin{proposition}\label{prop:normalkhibeta}

The following relationships hold between the Normal, $\chi^2$  and Beta distributions:
\ifTWOCOLUMNS
\[
\|G_i\|^2 \sim \chi^2(3)\qquad \sum_{j\neq i} \|G_j\|^2 \sim \chi^2(3N-3)
\]
\[
\textrm{ therefore }\dfrac{\|G_i\|^2}{\|G\|^2} \sim \textrm{Beta}(1.5, 1.5N-1.5)
\]
\else
\[
\|G_i\|^2 \sim \chi^2(3)\qquad \sum_{j\neq i} \|G_j\|^2 \sim \chi^2(3N-3) \textrm{ therefore }\dfrac{\|G_i\|^2}{\|G\|^2} \sim \textrm{Beta}(1.5, 1.5N-1.5)
\]
\fi
\end{proposition}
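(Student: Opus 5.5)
The plan proves the three claims in order: the two $\chi^2$ laws first, then the Beta law for the ratio.

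\emph{The two $\chi^2$ laws.} Write $G_i=(G_{i,1},G_{i,2},G_{i,3})$. The coordinates are independent standard normals, since $G_i\sim\calN(0,I_3)$. By definition of the $\chi^2$ law, $\|G_i\|^2=\sum_{k=1}^3 G_{i,k}^2\sim\chi^2(3)$. The vectors $G_j$ for $j\neq i$ are also independent of one another and of $G_i$. Hence $\sum_{j\neq i}\|G_j\|^2$ is a sum of $3(N-1)$ squared independent standard normals, so it follows $\chi^2(3N-3)$. It is also independent of $\|G_i\|^2$, because the two sums involve disjoint sets of independent coordinates.

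\emph{The Beta law.} Set $X=\|G_i\|^2$ and $Y=\sum_{j\neq i}\|G_j\|^2$, so that $\|G\|^2=X+Y$. A $\chi^2(k)$ law is the Gamma law with shape $k/2$ and scale $2$. Thus $X\sim\Gamma(3/2,2)$ and $Y\sim\Gamma((3N-3)/2,2)$, and the two are independent.

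The ratio follows from the classical Gamma–Beta relation. If $X\sim\Gamma(a,\theta)$ and $Y\sim\Gamma(b,\theta)$ are independent with a common scale, then $X/(X+Y)\sim\mathrm{Beta}(a,b)$. I will prove this directly with the change of variables $(x,y)\mapsto(u,s)=(x/(x+y),\,x+y)$. Its inverse is $x=us$, $y=(1-u)s$, with Jacobian $s$. The joint density $\propto x^{a-1}y^{b-1}e^{-(x+y)/\theta}$ then becomes
\[
\propto u^{a-1}(1-u)^{b-1}\; s^{a+b-1}e^{-s/\theta}.
\]
This factorizes in $u$ and $s$. Integrating out $s$ leaves the $\mathrm{Beta}(a,b)$ density in $u$. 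With $a=1.5$ and $b=1.5N-1.5$, this gives the stated law for $\|G_i\|^2/\|G\|^2$.

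\emph{Main obstacle.} Nothing here is deep. The one point that needs care is the independence of $X$ and $Y$ with a common scale, which the factorization requires. This comes from the independence of the Gaussian coordinates. I will state explicitly that the Beta law concerns the ratio $X/(X+Y)$ and not $X/Y$, which would instead follow a beta-prime law.

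\emph{Link to Lemma \ref{lem:R}.} By Proposition~\ref{prop:T_normal_law}, the same law holds for $\|T_i\|^2/\rho^2$ when the sample lies on the sphere of radius $\rho$. This is the form used downstream to compute $\expX{\vvnorm{T_i}}$ for Lemma~\ref{lem:R}.
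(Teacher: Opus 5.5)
Your proof is correct and follows the paper's argument: you obtain the two $\chi^2$ laws from the definition, get independence from the disjoint Gaussian coordinates, and apply the classical fact that $X/(X+Y)\sim\mathrm{Beta}(a/2,b/2)$ for independent $X\sim\chi^2(a)$ and $Y\sim\chi^2(b)$. The only difference is that you prove this last fact with the change of variables $(x,y)\mapsto(x/(x+y),\,x+y)$, whereas the paper just cites it; both versions implicitly need $N\ge 2$ so that the second Beta parameter is positive.
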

\begin{proof}
The first two come from the definition of $\chi^2$ and the last comes from the fact that $X\sim \chi^2(a), Y\sim \chi^2(b)$ and $X$ and $Y$ independent implies $\dfrac{X}{X+Y}\sim\textrm{Beta}(a/2, b/2)$.
\end{proof}

\begin{proposition}\label{prop:expenctancyT} Using the density function of Beta law we have:
\[
E(\|T_i\|) = \dfrac{2}{\sqrt\pi}\dfrac{\Gamma(1.5N)}{\Gamma(1.5N + 0.5)}
\]
\end{proposition}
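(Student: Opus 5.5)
The plan is to reduce $E(\|T_i\|)$ to a fractional moment of a Beta variable and compute that moment with the Beta integral. By Proposition~\ref{prop:T_normal_law}, $T_i$ has the same law as $G_i/\|G\|$, so $\|T_i\| = \sqrt{X}$ in distribution, where $X := \|G_i\|^2/\|G\|^2$. By Proposition~\ref{prop:normalkhibeta}, $X \sim \textrm{Beta}(a,b)$ with $a = 3/2$ and $b = 3(N-1)/2$. It therefore suffices to compute $E(X^{1/2})$.

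For $N \geq 2$ (so that $b>0$), write the moment against the Beta density:
\[
E(X^{1/2}) = \frac{1}{B(a,b)}\int_0^1 x^{a+1/2-1}(1-x)^{b-1}\,dx = \frac{B(a+1/2,b)}{B(a,b)}.
\]
Next, expand both Beta functions via $B(p,q)=\Gamma(p)\Gamma(q)/\Gamma(p+q)$. The factor $\Gamma(b)$ cancels, leaving
\[
E(X^{1/2}) = \frac{\Gamma(a+1/2)}{\Gamma(a)}\cdot\frac{\Gamma(a+b)}{\Gamma(a+b+1/2)}.
\]
Now substitute $a=3/2$, so that $a+b = 3N/2$. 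Using $\Gamma(2)=1$ and $\Gamma(3/2)=\sqrt{\pi}/2$, the first factor equals $2/\sqrt{\pi}$. This gives exactly $\frac{2}{\sqrt\pi}\frac{\Gamma(1.5N)}{\Gamma(1.5N+0.5)}$.

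The only delicate point is the degenerate case $N=1$. There $b=0$, the Beta law is not defined, and the integral representation above does not apply. I would treat it directly: $T_1$ lies on the unit sphere $\Sd{2}$, so $\|T_1\|=1$. The claimed formula also gives $\frac{2}{\sqrt\pi}\frac{\Gamma(3/2)}{\Gamma(2)} = 1$, so the statement holds for all $N\geq1$.

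No step is a real obstacle, since each is a standard identity. The one thing to check is that Proposition~\ref{prop:normalkhibeta} is applied with the correct parameters, i.e.\ that the shape parameters are halves of the $\chi^2$ degrees of freedom, $3/2$ and $3(N-1)/2$.

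Finally, to connect with Lemma~\ref{lem:R}, note that $\Gamma(x)/\Gamma(x+1/2)\sim x^{-1/2}$ as $x \to \infty$. Hence $E(\|T_i\|)\sim \frac{2}{\sqrt{\pi}}\sqrt{\frac{2}{3N}}$ on the unit sphere. Scaling by $\rho=\sqrt{3\pi/8}\sqrt N\,\delta$ then yields a limit of $\delta$.
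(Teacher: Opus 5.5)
Your proof is correct and follows the paper's argument essentially verbatim. Like the paper, you identify $\|T_i\|^2$ as a $\textrm{Beta}(1.5, 1.5N-1.5)$ variable, write $E(\|T_i\|)$ as $B(2,1.5N-1.5)/B(1.5,1.5N-1.5)$ via the Beta integral, and expand in Gamma functions. Your separate treatment of $N=1$, where the Beta law degenerates, slightly tightens the paper's proof: the paper tacitly assumes $N\geq 2$ and only checks $N=1$ afterwards in a remark.
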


\begin{proof}
From Proposition \ref{prop:normalkhibeta} and \ref{prop:T_normal_law}, we have that $\|T_i\|^2\sim \textrm{Beta}(\alpha, \beta)$ with $\alpha=1.5$ and $\beta=1.5N-1.5$. Therefore the expectancy of $\|T_i\|$ is a simple computation:
\ifTWOCOLUMNS
\[
\begin{array}{rl}
E(\|T_i\|) &= \displaystyle\frac{1}{B(\alpha, \beta)} \int_0^1 \sqrt{x} \times x^{\alpha-1} (1-x)^{\beta-1} \,\textrm dx \\ 
&= \displaystyle\dfrac{1}{B(1.5, 1.5N-1.5)} \int_{0}^{1} x (1-x)^{1.5N-2.5} \,\textrm dx \\
&= \displaystyle\dfrac{B(2, 1.5N-1.5)}{B(1.5, 1.5N-1.5)} \\
&= \dfrac{\Gamma(2)\Gamma(1.5N-1.5)}{\Gamma(1.5N+0.5)} \dfrac{\Gamma(1.5N)}{\Gamma(1.5N-1.5)\Gamma(1.5)} \\
&= \dfrac{2\Gamma(1.5N)}{\sqrt{\pi}\Gamma(1.5N+0.5)}
\end{array}
\]
\else
\[
\begin{array}{rl}
E(\|T_i\|) = E(\sqrt{\|T_i\|^2}) &= \displaystyle\frac{1}{B(\alpha, \beta)} \int_0^1 \sqrt{x} \times x^{\alpha-1} (1-x)^{\beta-1} \,\textrm dx \\ 
&= \displaystyle\dfrac{1}{B(1.5, 1.5N-1.5)} \int_{0}^{1} x (1-x)^{1.5N-2.5} \,\textrm dx \\
&= \displaystyle\dfrac{B(2, 1.5N-1.5)}{B(1.5, 1.5N-1.5)} \\
&= \dfrac{\Gamma(2)\Gamma(1.5N-1.5)}{\Gamma(1.5N+0.5)} \dfrac{\Gamma(1.5N)}{\Gamma(1.5N-1.5)\Gamma(1.5)} \\
&= \dfrac{2\Gamma(1.5N)}{\sqrt{\pi}\Gamma(1.5N+0.5)}
\end{array}
\]
\fi
\end{proof}

\begin{remark} Some values can be computed with some values of $N$:
\begin{itemize}
\item \underline{For $N=1$:} As $\Gamma(1.5) = \frac{\sqrt\pi}2$ and $\Gamma(2) = 1$, we have $E(\|T_1\|) = 1$ which is logical as $T$ is just $T_1$ thus $T_1$ is sampled on the unit sphere of $\mathbb R^3$.
\item \underline{For $N=2$:} we have $\Gamma(3) = 2$ and $\Gamma(3.5) = \frac{15}8\sqrt{\pi}$ therefore $E(\|T_i\|) = \dfrac{32}{15\pi} \approx 0.68$
\end{itemize}
\end{remark}

\begin{proposition} \label{prop:equivalentN}
For $N\to\infty$, $E(\|T_i\|) \sim \sqrt{\dfrac{8}{3\pi N}}$
\end{proposition}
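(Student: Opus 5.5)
We start from the closed form of Proposition~\ref{prop:expenctancyT},
\[
E(\|T_i\|) = \frac{2}{\sqrt\pi}\,\frac{\Gamma(1.5N)}{\Gamma(1.5N+0.5)},
\]
so the statement reduces to the asymptotics of a ratio of Gamma functions whose arguments differ by $1/2$. We use the classical Wendel/Stirling-type equivalence $\Gamma(x+a)/\Gamma(x+b)\sim x^{a-b}$ as $x\to\infty$ for fixed $a,b$. Applied with $x=1.5N$, $a=0$, $b=1/2$, it gives $\Gamma(1.5N)/\Gamma(1.5N+0.5)\sim (1.5N)^{-1/2}$.

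Substituting back,
\[
E(\|T_i\|)\sim \frac{2}{\sqrt\pi}\,\frac{1}{\sqrt{1.5N}} = \sqrt{\frac{4}{1.5\,\pi N}} = \sqrt{\frac{8}{3\pi N}},
\]
which is the claim. For a self-contained argument, the Gamma-ratio equivalence can be derived from Stirling's formula $\Gamma(x)\sim\sqrt{2\pi}\,x^{x-1/2}e^{-x}$. Writing
\[
\frac{\Gamma(x)}{\Gamma(x+1/2)} \sim \frac{x^{x-1/2}e^{-x}}{(x+1/2)^{x}e^{-x-1/2}} = x^{-1/2}\,e^{1/2}\Bigl(1+\frac{1}{2x}\Bigr)^{-x},
\]
and using $(1+\tfrac{1}{2x})^{-x}\to e^{-1/2}$, the exponential factors cancel and the ratio is $\sim x^{-1/2}$.

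The only delicate step is this bookkeeping of the $(1+\tfrac{1}{2x})^{x}$ factor, which must cancel the $e^{1/2}$ term. As a sanity check, the remark's value $E(\|T_i\|)\approx 0.68$ at $N=2$ compares to $\sqrt{8/(6\pi)}\approx 0.65$, consistent with the equivalence.

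Lemma~\ref{lem:R} then follows by scaling: sampling on $\rho\,\Sd{3N-1}$ multiplies each $\|T_i\|$ by $\rho$. With $\rho=\sqrt{3\pi/8}\sqrt N\,\delta$, the product $\rho\sqrt{8/(3\pi N)}=\delta$, so $E(\|T_i\|)\to\delta$.
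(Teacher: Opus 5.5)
Your proposal is correct and follows the paper's own argument: the closed form of Proposition~\ref{prop:expenctancyT} combined with the equivalence $\Gamma(x+a)/\Gamma(x+b)\sim x^{a-b}$ at $x=1.5N$, $a=0$, $b=1/2$. Your Stirling-based derivation of that Gamma-ratio equivalence and your $N=2$ sanity check are both correct, and they make explicit a step the paper only cites.
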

\begin{proof}
It follows from Proposition \ref{prop:expenctancyT} and from the fact that $\dfrac{\Gamma(x+a)}{\Gamma(x+b)}\sim_\infty x^{a-b}$.
\end{proof}
Lemma \ref{lem:R} follows by sampling $T$ on $\rho\,\Sd{3N-1}$ with $\rho = \sqrt{\dfrac{3\pi N}{8}} \delta$.

\section{Detailed results}

\figBegins
\headerCubicles
\centerline{%
\begin{minipage}{0.24\linewidth}
\includegraphics[width=\linewidth]{\wdir/omp-cubicles-planning-problem.png}
\end{minipage}
\begin{minipage}{0.75\linewidth}
  \small
\begin{tabular}{lccccc}
\toprule
Algorithm & $p_r$ & Time first iter.\ [s] & Path length & {\setlength\extrarowheight{0pt}\begin{tabular}[c]{@{}c@{}}Successful steering\\per iteration\end{tabular}} & Nodes \\
\midrule
\rrtC & $1$ & $\mathbf{0.846 \pm 0.56}$ & $1951.6 \pm 66.4$ & $0.062$ & $4.0 \times 10^{4}$ \\
\harfC & $1$ & $\underline{3.55 \pm 2.4}$ & $1917.3 \pm 66.6$ & $0.063$ & $3.3 \times 10^{4}$ \\
\harlC & $1$ & $3.59 \pm 1.5$ & $1936.7 \pm 60.6$ & $0.206$ & $6.7 \times 10^{4}$ \\
\harC & $1$ & $7.65 \pm 2.7$ & $1955.3 \pm 79.0$ & $0.817$ & $1.0 \times 10^{5}$ \\
\midrule
\rrtQC & $1$ & $4.93 \pm 3.1$ & $1711.3 \pm 32.9$ & $0.057$ & $4.8 \times 10^{3}$ \\
\harfQC & $1$ & $16.8 \pm 7.1$ & $\mathbf{1696.2 \pm 39.9}$ & $0.068$ & $5.4 \times 10^{3}$ \\
\harlQC & $1$ & $20.0 \pm 7.1$ & $1747.7 \pm 53.5$ & $0.213$ & $6.8 \times 10^{3}$ \\
\harQC(94\%) & $1$ & $37.7 \pm 12$ & $1837.9 \pm 67.9$ & $0.734$ & $9.6 \times 10^{3}$ \\
\bottomrule
\end{tabular}
\end{minipage}}
\headerTDSmall
\centerline{%
\begin{minipage}{0.24\linewidth}
\includegraphics[width=\linewidth]{\wdir/10r1o_mid.png}
\end{minipage}
\begin{minipage}{0.75\linewidth}
  \small
\begin{tabular}{lccccc}
\toprule
Algorithm & $p_r$ & Time first iter.\ [s] & Path length & {\setlength\extrarowheight{0pt}\begin{tabular}[c]{@{}c@{}}Successful steering\\per iteration\end{tabular}} & Nodes \\
\midrule
\harfC & $1$ & $\mathbf{0.0169 \pm 0.015}$ & $49.1 \pm 3.8$ & $0.119$ & $1.5 \times 10^{5}$ \\
\harC & $0.9$ & $\underline{0.0212 \pm 0.025}$ & $47.4 \pm 3.2$ & $0.871$ & $4.3 \times 10^{5}$ \\
\harC & $1$ & $0.0238 \pm 0.051$ & $46.4 \pm 1.5$ & $0.874$ & $4.3 \times 10^{5}$ \\
\rrtC & $0.3$ & $0.0288 \pm 0.021$ & $51.3 \pm 4.2$ & $0.304$ & $2.9 \times 10^{5}$ \\
\harlC & $1$ & $0.0401 \pm 0.041$ & $48.0 \pm 2.7$ & $0.436$ & $2.9 \times 10^{5}$ \\
\rrtC & $1$ & $0.510 \pm 0.31$ & $56.0 \pm 6.0$ & $0.080$ & $1.6 \times 10^{5}$ \\
\midrule
\harfQC & $1$ & $0.0407 \pm 0.051$ & $\mathbf{45.6 \pm 0.1}$ & $0.006$ & $4.0 \times 10^{3}$ \\
\rrtQC & $0.3$ & $0.0782 \pm 0.061$ & $46.3 \pm 2.5$ & $0.449$ & $1.1 \times 10^{5}$ \\
\harQC & $1$ & $0.0794 \pm 0.12$ & $\mathbf{45.6 \pm 0.0}$ & $0.683$ & $1.2 \times 10^{5}$ \\
\harQC & $0.9$ & $0.0825 \pm 0.13$ & $\mathbf{45.6 \pm 0.0}$ & $0.689$ & $1.2 \times 10^{5}$ \\
\harlQC & $1$ & $0.135 \pm 0.13$ & $\mathbf{45.6 \pm 0.0}$ & $0.165$ & $7.1 \times 10^{4}$ \\
\rrtQC & $1$ & $0.646 \pm 0.38$ & $51.5 \pm 5.6$ & $0.079$ & $8.0 \times 10^{4}$ \\
\bottomrule
\end{tabular}
\end{minipage}}
\headerTDLarge
\centerline{%
\begin{minipage}{0.24\linewidth}
\includegraphics[width=\linewidth]{\wdir/10r1o.png}
\end{minipage}
\begin{minipage}{0.75\linewidth}
  \small
\begin{tabular}{lccccc}
\toprule
Algorithm & $p_r$ & Time first iter.\ [s] & Path length & {\setlength\extrarowheight{0pt}\begin{tabular}[c]{@{}c@{}}Successful steering\\per iteration\end{tabular}} & Nodes \\
\midrule
\harfC & $1$ & $\mathbf{0.400 \pm 0.36}$ & $64.4 \pm 7.4$ & $0.109$ & $1.2 \times 10^{5}$ \\
\rrtC & $0.3$ & $\underline{0.487 \pm 0.42}$ & $66.3 \pm 8.1$ & $0.137$ & $1.5 \times 10^{5}$ \\
\harC & $1$ & $0.813 \pm 0.88$ & $\mathbf{61.8 \pm 6.9}$ & $0.819$ & $3.5 \times 10^{5}$ \\
\harlC & $1$ & $2.08 \pm 1.9$ & $70.8 \pm 10.5$ & $0.392$ & $1.6 \times 10^{5}$ \\
\rrtC(48\%) & $1$ & $42.8 \pm 11$ & $86.2 \pm 19.2$ & $0.000$ & $2.7 \times 10^{3}$ \\
\midrule
\harfQC & $1$ & $0.662 \pm 0.54$ & $\mathbf{61.1 \pm 6.8}$ & $0.105$ & $7.7 \times 10^{4}$ \\
\rrtQC & $0.3$ & $1.11 \pm 0.99$ & $65.3 \pm 8.2$ & $0.129$ & $8.1 \times 10^{4}$ \\
\harQC & $1$ & $1.92 \pm 1.7$ & $\mathbf{61.3 \pm 6.4}$ & $0.791$ & $1.9 \times 10^{5}$ \\
\harlQC & $1$ & $4.48 \pm 3.6$ & $68.3 \pm 7.9$ & $0.382$ & $9.7 \times 10^{4}$ \\
\rrtQC(48\%) & $1$ & $42.7 \pm 9.4$ & $77.2 \pm 16.9$ & $0.000$ & $2.5 \times 10^{3}$ \\
\bottomrule
\end{tabular}
\end{minipage}}
\caption{
  {\bf Table results of contenders on first three models:} Cubicles, \twoDimModel{small-obstacle} and \twoDimModel{big-obstacle}. Results averaged
  over 50 runs. The results for $p_r=1$ and the optimal $p_r$ (if different from 1) are presented and are
  sorted by time to get the first path.
Each table is split into a \emph{Connect} block and a \emph{Quick-Connect} block.
Bold marks the best entry of a block, underline the second; a percentage after a
name is its success rate when below $100\%$.
}
\label{fig:piano-tables}
\figEnds

\figBegins
\headerCradleNine
\centerline{%
\begin{minipage}{0.24\linewidth}
\includegraphics[width=\linewidth]{\wdir/walls3x3.png}
\end{minipage}
\begin{minipage}{0.75\linewidth}
\small
\begin{tabular}{lccccc}
\toprule
Algorithm & $p_r$ & Time first iter.\ [s] & Path length & {\setlength\extrarowheight{0pt}\begin{tabular}[c]{@{}c@{}}Successful steering\\per iteration\end{tabular}} & Nodes \\
\midrule
\harfC & $1$ & $\mathbf{0.0366 \pm 0.020}$ & $33.1 \pm 1.7$ & $0.494$ & $6.7 \times 10^{4}$ \\
\rrtC & $1$ & $\underline{0.0495 \pm 0.030}$ & $34.9 \pm 3.8$ & $0.424$ & $6.1 \times 10^{4}$ \\
\harlC & $1$ & $0.0816 \pm 0.050$ & $33.4 \pm 1.9$ & $0.628$ & $7.3 \times 10^{4}$ \\
\harC & $1$ & $0.182 \pm 0.12$ & $32.6 \pm 1.5$ & $0.942$ & $7.2 \times 10^{4}$ \\
\midrule
\harfQC & $1$ & $0.158 \pm 0.14$ & $31.9 \pm 0.7$ & $0.169$ & $2.3 \times 10^{3}$ \\
\rrtQC & $1$ & $0.299 \pm 0.34$ & $32.3 \pm 1.5$ & $0.236$ & $7.0 \times 10^{3}$ \\
\harQC & $1$ & $0.523 \pm 0.56$ & $31.8 \pm 0.9$ & $0.890$ & $9.5 \times 10^{3}$ \\
\harlQC & $1$ & $0.691 \pm 0.66$ & $\mathbf{31.4 \pm 0.7}$ & $0.323$ & $7.0 \times 10^{3}$ \\
\bottomrule
\end{tabular}
\end{minipage}}

\headerCradleTFcoarse
\centerline{%
\begin{minipage}{0.24\linewidth}
\includegraphics[width=\linewidth]{\wdir/walls5x5.png}
\end{minipage}
\begin{minipage}{0.75\linewidth}
\small
\begin{tabular}{lccccc}
\toprule
Algorithm & $p_r$ & Time first iter.\ [s] & Path length & {\setlength\extrarowheight{0pt}\begin{tabular}[c]{@{}c@{}}Successful steering\\per iteration\end{tabular}} & Nodes \\
\midrule
\rrtC & $0.25$ & $\mathbf{0.380 \pm 0.16}$ & $68.5 \pm 2.8$ & $0.775$ & $3.0 \times 10^{4}$ \\
\harfC & $1$ & $\underline{0.400 \pm 0.13}$ & $66.7 \pm 2.2$ & $0.110$ & $1.1 \times 10^{4}$ \\
\harlC & $1$ & $0.639 \pm 0.21$ & $66.0 \pm 2.0$ & $0.384$ & $2.0 \times 10^{4}$ \\
\rrtC & $1$ & $1.13 \pm 0.45$ & $70.5 \pm 4.1$ & $0.446$ & $2.6 \times 10^{4}$ \\
\harC & $1$ & $1.88 \pm 0.86$ & $65.6 \pm 2.2$ & $0.970$ & $3.3 \times 10^{4}$ \\
\midrule
\harfQC & $1$ & $1.11 \pm 0.49$ & $65.3 \pm 1.4$ & $0.204$ & $1.9 \times 10^{3}$ \\
\rrtQC & $0.25$ & $2.86 \pm 1.5$ & $68.4 \pm 3.3$ & $0.415$ & $1.4 \times 10^{3}$ \\
\harlQC & $1$ & $4.34 \pm 2.4$ & $65.2 \pm 1.8$ & $0.309$ & $2.7 \times 10^{3}$ \\
\rrtQC & $1$ & $6.78 \pm 4.1$ & $70.5 \pm 4.1$ & $0.231$ & $1.0 \times 10^{3}$ \\
\harQC & $1$ & $8.32 \pm 4.4$ & $\mathbf{64.4 \pm 1.6}$ & $0.874$ & $4.0 \times 10^{3}$ \\
\bottomrule
\end{tabular}
\end{minipage}}

\headerCradleTFtight
\centerline{%
\begin{minipage}{0.24\linewidth}
\includegraphics[width=\linewidth]{\wdir/walls5x5-hard.png}
\end{minipage}
\begin{minipage}{0.75\linewidth}
\small
\begin{tabular}{lccccc}
\toprule
Algorithm & $p_r$ & Time first iter.\ [s] & Path length & {\setlength\extrarowheight{0pt}\begin{tabular}[c]{@{}c@{}}Successful steering\\per iteration\end{tabular}} & Nodes \\
\midrule
\harfC & $0.25$ & $\mathbf{3.43 \pm 0.85}$ & $\mathbf{55.8 \pm 4.4}$ & $0.557$ & $1.9 \times 10^{4}$ \\
\rrtC & $0.25$ & $\underline{5.17 \pm 1.6}$ & $60.9 \pm 6.4$ & $0.567$ & $2.1 \times 10^{4}$ \\
\harlC & $0.3$ & $6.99 \pm 2.2$ & $\mathbf{56.6 \pm 5.7}$ & $0.622$ & $1.8 \times 10^{4}$ \\
\harC & $0.4$ & $32.8 \pm 10$ & $58.9 \pm 5.2$ & $0.751$ & $1.8 \times 10^{4}$ \\
\harC & $1$ & $62.6 \pm 20$ & $58.3 \pm 5.4$ & $0.885$ & $1.2 \times 10^{5}$ \\
\harlC & $1$ & $107 \pm 46$ & $59.4 \pm 5.4$ & $0.214$ & $4.3 \times 10^{4}$ \\
\midrule
\harfQC & $0.25$ & $22.0 \pm 6.4$ & $\mathbf{55.2 \pm 3.7}$ & $0.333$ & $2.4 \times 10^{3}$ \\
\rrtQC & $0.25$ & $35.4 \pm 11$ & $60.8 \pm 7.1$ & $0.236$ & $2.0 \times 10^{3}$ \\
\harlQC & $0.3$ & $38.6 \pm 8.5$ & $\mathbf{55.6 \pm 4.9}$ & $0.337$ & $2.9 \times 10^{3}$ \\
\harQC & $0.4$ & $92.9 \pm 32$ & $57.5 \pm 3.9$ & $0.734$ & $1.5 \times 10^{4}$ \\
\harQC & $1$ & $134 \pm 42$ & $58.1 \pm 4.7$ & $0.631$ & $1.8 \times 10^{4}$ \\
\harlQC & $1$ & $179 \pm 55$ & $57.5 \pm 4.9$ & $0.129$ & $1.4 \times 10^{4}$ \\
\bottomrule
\end{tabular}
\end{minipage}}

\headerCradleSF
\centerline{%
\begin{minipage}{0.24\linewidth}
\includegraphics[width=\linewidth]{\wdir/walls8x8.png}
\end{minipage}
\begin{minipage}{0.75\linewidth}
\small
\begin{tabular}{lccccc}
\toprule
Algorithm & $p_r$ & Time first iter.\ [s] & Path length & {\setlength\extrarowheight{0pt}\begin{tabular}[c]{@{}c@{}}Successful steering\\per iteration\end{tabular}} & Nodes \\
\midrule
\harfC & $0.35$ & $\mathbf{17.3 \pm 8.6}$ & $\mathbf{139.8 \pm 8.0}$ & $0.082$ & $4.3 \times 10^{3}$ \\
\rrtC & $0.1$ & $\underline{41.6 \pm 16}$ & $172.5 \pm 17.4$ & $0.519$ & $1.2 \times 10^{4}$ \\
\bottomrule
\end{tabular}
\end{minipage}}
\caption{
    {\bf Table results of contenders on molecular cradle models:} 
\cradleModel{9}, \cradleModel{25}{coarse}, \cradleModel{25}{tight} and \cradleModel{64}.
Results averaged over 50 runs. The results for $p_r=1$ and the optimal $p_r$ (if different from 1) are presented and are
  sorted by time to get the first path.
Each table is split into a \emph{Connect} block and a \emph{Quick-Connect} block.
Bold marks the best entry of a block, underline the second; a percentage after a
name is its success rate when below $100\%$.
}
\label{fig:cradles-tables}
\figEnds



\clearpage
\tableofcontents
\end{document}